\newtheorem{lemma}{Lemma}[theorem]
\theoremstyle{theorem}
\newenvironment{manualtheorem}[1]{%
  \manualtheoreminner
}{\endmanualtheoreminner}
\theoremstyle{theorem}
\newenvironment{manualproposition}[1]{%
  \manualpropinner
}{\endmanualpropinner}
\newcommand{\scg}{{\mathcal{G}}}
\newcommand{\nodes}{{\mathcal{N}}}
\newcommand{\edges}{{\mathcal{E}}}
\newcommand{\stochastic}{{\mathcal{S}}}
\newcommand{\deterministic}{{\mathcal{F}}}
\newcommand{\parameters}{{\Theta}}
\newcommand{\costs}{\mathcal{C}}
\newcommand{\stochnode}{S}
\newcommand{\detnode}{F}
\newcommand{\cost}{C}
\newcommand{\func}{f}
\newcommand{\node}{N}
\newcommand{\nodealt}{M}
\newcommand{\pa}{\mathrm{pa}}
\newcommand{\sspace}{\Omega}
\newcommand{\influences}{\prec}
\newcommand{\before}{\influences}
\newcommand{\sampleset}{\mathcal{X}}
\newcommand{\tproposal}{proposal distribution}
\newcommand{\tmultipl}{gradient function}
\newcommand{\tadditive}{control variate}
\newcommand{\magicbox}{\textsf{MagicBox}}
\newcommand{\amt}{{n_\stochnode}}
\newcommand{\costresult}{\cost}
\newcommand{\vals}[1]{{\bx_{{#1}}}}
\newcommand{\sample}[1]{\sampleset_{{#1}}}
\newcommand{\weight}[1]{{w_{#1}}}
\newcommand{\Weight}[1]{{W_{#1}}}
\newcommand{\multipl}[1]{{l_{#1}}}
\newcommand{\Multipl}[1]{{L_{#1}}}
\newcommand{\additive}[1]{{a_{#1}}}
\newcommand{\Additive}[1]{{A_{#1}}}
\newcommand{\proposal}[1]{q_{#1}}
\newcommand{\proposalcond}[1]{{q_{#1}}}
\newcommand{\baseline}[1]{{b_{#1}}}
\newcommand{\SL}{SL_{\operatorname{Storch}}}
\newcommand{\fweight}[1]{\weight{#1}(\vals{#1})}
\newcommand{\fmultipl}[1]{\multipl{#1}(\vals{#1})}
\newcommand{\fadditive}[1]{\additive{#1}(\vals{{\ifthenelse{\equal{#1}{1}}{#1}{< #1}}}, \sampleset_i)}
\newcommand{\fproposal}[1]{q(\sample{#1})}
\newcommand{\fproposalcond}[1]{q(\sample{#1}|\vals{\ifthenelse{\equal{#1}{2}}{1}{< #1}})}
\newcommand{\fbaseline}[1]{b_{#1}(\vals{{\ifthenelse{\equal{#1}{1}}{#1}{< #1}}}, \sampleset_i\setminus \{x_{#1}\})}
\newcommand{\itersample}[1]{\vals{#1}\in \sample{#1}}
\newcommand{\forward}{\overrightarrow}
\newcommand{\equivforward}{\forward{\equiv}}
\newcommand{\g}{g}
\newcommand{\gradestim}{\langle q_i, w_i, l_i, a_i \rangle}
\definecolor{codegreen}{rgb}{0,0.6,0}
\definecolor{codegray}{rgb}{0.5,0.5,0.5}
\definecolor{codepurple}{rgb}{0.58,0,0.82}
\definecolor{backcolour}{rgb}{0.95,0.95,0.92}
\lstdefinestyle{mystyle}{
    backgroundcolor=\color{backcolour},   
    commentstyle=\color{codegreen},
    keywordstyle=\color{magenta},
    numberstyle=\tiny\color{codegray},
    stringstyle=\color{codepurple},
    basicstyle=\ttfamily\footnotesize,
    breakatwhitespace=false,         
    breaklines=true,                 
    captionpos=b,                    
    keepspaces=true,                 
    numbers=left,                    
    numbersep=5pt,                  
    showspaces=false,                
    showstringspaces=false,
    showtabs=false,                  
    tabsize=2
}
\title{\emph{Storchastic}: A Framework for \\ General Stochastic Automatic Differentiation}
\author{Emile van Krieken \\
Vrije Universiteit Amsterdam \\
\texttt{e.van.krieken@vu.nl}
\And Jakub M. Tomczak \\ Vrije Universiteit Amsterdam \\ \texttt{j.m.tomczak@vu.nl} 
\And Annette ten Teije \\ Vrije Universiteit Amsterdam \\ \texttt{annette.ten.teije@vu.nl}}
\date{April 2020}
\begin{document}

\maketitle

\begin{abstract}%
    Modelers use automatic differentiation (AD) of computation graphs to implement complex deep learning models without defining gradient computations.
    Stochastic AD extends AD to stochastic computation graphs with sampling steps, which arise when modelers handle the intractable expectations common in reinforcement learning and variational inference.
    However, current methods for stochastic AD are limited: 
    They are either only applicable to continuous random variables and differentiable functions, or can only use simple but high variance score-function estimators.
    To overcome these limitations, we introduce \emph{Storchastic}, a new framework for AD of stochastic computation graphs.
    \emph{Storchastic} allows the modeler to choose from a wide variety of gradient estimation methods at each sampling step, to optimally reduce the variance of the gradient estimates.
    Furthermore, \emph{Storchastic} is provably unbiased for estimation of any-order gradients, and generalizes variance reduction techniques to any-order derivative estimates. 
    Finally, we implement \emph{Storchastic} as a PyTorch library  at \url{github.com/HEmile/storchastic}.
\end{abstract}

\section{Introduction}
    One of the driving forces behind deep learning is automatic differentiation (AD) libraries of complex computation graphs. 
    Deep learning modelers are relieved by accessible AD of the need to implement  complex derivation expressions of the computation graph. 
    However, modelers are currently limited in settings where the modeler uses intractable expectations over random variables \cite{mohamedMonteCarloGradient2020, correiaEfficientMarginalizationDiscrete2020}. 
    Two common examples are reinforcement learning methods using policy gradient optimization \cite{williamsSimpleStatisticalGradientfollowing1992,lillicrapContinuousControlDeep2016, mnihAsynchronousMethodsDeep2016} 
    and latent variable models, especially when inferred using amortized variational inference \cite{mnihNeuralVariationalInference2014, kingmaAutoencodingVariationalBayes2014, ranganathBlackBoxVariational2014,rezendeStochasticBackpropagationApproximate2014}.
    Typically, modelers estimate these expectations using Monte Carlo methods, that is, sampling,
    and resort to gradient estimation techniques \cite{mohamedMonteCarloGradient2020} to differentiate through the expectation.

    A popular approach for stochastic AD is reparameterization~\cite{kingmaAutoencodingVariationalBayes2014}, which is both unbiased and has low variance, but is limited to continuous random variables and differentiable functions. 
    The other popular approach~\cite{williamsSimpleStatisticalGradientfollowing1992,schulmanGradientEstimationUsing2015,foersterDiCEInfinitelyDifferentiable2018} analyzes the computation graph and then uses the score function estimator to create a \emph{surrogate loss} that provides gradient estimates when differentiated. 
    While this approach is more general as it can also be applied to discrete random variables and non-differentiable functions, naive applications of the score function will have high variance, which leads to unstable and slow convergence.
    Furthermore, this approach is often implemented incorrectly \cite{foersterDiCEInfinitelyDifferentiable2018}, which can introduce bias in gradients.
    
    \begin{figure}
        \includegraphics[width=\linewidth]{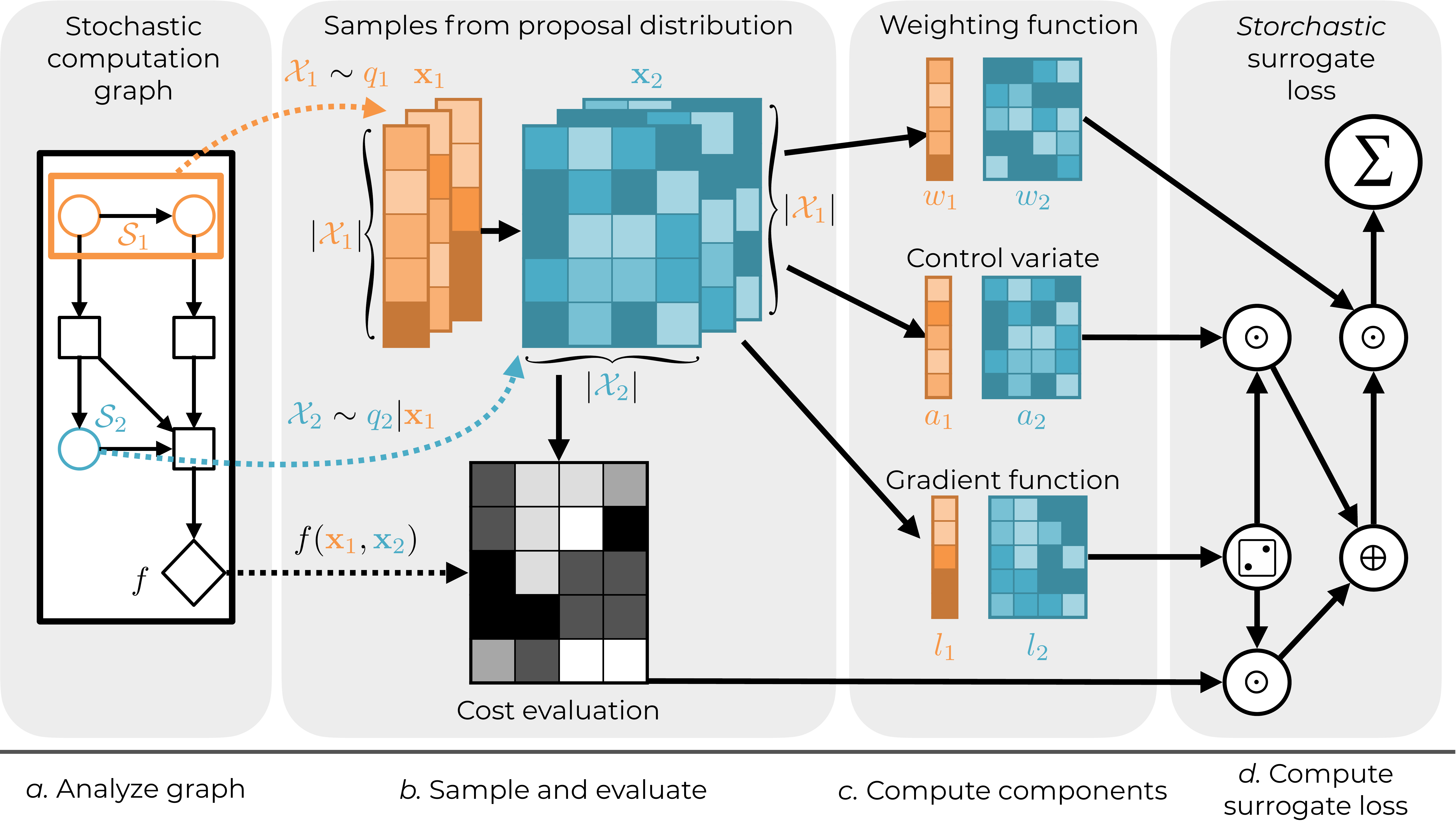}
        \caption{An illustration of the (parallelized) \emph{Storchastic} loss computation. 
        \emph{a.} Assign the stochastic nodes of the input stochastic computation graph  (SCG) into two topologically sorted partitions. 
        \emph{b.} Evaluate the SCG. We first sample the set of values $\mathcal{X}_1$ from the \tproposal{}. For each of the samples $\bx_i\in\mathcal{X}_1$, we then sample a set of samples $\mathcal{X}_2$. The rows in the figure indicate different samples in $\mathcal{X}_1$, while the columns indicate samples in $\mathcal{X}_2$. The different samples are used to evaluate the cost function $f$ $|\mathcal{X}_1|\cdot |\mathcal{X}_2|$ times. 
        \emph{c.} Compute the weighting function, \tadditive{} and \tmultipl{} for all samples. 
        \emph{d.} Using those components and the cost function evaluation, compute the \emph{storchastic} surrogate loss, mimicking Algorithm \ref{alg:storchastic}. $\odot$ refers to element-wise multiplication, $\oplus$ to element-wise summation and $\sum$ for summing the entries of a matrix. 
        }
        \label{fig:surrogate-loss}
    \end{figure}

    We therefore develop a new framework called \emph{Storchastic} to support deep learning modelers. 
    They can use \emph{Storchastic} to focus on defining stochastic deep learning models without having to worry about complex gradient estimation implementations.
    \emph{Storchastic} extends DiCE \cite{foersterDiCEInfinitelyDifferentiable2018} to other gradient estimation techniques than basic applications of the score function. 
    It defines a surrogate loss by decomposing gradient estimation methods into four components: The \tproposal{}, weighting function, \tmultipl{} and \tadditive{}.  
    We can use this decomposition to get insight into how gradient estimators differ, and use them to further reduce variance by adapting components of different gradient estimators.

    Our main contribution is a framework with a formalization and a proof that, if the components satisfy certain conditions, performing $n$-th order differentiation on the \emph{Storchastic} surrogate loss gives unbiased estimates of the $n$-th order derivative of the stochastic computation graph.
    We show these conditions hold for a wide variety of gradient estimation methods for first order differentiation.
    For many score function-based methods like RELAX \cite{grathwohlBackpropagationVoidOptimizing2018}, MAPO \cite{liangMemoryAugmentedPolicy2019} and the unordered set estimator \cite{koolEstimatingGradientsDiscrete2020}, the conditions also hold for any-order differentiation. 
    In \emph{Storchastic}, we only have to prove these conditions locally. 
    This means that modelers are free to choose the gradient estimation method that best suits each sampling step, while guaranteeing that the gradient remains unbiased. 
    \emph{Storchastic} is the first stochastic AD framework to incorporate the measure-valued derivative \cite{pflugSamplingDerivativesProbabilities1989,heidergottMeasurevaluedDifferentiationMarkov2008,mohamedMonteCarloGradient2020} and SPSA \cite{spallMultivariateStochasticApproximation1992, bhatnagarStochasticRecursiveAlgorithms2013}, and the first to guarantee variance reduction of any-order derivative estimates through \tadditive{}s.

    In short, our contributions are the following:
    \begin{enumerate}
        \item We introduce \emph{Storchastic}, a new framework for general stochastic AD that uses four gradient estimation components, in Section \ref{sec:requirements}-\ref{sec:surrogate}.
        \item We prove Theorem \ref{thrm:storchastic-informal}, which provides conditions under which \emph{Storchastic} gives unbiased any-order derivative estimates in Section \ref{sec:storchastic-conditions}. To this end, we introduce a mathematical formalization of forward-mode evaluation in AD libraries in Section \ref{sec:forward-mode}.
        \item We derive a technique for extending variance reduction using \tadditive{}s to any-order derivative estimation in Section \ref{sec:var-reduction}.
        \item We implement \emph{Storchastic} as an open source library for PyTorch, Section \ref{sec:implementation}.
    \end{enumerate}

\section{Background}
We use capital letters $\node, \detnode, S_1, ..., S_k$ for nodes in a graph, calligraphic capital letters $\stochastic, \deterministic$ for sets and non-capital letters for concrete computable objects such as functions $f$ and values $\vals{i}$.
\subsection{Stochastic Computation Graphs}
We start by introducing Stochastic Computation Graphs (SCGs) \cite{schulmanGradientEstimationUsing2015}, which is a formalism for stochastic AD.
A \textit{Stochastic Computation Graph} (SCG) is a directed acyclic graph (DAG) $\scg=\left(\nodes, \edges\right)$ where nodes $\nodes$ are partitioned in \textit{stochastic nodes} $\stochastic$ and \textit{deterministic nodes} $\deterministic$. 
We define the set of \textit{parameters} $\parameters\subseteq\deterministic$ such that  all $\theta\in\parameters$ have no incoming edges, and the set of \textit{cost nodes} $\costs\subseteq\deterministic$ such that all $c \in \costs$ have no outgoing edges. 

The set of \textit{parents} $\pa(\node)$ is the set of incoming nodes of a node $\node\in \nodes$, that is $\pa(\node)=\{\nodealt\in\nodes|(\nodealt, \node)\in \edges\}$. 
Each \textit{stochastic node} $\stochnode\in\stochastic$ represents a random variable with \emph{sample space} $\sspace_\stochnode$ and probability distribution $p_\stochnode$ conditioned on its parents.
    Each \textit{deterministic node} $\detnode$ represents a (deterministic) function $\func_\detnode$ of its parents. 
%

$\nodealt$ \emph{influences} $\node$, denoted $\nodealt\influences \node$, if there is a directed path from $\nodealt$ to $\node$. 
We denote with $\nodes_{\before\node}=\{\nodealt\in\nodes|\nodealt\influences\node \}$ the set of nodes that influence $\node$.
    The \textit{joint probability} of all random variables $\bx_\stochastic\in\prod_{\stochnode\in\stochastic}\sspace_\stochnode$ is defined as 
    $p(\bx_\stochastic)=\prod_{\stochnode\in\stochastic}p_\stochnode(\bx_\stochnode|\bx_{\pa(\stochnode)})$, where $\vals{\pa(\stochnode)}$ is the set of values of the nodes $\pa(\stochnode)$.
    The \textit{expected value} of a deterministic node $\detnode\in\deterministic$ is its expected value over sampling stochastic nodes that influence that node, that is, 
    \begin{equation}
    \mathbb{E}[\detnode]=\mathbb{E}_{\stochastic_{\before \detnode}}[\func_\detnode(\pa(F))]=\int_{\sspace_{\stochastic_{\before\detnode}}} p(\vals{\before\stochastic}) \func_\detnode(\vals{\pa(\detnode)})d\vals{\stochastic_{\before \detnode}}.
    \end{equation}
\subsection{Problem Statement}
\label{sec:problem-statement}
In this paper, we aim to define a \emph{surrogate loss} that, when differentiated using an AD library, gives an unbiased estimate of the $n$-th order derivative of a parameter $\theta$ with respect to the expected total cost $\nabla_\theta^{(n)}\mathbb{E}[\sum_{\cost\in\costs}\cost]$.
This gradient can be written as $\sum_{\cost\in\costs}\nabla_\theta^{(n)}\mathbb{E}[\cost]$, and we focus on estimating the gradient of a single cost node $\nabla_\theta^{(n)}\mathbb{E}[\cost]$. 
 
\subsection{Example: Discrete Variational Autoencoder}
\label{sec:example-vae}
\begin{figure}
    \centering
    \includegraphics[width=0.6\linewidth]{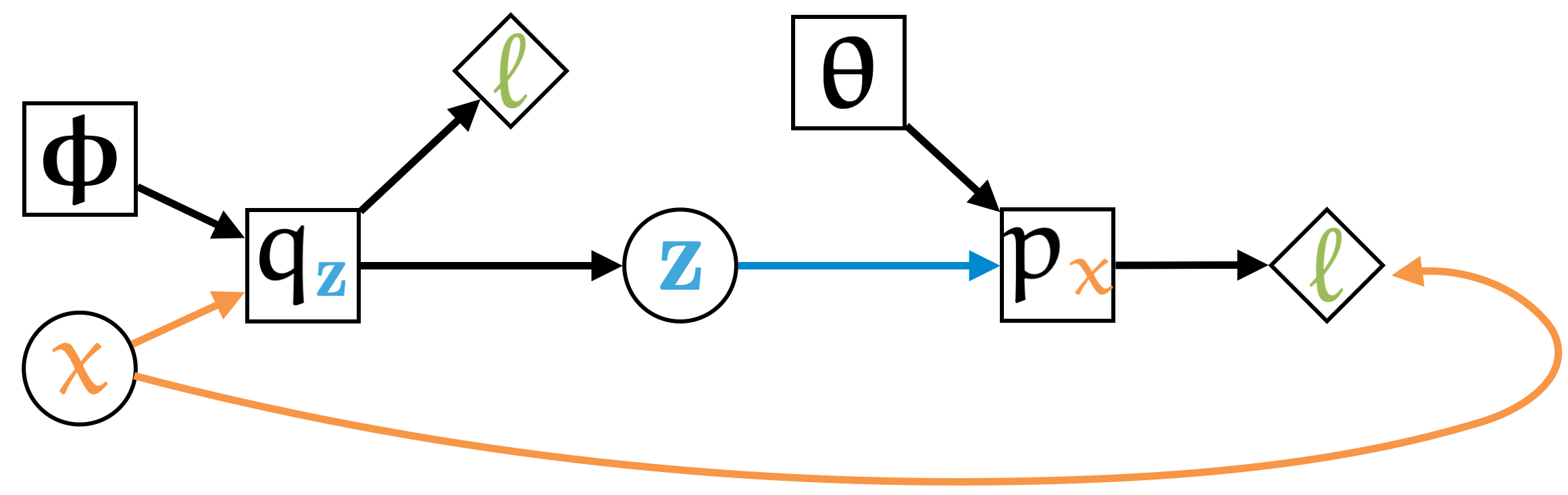}
    \caption{A Stochastic Computation Graph representing the computation of the losses of an VAE with a discrete latent space. }
    \label{fig:VAE}
\end{figure}
Next, we introduce a running example: A variational autoencoder (VAE) with a discrete latent space \cite{kingmaAutoencodingVariationalBayes2014, jangCategoricalReparameterizationGumbelsoftmax2017}. 
First, we represent the model as an SCG: The \textit{deterministic} nodes are $\mathcal{N}=\{ \phi, \theta,  q_{z}, p_{x}, \ell_{KLD}, \ell_{Rec}\}$ and the \textit{stochastic} nodes are $\mathcal{S}=\{x, z\}$. 
These are connected as shown in Figure \ref{fig:VAE}. 
The \textit{parameters} are $\Theta = \{\theta, \phi\}$ which respectively are the parameters of the variational posterior $q$ and the model likelihood $p$, and the \textit{cost nodes} $\mathcal{C}=\{\ell_{KLD}, \ell_{Rec}\}$ that represent the KL-divergence between the posterior and the prior, and the `reconstruction loss', or the model log-likelihood after decoding the sample $z$. 
Finally, $q_z$ represents the parameters of the multivariate categorical distribution of the amortized variational posterior $q_\phi(z|x)$.
This SCG represents the equation
\begin{equation}
    \mathbb{E}_{x, z}[\ell_{KLD} + \ell_{Rec}]
    = \mathbb{E}_x[\ell_{KLD}] + \mathbb{E}_{x, z\sim q_\phi(z|x)}[\ell_{Rec}].
\end{equation}
The problem we are interested in is estimating the gradients of these expectations with respect to the parameters. 
Since $x$ is not influenced by the parameters, we have $\nabla_\theta^{(n)} \mathbb{E}_x[\ell_{KLD}]=0$ and $\nabla_\phi^{(n)} \mathbb{E}_x[\ell_{KLD}]=\mathbb{E}_x[\nabla_\phi^{(n)} \ell_{KLD}]$. 
The second term is more challenging.
We can move the gradient with respect to $\theta$ in, since $z$ is not influenced by $\theta$: $\nabla_\theta^{(n)}\mathbb{E}_{x, z\sim q_\phi(z|x)}[\ell_{Rec}] = \mathbb{E}_{x, z\sim q_\phi(z|x)}[\nabla_\theta^{(n)}\ell_{Rec}]$.
However, we cannot compute $\nabla_\phi^{(n)}\mathbb{E}_{x, z\sim q_\phi(z|x)}[\ell_{Rec}]$ without gradient estimation methods. This is because sampling from $q_\phi(z|x)$ is dependent on $\phi$. 
Furthermore, since we are dealing with a discrete stochastic node, we cannot apply the reparameterization method here without introducing bias.


\subsection{Formalizing AD libraries and DiCE}
\label{sec:forward-mode}
To be able to properly formalize and prove the propositions in this paper, we introduce the `forward-mode' operator that simulates forward-mode evaluation using AD libraries. 
This operator properly handles the common `stop-grad' operator, which ensures that its argument is only evaluated during forward-mode evaluations of the computation graph.
It is implemented in Tensorflow and Jax with the name \texttt{stop\_gradient} \cite{abadiTensorFlowLargescaleMachine2015,bradburyJAXComposableTransformations2018} and in PyTorch as \texttt{detach} or \texttt{no\_grad} \cite{paszkePyTorchImperativeStyle2019}. 
`stop-grad' is necessary to define surrogate losses for gradient estimation, which is why it is essential to properly define it.
For formal definitions of the following operators and proofs we refer the reader to Appendix \ref{sec:appendix-forward-mode}. 
\begin{definition}[informal]
    The \emph{stop-grad} operator $\bot$ is a function such that $\nabla_x\bot(x)=0$.
    The \emph{forward-mode} operator $\forward{}$, which is denoted as an arrow above the argument it evaluates, acts as an identity function, except that $\forward{\bot(a)}=\forward{a}$.
    Additionally, we define the \magicbox{} operator as $\magic(x)=\exp(x-\bot(x))$.
\end{definition}
Importantly, the definition of $\forward{}$ implies that $\forward{\nabla_x f(x)}$ does not equal $\nabla_x \forward{f(x)}$ if $f$ contains a stop-grad operator. 
\magicbox{}, which was first introduced in \cite{foersterDiCEInfinitelyDifferentiable2018}, is particularly useful for creating surrogate losses that remain unbiased for any-order differentiation. 
It is defined such that $\forward{\magic(x)}=1$ and $\nabla_x\magic(f(x))=\magic(f(x))\nabla_x f(x)$. 
This allows injecting multiplicative factors to the computation graph only when computing gradients.

Making use of \magicbox{}, DiCE \cite{foersterDiCEInfinitelyDifferentiable2018} is an estimator for automatic $n$th-order derivative estimation that defines a surrogate loss using the score function: 
\begin{equation}
    \label{eq:DiCE}
    \nabla_\theta^{(n)}\mathbb{E}[\sum_{\cost\in\costs}\cost] = \mathbb{E}\bigg[\forward{\nabla_\theta^{(n)} \sum_{\cost\in\costs}\magic\big(\sum_{\stochnode \in \stochastic_{\before\cost}}\log p(\vals{\stochnode}|\vals{\pa(\stochnode)})\big)\cost}\bigg].
\end{equation}
DiCE correctly handles the credit assignment problem: The score function is only applied to the stochastic nodes that influence a cost node.
It also handles pathwise dependencies of the parameter through cost functions.
However, it has high variance since it is based on a straightforward application of the score function.



\section{The \emph{Storchastic} Framework}
\label{sec:storchastic}
In this section, we introduce \emph{Storchastic}, a framework for general any-order gradient estimation in SCGs that gives modelers the freedom to choose a suitable gradient estimation method for each stochastic node. 
First, we present 5 requirements that we used to develop the framework in Section \ref{sec:requirements}. 
\emph{Storchastic} deconstructs gradient estimators into four components that we present in Section \ref{sec:components}. 
We use these components to introduce the \emph{Storchastic} surrogate loss in Section \ref{sec:surrogate}, and give conditions that need to hold for unbiased estimation in Section \ref{sec:storchastic-conditions}.
In Section \ref{sec:var-reduction} we discuss variance reduction, in Section \ref{sec:estimators} we discuss several estimators that fit in \emph{Storchastic}, and in Section \ref{sec:implementation} we discuss our PyTorch implementation.
An overview of our approach is outlined in Figure \ref{fig:surrogate-loss}.

\subsection{Requirements of the \emph{Storchastic} Framework}
\label{sec:requirements}
First, we introduce the set of requirements we used to develop \emph{Storchastic}.
\begin{enumerate}
    \item \label{req:plug} Modelers should be able to choose a different gradient estimation method for each stochastic node. 
    This allows for choosing the method best suited for that stochastic node, or adding background knowledge in the estimator.
    \item \label{req:estimators} \emph{Storchastic} should be flexible enough to allow implementing a wide range of reviewed gradient estimation methods, including score function-based methods with complex sampling techniques \cite{yinARSMAugmentREINFORCESwapMergeEstimator2019,koolEstimatingGradientsDiscrete2020, liangMemoryAugmentedPolicy2019} or \tadditive{}s \cite{grathwohlBackpropagationVoidOptimizing2018, tuckerREBARLowvarianceUnbiased2017}, and other methods such as measure-valued derivatives \cite{heidergottMeasurevaluedDifferentiationMarkov2008,pflugSamplingDerivativesProbabilities1989} and SPSA \cite{spallMultivariateStochasticApproximation1992} which are missing AD implementations \cite{mohamedMonteCarloGradient2020}.
    \item \label{req:surrogate} \emph{Storchastic} should define a \emph{surrogate loss} \cite{schulmanGradientEstimationUsing2015}, which gives gradients of the SCG when differentiated using an AD library. This makes it easier to implement gradient estimation methods as modelers get the computation of derivatives for free.
    \item \label{req:higher-order} Differentiating the surrogate loss $n$ times should give estimates of the $n$th-order derivative, which are used in for example reinforcement learning \cite{furmstonApproximateNewtonMethods2016, foersterLearningOpponentLearningAwareness2018} and meta-learning \cite{finnModelAgnosticMetaLearningFast2017, liMetaSGDLearningLearn2017}.
    \item \label{req:variance} Variance reduction methods through better sampling and \tadditive{}s should generalize in higher-order derivative estimation.
    \item \label{req:unbiased} \emph{Storchastic} should be provably unbiased. To reduce the effort of developing new methods, researchers should only have to prove a set of local conditions that generalize to any SCG. 
\end{enumerate}

\subsection{Gradient Estimators in Storchastic}
\label{sec:components}
Next, we introduce each of the four components and motivate why each is needed to ensure Requirement \ref{req:estimators} is satisfied.
First, we note that several recent gradient estimators, like MAPO \cite{liangMemoryAugmentedPolicy2019}, unordered set estimator \cite{koolEstimatingGradientsDiscrete2020} and self-critical baselines \cite{koolAttentionLearnSolve2019,rennieSelfCriticalSequenceTraining2017} act on sequences of stochastic nodes instead of on a single stochastic node. 
Therefore, we create a partition $\stochastic_1, ..., \stochastic_k$ of $\stochastic_{\before\cost}$ topologically ordered by the influence relation, and define the shorthand $\vals{i}:= \vals{\stochastic_i}$.
For each partition $\stochastic_i$, we choose a \emph{gradient estimator}, which is a 4-tuple $\gradestim$. Here, $\fproposalcond{i}$ is the \emph{\tproposal{}}, $\fweight{i}$ is the \emph{weighting function}, $\fmultipl{i}$ is the \emph{\tmultipl{}} and $\additive{i}$ is the \emph{\tadditive{}}. 

\subsubsection{Proposal distribution}
Many gradient estimation methods in the literature do not sample a single value $\vals{i}\sim p(\vals{i}|\vals{<i})$, but sample, often multiple, values from possibly a different distribution. 
Some instances of sampling schemes are taking multiple i.i.d. samples, importance sampling \cite{mahmoodWeightedImportanceSampling2014} which is very common in off-policy reinforcement learning, sampling without replacement \cite{koolEstimatingGradientsDiscrete2020}, memory-augmented sampling \cite{liangMemoryAugmentedPolicy2019} and antithetic sampling \cite{yinARMAugmentREINFORCEMergeGradient2019}.
Furthermore, measure-valued derivatives \cite{mohamedMonteCarloGradient2020,heidergottMeasurevaluedDifferentiationMarkov2008} and SPSA \cite{spallMultivariateStochasticApproximation1992} also sample from different distributions by comparing the performance of two related distributions.
To capture this, the \tproposal{} $\fproposalcond{i}$ samples a \emph{set} of values $\sampleset_i=\{\vals{i, 1}, ..., \vals{i, m}\}$ where each $\vals{i, j}\in \sspace_{\stochastic_i}$. 
The sample is conditioned on $\vals{<i}=\cup_{\stochnode\in\stochastic_i} \vals{\pa(\stochnode)}$, the values of the parent nodes of the stochastic nodes in $\stochastic_i$. 
This is illustrated in Figure \ref{fig:surrogate-loss}.b.

\subsubsection{Weighting function}
When a gradient estimator uses a different sampling scheme, we have to weight each individual sample to ensure it remains a valid estimate of the expectation. 
For this, we use a nonnegative weighting function $\weight{i}: \sspace_{\stochastic_i}\rightarrow \mathbb{R}^+$.  
Usually, this function is going to be detached from the computation graph, but we allow it to receive gradients as well to support implementing expectations and gradient estimation methods that compute the expectation over (a subset of) values \cite{koolEstimatingGradientsDiscrete2020, liangMemoryAugmentedPolicy2019,liuRaoBlackwellizedStochasticGradients2019}.


\subsubsection{Gradient function}
The \tmultipl{} is an unbiased gradient estimator together with the weighting function. 
It distributes the empirical cost evaluation to the parameters of the distribution. 
In the case of score function methods, this is the log-probability. For measure-valued derivatives and SPSA we can use the parameters of the distribution itself.

\subsubsection{Control variate}
Modelers can use \tadditive{}s to reduce the variance of gradient estimates \cite{greensmithVarianceReductionTechniques2004,mohamedMonteCarloGradient2020}. 
It is a function that has zero-mean when differentiated.
Within the context of score functions, a common control variate is a baseline, which is a function that is independent of the sampled value. 
We also found that LAX, RELAX, and REBAR (Appendix \ref{sec:relax}), and the GO gradient \cite{congGOGradientExpectationbased2019} (Appendix \ref{sec:gogradient}) have natural implementations using a \tadditive{}. 
We discuss how we implement \tadditive{}s in \emph{Storchastic} in Section \ref{sec:var-reduction}.

\subsubsection{Example: Leave-one-out baseline}
\label{sec:example-loo}
As an example, we show how to formulate the score function with the leave-one-out baseline \cite{mnihVariationalInferenceMonte2016,koolBuyREINFORCESamples2019} in \emph{Storchastic}.
This method samples $m$ values with replacement and uses the average of the other values as a baseline. 
\begin{itemize}
    \item \textbf{Proposal distribution}: We use $m$ samples with replacement, which can be formulated as $q(\sampleset_{i}|\vals{<i})=\prod_{j=1}^m p(\vals{i, j}|\vals{<i})$.
    \item \textbf{Weighting function}: Since samples are independent, we use $\fweight{i}=\frac{1}{m}$. 
    \item \textbf{Gradient function}: The score-function uses the log-probability $\fmultipl{i}=\log p(\vals{i}|\vals{<i})$.
    \item \textbf{Control variate}: We use $\fadditive{i, j}=(1-\magic(\fmultipl{i}))\frac{1}{m-1}\sum_{ j'\neq j} f_\cost(\vals{<i}, \vals{i, j})$,
    where $\frac{1}{m-1}\sum_{ j'\neq j} f_\cost(\vals{<i}, \vals{i, j})$ is the leave-one-out baseline.
    $(1-\magic(\fmultipl{i}))$ is used to ensure the baseline will be subtracted from the cost before multiplication with the \tmultipl{}. 
    It will not affect the forward evaluation since $\forward{1-\magic(\fmultipl{i})}$ evaluates to 0.
\end{itemize}

\subsection{The \emph{Storchastic} Surrogate Loss}
\label{sec:surrogate}
As mentioned in Requirement \ref{req:surrogate}, we would like to define a \emph{surrogate loss}, which we will introduce next.
Differentiating this loss $n$ times, and then evaluating the result using an AD library, will give unbiased estimates of the $n$-th order derivative of the parameter $\theta$ with respect to the cost $\cost$.
Furthermore, according to Requirement \ref{req:plug}, we assume the modeler has chosen a gradient estimator $\gradestim$ for each partition $\stochastic_i$, which can all be different.
Then the Storchastic surrogate loss is
%
\begin{align}
    \SL= \sum_{\itersample{1}} \fweight{1}&\Big[\fadditive{1} + 
        \sum_{\itersample{2}} \fweight{2}\Big[ \magic(\fmultipl{1})\fadditive{2} + \dots  \\
        + \sum_{\itersample{k}} \fweight{k}  &\Big[ \magic(\sum_{j=1}^{k-1} \fmultipl{j})\fadditive{k} + \magic\big( \sum_{i=1}^k \fmultipl{i} \big)\cost  \Big] \dots \Big] \Big]   \label{eq:surrogate-loss},\\
    \text{where } \sampleset_1 \sim \fproposal{1}&, \sampleset_2 \sim \fproposalcond{2}, ..., \sampleset_k \sim \fproposalcond{k}.
\end{align}
When this loss is differentiated $n$ times using AD libraries, it will produce unbiased estimates of the $n$-th derivative, as we will show later.
\begin{algorithm}
\begin{algorithmic}[1]
    \Function{estimate\_gradient}{$n$, $\theta$} 
        \State $\SL \gets $ \Call{surrogate\_loss}{$1$, $\{\}$, 0} \Comment{Compute surrogate loss} 
        \State \Return $\forward{\nabla_\theta^{(n)} \SL}$ \Comment{Differentiate and use AD library to evaluate surrogate loss}
    \EndFunction
    \State
    \Function{surrogate\_loss}{$i$, $\vals{<i}$, $L$} 
        \If {$i=k+1$}
            \State \Return $\magic\big(L) f_\cost(\vals{\leq k})$ \Comment{Use \magicbox{} to distribute cost}
        \EndIf
        \State $\sampleset_{i} \sim \fproposalcond{i}$ \Comment{Sample from \tproposal{}}
        \State $\mathtt{sum} \gets 0$
        \For {$\itersample{i}$} \Comment{Iterate over options in sampled set }
            \State $A \gets \magic(L)\fadditive{i}$ \Comment{Compute \tadditive{}}
            \State $L_i\gets L + \fmultipl{i}$ \Comment{Compute gradient function} 
            \State $\hat{\cost} \gets$ \Call{surrogate\_loss}{$i+1$, $\vals{\leq i}$, $L_i$} \Comment{Compute surrogate loss for $\vals{i}$}
            \State $\mathtt{sum} \gets \mathtt{sum} + \fweight{i}(\hat{\cost} + A)$ \Comment{Weight and add}
        \EndFor
        \State \Return $\mathtt{sum}$
    \EndFunction
\end{algorithmic}
\caption{The \emph{Storchastic} framework: Compute a Monte Carlo estimate of the $n$-th order gradient given $k$ gradient estimators $\gradestim$. }
\label{alg:storchastic}
\end{algorithm}
To help understand the \emph{Storchastic} surrogate loss and why it satisfies the requirements, we will break it down using Algorithm \ref{alg:storchastic}. 
The \textproc{estimate\_gradient} function computes the surrogate loss for the SCG, and then differentiates it $n\geq 0$ times using the AD library to return an estimate of the $n$-th order gradient, which should be unbiased according to Requirement \ref{req:higher-order}.
If $n$ is set to zero, this returns an estimate of the expected cost.

The \textproc{surrogate\_loss} function computes the equation using a recursive computation, which is illustrated in Figure \ref{fig:surrogate-loss}.b-d.
It iterates through the partitions and uses the gradient estimator to sample and compute the output of each component.
It receives three inputs: The first input $i$ indexes the partitions and gradient estimators, the second input $\vals{<i}$ is the set of previously sampled values for partitions $\stochastic_{<i}$, and $L$ is the sum of \tmultipl{}s of those previously sampled values.
In line 8, we sample a set of values $\sampleset_i$ for partition $i$ from $\fproposalcond{i}$.
In lines 9 to 14, we compute the sum over values $\vals{i}$ in $\sampleset_i$, which reflects the $i$-th sum of the equation.
Within this summation, in lines 11 and 12, we compute the \tmultipl{} and \tadditive{} for each value $\vals{i}$.
We will explain in Section \ref{sec:var-reduction} why we multiply the \tadditive{} with the \magicbox{} of the sum of the previous \tmultipl{}.

In line 13, we go into recursion by moving to the next partition. 
We condition the surrogate loss on the previous samples $\vals{<i}$ together with the newly sampled value $\vals{i}$.
We pass the sum of \tmultipl{}s for later usage in the recursion. 
Finally, in line 14, the sample performance and the \tadditive{} are added in a weighted sum. 
The recursion call happens for each $\itersample{i}$, meaning that this computation is exponential in the size of the sampled sets of values $\sampleset_i$.
For example, the surrogate loss samples $|\sampleset_1|$ times from $\proposalcond{2}$, one for each value $\itersample{1}$.
However, this computation can be trivially parallelized by using tensor operations in AD libraries.
An illustration of this parallelized computation is given in Figure \ref{fig:surrogate-loss}.

Finally, in line 7 after having sampled values for all $k$ partitions, we compute the cost, and multiply it with the \magicbox{} of the sum of \tmultipl{}s.
This is similar to what happens in the DiCE estimator in Equation \eqref{eq:DiCE}.
\emph{Storchastic} can be extended to multiple cost nodes by computing surrogate losses for each cost node, and adding these together before differentiation.
For stochastic nodes that influence multiple cost nodes, the algorithm can share samples and gradient estimation methods to reduce overhead.


\subsection{Conditions for Unbiased Estimation}
\label{sec:storchastic-conditions}
We next introduce our main result that shows \emph{Storchastic} satisfies Requirements \ref{req:higher-order} and \ref{req:unbiased}, namely the conditions the gradient estimators should satisfy such that the \emph{Storchastic} surrogate loss gives estimates of the $n$-th order gradient of the SCG. 
A useful part of our result is that, in line with Requirement \ref{req:unbiased}, only local conditions of gradient estimators have to be proven to ensure estimates are unbiased.
Our result gives immediate generalization of these local proofs to any SCG.
\begin{manualtheorem}{1}
   \label{thrm:storchastic-informal}
   Evaluating the $n$-th order derivative of the \emph{Storchastic} surrogate loss in Equation \eqref{eq:surrogate-loss} using an AD library is an unbiased estimate of $\nabla_\theta^{(n)} \mathbb{E}[\cost]$ under the following conditions. First, all functions $f_\detnode$ corresponding to deterministic nodes $\detnode$ and all probability measures $p_\stochnode$ corresponding to stochastic nodes $\stochnode$ are \emph{identical under evaluation}. 
   Secondly, for each gradient estimator $\gradestim$, $i=1, ..., k$, all the following hold for $m=0, ..., n$:
   \begin{enumerate}
       \item  $\mathbb{E}_{\fproposalcond{i}}[\sum_{\itersample{i}} \forward{\nabla^{(m)}_\theta \fweight{i} \magic(\fmultipl{i})f(\vals{i})}]= \forward{\nabla_\theta^{(m)} \mathbb{E}_{\stochastic_i}[f(\vals{i})] }$ for any deterministic function $f$;
       \item $\mathbb{E}_{\fproposalcond{i}}[\sum_{\itersample{i}} \forward{\nabla^{(m)}_\theta \fweight{i} \fadditive{i}}]=0$;
       \item  for $n\geq m>0$, $\mathbb{E}_{\fproposalcond{i}}[\sum_{\itersample{i}} \forward{\nabla_\theta^{(m)} \fweight{i}}]=0$;
       \item $\forward{\fproposalcond{i}} = \fproposalcond{i}$, for all permissible $\sampleset_i$.
   \end{enumerate}
\end{manualtheorem}
The first condition defines a local surrogate loss for single expectations of any function under the \tproposal{}. 
The condition then says that this surrogate loss should give an unbiased estimate of the gradient for all orders of differentiation $m=0, ..., n$. 
Note that since 0 is included, the forward evaluation should also be unbiased.
This is the main condition used to prove unbiasedness of the \emph{Storchastic} framework, and can be proven for the score function and expectation, and for measure-valued derivatives and SPSA for zeroth and first-order differentiation. 

The second condition says that the \tadditive{} should be 0 in expectation under the \tproposal{} for all orders of differentiation. 
This is how \tadditive{}s are defined in previous work \cite{mohamedMonteCarloGradient2020}, and should usually not restrict the choice. 
The third condition constrains the weighting function to be 0 in expectation for orders of differentiation larger than 0.
Usually, this is satisfied by the fact that weighting functions are detached from the computation graph, but when enumerating expectations, this can be shown by using that the sum of weights is constant.
The final condition is a regularity condition that says \tproposal{}s should not be different under forward mode.
We also assume that the SCG is \emph{identical under evaluation}. This means that all functions and probability densities evaluate to the same value with and without the forward-mode operator, even when differentiated. 
This concept is formally introduced in Appendix \ref{sec:appendix-forward-mode}.

A full formalization and the proof of Theorem \ref{thrm:storchastic} are given in Appendix \ref{seq:unbiasedness-proof}. 
The general idea is to rewrite each sampling step as an expectation, and then inductively show that the inner expectation $i$ over the \tproposal{} $\proposalcond{i}$ is an unbiased estimate of the $n$th-order derivative over $\stochastic_i$ conditional on the previous samples.
To reduce the multiple sums over \tmultipl{}s inside \magicbox{}, we make use of a property of \magicbox{} proven in Appendix \ref{sec:appendix-forward-mode}:
\begin{manualproposition}{\ref{prop:DiCE_multiply}}
   Summation inside a \magicbox{} is equivalent under evaluation to multiplication of the arguments in individual \magicbox{}es, ie:
   \begin{equation}
    \magic(l_1(x) + l_2(x)) f(x) \equivforward  \magic(l_1(x)) \magic(l_2(x)) f(x).
   \end{equation}
\end{manualproposition}
\emph{Equivalence under evaluation}, denoted $\equivforward$, informally means that, under evaluation of $\forward{}$, the two expressions and their derivatives are equal.
This equivalence is closely related to $e^{a+b}=e^a e^b$.

\subsection{Any-order variance reduction using \tadditive{}s}
\label{sec:var-reduction}
To satisfy Requirement \ref{req:variance}, we investigate implementing \tadditive{}s such that the variance of any-order derivatives is properly decreased.
This is challenging in general SCG's \cite{maoBaselineAnyOrder2019}, since in higher orders of differentiation, derivatives of \tmultipl{}s will interact, but naive implementations of \tadditive{}s only reduce the variance of the \tmultipl{} corresponding to a single stochastic node.
\emph{Storchastic} solves this problem similarly to the method introduced in \cite{maoBaselineAnyOrder2019}. 
In line 11 of the algorithm, we multiply the \tadditive{} with the sum of preceding \tmultipl{}s $\magic(L)$. 
We prove that this ensures every term of the any-order derivative will be affected by a \tadditive{} in Appendix \ref{sec:baselines}. 
This proof is new, since \cite{maoBaselineAnyOrder2019} only showed this for first and second order differentiation, not for general \tadditive{}s, and uses a slightly different formulation that we show misses some terms.

\begin{manualtheorem}{2}[informal]
Let $\Multipl{i}=\sum_{j=1} ^i \multipl{i}$. The \emph{Storchastic} surrogate loss of \eqref{eq:surrogate-loss} can equivalently be computed as
\begin{align}
    \SL\equivforward \sum_{\itersample{1}} &  
        \sum_{\itersample{2}}   \dots  
         \sum_{\itersample{k}} \prod_{i=1}^k \fweight{i}  \sum_{i=1}^k \magic(\Multipl{i-1})\Big(\fadditive{i} + (\magic(\multipl{i}) - 1)\cost \Big) + \cost.
\end{align}
\end{manualtheorem}
This gives insight into how \tadditive{}s are used in \emph{Storchastic}. 
They are added to the \tmultipl{}, but only during differentiation since $\forward{\magic(\Multipl{i}) - 1} = 0$. 
Furthermore, since both terms are multiplied with $\magic(\Multipl{i-1})$ (see line 11 of Algorithm \ref{alg:storchastic}), both terms correctly distribute over the same any-order derivative terms. 
By choosing a \tadditive{} of the form $\fadditive{i}=(1 - \magic(\multipl{i}))\cdot  \baseline{i}$, we recover baselines which are common in the context of score functions \cite{foersterDiCEInfinitelyDifferentiable2018, mohamedMonteCarloGradient2020}.
For the proof, we use the following proposition also proven in Appendix \ref{sec:baselines}:
\begin{manualproposition}{\ref{prop:baseline-generator}}
    For orders of differentiation $n>0$, 
    \begin{equation}
        \label{eq:baseline-generator}
        \forward{\nabla_\node^{(n)} \magic(\Multipl{k})} = \forward{\nabla_\node^{(n)} \sum_{i=1}^k \big(\magic(\multipl{i}) - 1\big) \magic(\Multipl{i-1})}.
    \end{equation}
\end{manualproposition}

\subsection{Gradient Estimation Methods}
\label{sec:estimators}
In Appendix \ref{sec:gradient-estimators} we show how several prominent examples of gradient estimation methods in the literature can be formulated using \emph{Storchastic}, and prove for what orders of differentiation the conditions hold.
Starting off, we show that for finite discrete random variables, we can formulate enumerating over all possible options using \emph{Storchastic}. 
The score function fits by mimicking DiCE \cite{foersterDiCEInfinitelyDifferentiable2018}. 
We extend it to multiple samples with replacement to allow using the leave-one-out baseline \cite{mnihVariationalInferenceMonte2016,koolBuyREINFORCESamples2019}. 
Furthermore, we show how importance sampling, sum-and-sample estimators such as MAPO \cite{liangMemoryAugmentedPolicy2019}, the unordered set estimator \cite{koolEstimatingGradientsDiscrete2020} and RELAX and REBAR \cite{grathwohlBackpropagationVoidOptimizing2018, tuckerREBARLowvarianceUnbiased2017} fit in \emph{Storchastic}.
We also discuss the antithetic sampling estimator ARM \cite{yinARMAugmentREINFORCEMergeGradient2019}.
Unfortunately, condition 2 only holds for this estimator for $n\leq 1$ since it relies on a particular property of the score function that holds only for first-order gradient estimation. 
In addition to score function based methods, we discuss the GO gradient, SPSA \cite{rubinsteinSimulationMonteCarlo2016} and Measure-Valued Derivative \cite{heidergottMeasurevaluedDifferentiationMarkov2008}, and show that the last two will only be unbiased for $n\leq 1$. 
Finally, we note that reparameterization \cite{kingmaAutoencodingVariationalBayes2014, rezendeStochasticBackpropagationApproximate2014} can be implemented by transforming the SCG such that the sampling step is outside the path from the parameter to the cost \cite{schulmanGradientEstimationUsing2015}.

\subsection{Implementation}
\label{sec:implementation}
We implemented \emph{Storchastic} as an open source PyTorch \cite{paszkePyTorchImperativeStyle2019} library
\footnote{Code is available at \url{github.com/HEmile/storchastic}.}. 
To ensure modelers can easily use this library, it automatically handles sets of samples as extra dimensions to PyTorch tensors which allows running multiple sample evaluations in parallel.
This approach is illustrated in Figure \ref{fig:surrogate-loss}.
By making use of PyTorch broadcasting semantics, this allows defining models for simple single-sample computations that are automatically parallelized using \emph{Storchastic} when using multiple samples.
The \emph{Storchastic} library has implemented most of the gradient estimation methods mentioned in Section \ref{sec:estimators}.
Furthermore, new gradient estimation methods can seamlessly be added.

\subsubsection{Example: Leave-one-out baseline in Discrete Variational Autoencoder}
\begin{figure}
\begin{lstlisting}[language=Python]
class ScoreFunctionLOO(storch.method.Method):
    def proposal_dist(self, distribution, amt_samples):
        return distr.sample((amt_samples,))

    def weighting_function(self, distribution, amt_samples):
        return torch.full(amt_samples, 1/amt_samples)

    def estimator(self, sample, cost):
        # Compute gradient function (log-probability)
        log_prob = sample.distribution.log_prob(tensor)
        sum_costs = storch.sum(costs.detach(), sample.name)
        # Compute control variate
        baseline = (sum_costs - costs) / (sample.n - 1)
        return log_prob, (1.0 - magic_box(log_prob)) * baseline
\end{lstlisting}
\caption{Implementing the score function with the leave-one-out baseline in the Storchastic library.}
\label{fig:list-loo}
\end{figure}
As a small case study, we show how to implement the score function with the leave-one-out baseline introduced in Section \ref{sec:example-loo} for the discrete variational autoencoder introduced in Section \ref{sec:example-vae} in PyTorch using Storchastic. While the code listed is simplified, it shows the flexibility with which one can compute gradients in SCGs. 

We list in Figure \ref{fig:list-loo} how to implement the score function with the leave-one-out baseline.
Line 3 implements the proposal distribution, line 6 the weighting function, line 10 the gradient function and line 13 and 14 the control variate.
Gradient estimation methods in Storchastic all extend a common base class \texttt{storch.method.Method} to allow easy interoperability between different methods.

In Figure \ref{fig:list-vae}, we show how to implement the discrete VAE. The implementation directly follows the SCG shown in Figure \ref{fig:VAE}. In line 2, we create the \texttt{ScoreFunctionLOO} method defined in Figure \ref{fig:list-loo}. 
Then, we run the training loop: In line 6 we create the stochastic node $x$ by denoting the minibatch dimension as an independent dimension.
In line 8 we run the encoder with parameters $\phi$ to find the variational posterior $q_z$. We call the gradient estimation method in line 9 to get a sample of $z$. Note that this interface is independent of gradient estimation method chosen, meaning that if we wanted to compare our implemented method with a baseline, all that is needed is to change line 2. After the decoder, we compute the two costs in lines 12 and 13. Finally, we call Storchastic main algorithm in line 15 and run the optimizer.
\begin{figure}
\begin{lstlisting}[language=Python]
from vae import minibatches, encode, decode, KLD, binary_cross_entropy
method = ScoreFunctionLOO("z", 8)
for data in minibatches():
    optimizer.zero_grad()
    # Denote minibatch dimension as independent plate dimension
    data = storch.denote_independent(data.view(-1, 784), 0, "data")
    # Compute variational distribution given data, sample z
    q = torch.distributions.OneHotCategorical(logits=encode(data))
    z = method(q)
    # Compute costs, form the ELBO
    reconstruction = decode(z)
    storch.add_cost(KLD(q))
    storch.add_cost(binary_cross_entropy(reconstruction, data))
    # Storchastic backward pass, optimize
    ELBO = storch.backward()
    optimizer.step()
\end{lstlisting}
\caption{Simplified implementation of the discrete VAE using Storchastic.}
\label{fig:list-vae}
\end{figure}

We run this model on our currently implemented set of gradient estimation methods for discrete variables in Appendix \ref{sec:experiments} and report the results, which are meant purely to illustrate the case study.  




\section{Related Work}
\label{sec:related-work}
The literature on gradient estimation is rich, with papers focusing on general methods that can be implemented in \emph{Storchastic} \cite{spallMultivariateStochasticApproximation1992,heidergottMeasurevaluedDifferentiationMarkov2008, grathwohlBackpropagationVoidOptimizing2018,yinARMAugmentREINFORCEMergeGradient2019,liangMemoryAugmentedPolicy2019, liuRaoBlackwellizedStochasticGradients2019, congGOGradientExpectationbased2019}, see Appendix \ref{sec:gradient-estimators}, and works focused on Reinforcement Learning \cite{williamsSimpleStatisticalGradientfollowing1992,lillicrapContinuousControlDeep2016, mnihAsynchronousMethodsDeep2016} or Variational Inference \cite{mnihVariationalInferenceMonte2016}. For a recent overview, see \cite{mohamedMonteCarloGradient2020}. 

The literature focused on SCGs is split into methods using reparameterization \cite{rezendeStochasticBackpropagationApproximate2014,kingmaAutoencodingVariationalBayes2014,figurnovImplicitReparameterizationGradients2018,maddisonConcreteDistributionContinuous2017, jangCategoricalReparameterizationGumbelsoftmax2017} and those using the score function \cite{schulmanGradientEstimationUsing2015}. Of those, DiCE \cite{foersterDiCEInfinitelyDifferentiable2018} is most similar to \emph{Storchastic}, and can do any-order estimation on general SCGs. 
DiCE is used in the probabilistic programming library Pyro \cite{binghamPyroDeepUniversal2019}. 
We extend DiCE to allow for incorporating many other gradient estimation methods than just basic score function. 
We also derive and prove correctness of a general implementation for control variates for any-order estimation which is similar to the one conjectured for DiCE in \cite{maoBaselineAnyOrder2019}.

\cite{parmasTotalStochasticGradient2018,xuBackpropQGeneralizedBackpropagation2019} and \cite{weberCreditAssignmentTechniques2019} study actor-critic-like techniques and bootstrapping for SCGs to incorporate reparameterization using methods inspired by deterministic policy gradients \cite{lillicrapContinuousControlDeep2016}. By using models to differentiate through, these methods are biased through model inaccuracies and thus do not directly fit into \emph{Storchastic}.
However, combining these ideas with the automatic nature of \emph{Storchastic} could be interesting future work. 


\section{Conclusion}
  We investigated general automatic differentiation for stochastic computation graphs. 
    We developed the \emph{Storchastic} framework, and introduced an algorithm for unbiased any-order gradient estimation that allows using a large variety of gradient estimation methods from the literature. 
    We also investigated variance reduction and showed how to properly implement \tadditive{}s such that it affects any-order gradient estimates.
    The framework satisfies the requirements introduced in Section \ref{sec:requirements}. 

    For future work, we are interested in extending the analysis of \emph{Storchastic} to how variance compounds when using different gradient estimation methods. 
    Furthermore, \emph{Storchastic} could be extended to allow for biased methods.
    We are also interested in closely analyzing the different components of gradient estimators, both from a theoretical and empirical point of view, to develop new estimators that combine the strengths of estimators in the literature.
    

  

\bibliographystyle{abbrvnat}
\bibliography{references.bib}

\begin{thebibliography}{52}
\providecommand{\natexlab}[1]{#1}
\providecommand{\url}[1]{\texttt{#1}}
\expandafter\ifx\csname urlstyle\endcsname\relax
  \providecommand{\doi}[1]{doi: #1}\else
  \providecommand{\doi}{doi: \begingroup \urlstyle{rm}\Url}\fi

\bibitem[Abadi et~al.(2015)Abadi, Agarwal, Barham, Brevdo, Chen, Citro,
  Corrado, Davis, Dean, Devin, Ghemawat, Goodfellow, Harp, Irving, Isard, Jia,
  Jozefowicz, Kaiser, Kudlur, Levenberg, Man{\'e}, Monga, Moore, Murray, Olah,
  Schuster, Shlens, Steiner, Sutskever, Talwar, Tucker, Vanhoucke, Vasudevan,
  Vi{\'e}gas, Vinyals, Warden, Wattenberg, Wicke, Yu, and
  Zheng]{abadiTensorFlowLargescaleMachine2015}
M.~Abadi, A.~Agarwal, P.~Barham, E.~Brevdo, Z.~Chen, C.~Citro, G.~S. Corrado,
  A.~Davis, J.~Dean, M.~Devin, S.~Ghemawat, I.~Goodfellow, A.~Harp, G.~Irving,
  M.~Isard, Y.~Jia, R.~Jozefowicz, L.~Kaiser, M.~Kudlur, J.~Levenberg,
  D.~Man{\'e}, R.~Monga, S.~Moore, D.~Murray, C.~Olah, M.~Schuster, J.~Shlens,
  B.~Steiner, I.~Sutskever, K.~Talwar, P.~Tucker, V.~Vanhoucke, V.~Vasudevan,
  F.~Vi{\'e}gas, O.~Vinyals, P.~Warden, M.~Wattenberg, M.~Wicke, Y.~Yu, and
  X.~Zheng.
\newblock {{TensorFlow}}: Large-scale machine learning on heterogeneous
  systems.
\newblock 2015.

\bibitem[Bhatnagar et~al.(2013)Bhatnagar, Prasad, and
  Prashanth]{bhatnagarStochasticRecursiveAlgorithms2013}
S.~Bhatnagar, H.~Prasad, and L.~Prashanth.
\newblock \emph{Stochastic {{Recursive Algorithms}} for {{Optimization}}},
  volume 434 of \emph{Lecture {{Notes}} in {{Control}} and {{Information
  Sciences}}}.
\newblock {Springer London}, {London}, 2013.
\newblock ISBN 978-1-4471-4284-3 978-1-4471-4285-0.
\newblock \doi{10.1007/978-1-4471-4285-0}.

\bibitem[Bingham et~al.(2019)Bingham, Chen, Jankowiak, Obermeyer, Pradhan,
  Karaletsos, Singh, Szerlip, Horsfall, and
  Goodman]{binghamPyroDeepUniversal2019}
E.~Bingham, J.~P. Chen, M.~Jankowiak, F.~Obermeyer, N.~Pradhan, T.~Karaletsos,
  R.~Singh, P.~A. Szerlip, P.~Horsfall, and N.~D. Goodman.
\newblock Pyro: Deep universal probabilistic programming.
\newblock \emph{Journal of Machine Learning Research}, 20:\penalty0 28:1--28:6,
  2019.

\bibitem[Bradbury et~al.(2018)Bradbury, Frostig, Hawkins, Johnson, Leary,
  Maclaurin, Necula, Paszke, VanderPlas, {Wanderman-Milne}, and
  Zhang]{bradburyJAXComposableTransformations2018}
J.~Bradbury, R.~Frostig, P.~Hawkins, M.~J. Johnson, C.~Leary, D.~Maclaurin,
  G.~Necula, A.~Paszke, J.~VanderPlas, S.~{Wanderman-Milne}, and Q.~Zhang.
\newblock {{JAX}}: Composable transformations of {{Python}}+{{NumPy}} programs,
  2018.

\bibitem[Buesing et~al.(2016)Buesing, Weber, and
  Mohamed]{buesingStochasticGradientEstimation2016}
L.~Buesing, T.~Weber, and S.~Mohamed.
\newblock Stochastic gradient estimation with finite differences.
\newblock page~4, 2016.

\bibitem[Casella and
  Robert(1996)]{casellaRaoblackwellisationSamplingSchemes1996}
G.~Casella and C.~P. Robert.
\newblock Rao-blackwellisation of sampling schemes.
\newblock \emph{Biometrika}, 83\penalty0 (1):\penalty0 81--94, 1996.
\newblock ISSN 00063444.

\bibitem[Cong et~al.(2019)Cong, Zhao, Bai, and
  Carin]{congGOGradientExpectationbased2019}
Y.~Cong, M.~Zhao, K.~Bai, and L.~Carin.
\newblock {{GO}} gradient for expectation-based objectives.
\newblock page~30, 2019.

\bibitem[Correia et~al.(2020)Correia, Niculae, Aziz, and
  Martins]{correiaEfficientMarginalizationDiscrete2020}
G.~Correia, V.~Niculae, W.~Aziz, and A.~Martins.
\newblock Efficient marginalization of discrete and structured latent variables
  via sparsity.
\newblock In H.~Larochelle, M.~Ranzato, R.~Hadsell, M.~F. Balcan, and H.~Lin,
  editors, \emph{Advances in Neural Information Processing Systems}, volume~33,
  pages 11789--11802. {Curran Associates, Inc.}, 2020.

\bibitem[Dong et~al.(2020)Dong, Mnih, and
  Tucker]{dongDisARMAntitheticGradient2020}
Z.~Dong, A.~Mnih, and G.~Tucker.
\newblock {{DisARM}}: An antithetic gradient estimator for binary latent
  variables.
\newblock In H.~Larochelle, M.~Ranzato, R.~Hadsell, M.~F. Balcan, and H.~Lin,
  editors, \emph{Advances in Neural Information Processing Systems}, volume~33,
  pages 18637--18647. {Curran Associates, Inc.}, 2020.

\bibitem[Farquhar et~al.(2019)Farquhar, Whiteson, and
  Foerster]{farquharLoadedDiCETrading2019}
G.~Farquhar, S.~Whiteson, and J.~N. Foerster.
\newblock Loaded {{DiCE}}: Trading off bias and variance in any-order score
  function gradient estimators for reinforcement learning.
\newblock In H.~M. Wallach, H.~Larochelle, A.~Beygelzimer,
  F.~{d'Alch{\'e}-Buc}, E.~B. Fox, and R.~Garnett, editors, \emph{Advances in
  Neural Information Processing Systems 32: Annual Conference on Neural
  Information Processing Systems 2019, {{NeurIPS}} 2019, December 8-14, 2019,
  Vancouver, {{BC}}, Canada}, pages 8149--8160, 2019.

\bibitem[Figurnov et~al.(2018)Figurnov, Mohamed, and
  Mnih]{figurnovImplicitReparameterizationGradients2018}
M.~Figurnov, S.~Mohamed, and A.~Mnih.
\newblock Implicit reparameterization gradients.
\newblock In S.~Bengio, H.~M. Wallach, H.~Larochelle, K.~Grauman,
  N.~{Cesa-Bianchi}, and R.~Garnett, editors, \emph{Advances in Neural
  Information Processing Systems 31: Annual Conference on Neural Information
  Processing Systems 2018, {{NeurIPS}} 2018, December 3-8, 2018, Montr\'eal,
  Canada}, pages 439--450, 2018.

\bibitem[Finn et~al.(2017)Finn, Abbeel, and
  Levine]{finnModelAgnosticMetaLearningFast2017}
C.~Finn, P.~Abbeel, and S.~Levine.
\newblock Model-agnostic meta-learning for fast adaptation of deep networks.
\newblock In D.~Precup and Y.~W. Teh, editors, \emph{Proceedings of the 34th
  International Conference on Machine Learning, {{ICML}} 2017, Sydney, {{NSW}},
  Australia, 6-11 August 2017}, volume~70 of \emph{Proceedings of Machine
  Learning Research}, pages 1126--1135. {PMLR}, 2017.

\bibitem[Foerster et~al.(2018{\natexlab{a}})Foerster, Farquhar, {Al-Shedivat},
  Rockt{\"a}schel, Xing, and
  Whiteson]{foersterDiCEInfinitelyDifferentiable2018}
J.~Foerster, G.~Farquhar, M.~{Al-Shedivat}, T.~Rockt{\"a}schel, E.~Xing, and
  S.~Whiteson.
\newblock {{DiCE}}: The infinitely differentiable monte carlo estimator.
\newblock In \emph{International {{Conference}} on {{Machine Learning}}}, pages
  1529--1538, 2018{\natexlab{a}}.

\bibitem[Foerster et~al.(2018{\natexlab{b}})Foerster, Chen, {Al-Shedivat},
  Whiteson, Abbeel, and
  Mordatch]{foersterLearningOpponentLearningAwareness2018}
J.~N. Foerster, R.~Y. Chen, M.~{Al-Shedivat}, S.~Whiteson, P.~Abbeel, and
  I.~Mordatch.
\newblock Learning with opponent-learning awareness.
\newblock In E.~Andr{\'e}, S.~Koenig, M.~Dastani, and G.~Sukthankar, editors,
  \emph{Proceedings of the 17th International Conference on Autonomous Agents
  and {{MultiAgent}} Systems, {{AAMAS}} 2018, Stockholm, Sweden, July 10-15,
  2018}, pages 122--130. {International Foundation for Autonomous Agents and
  Multiagent Systems Richland, SC, USA / ACM}, 2018{\natexlab{b}}.

\bibitem[Furmston et~al.(2016)Furmston, Lever, and
  Barber]{furmstonApproximateNewtonMethods2016}
T.~Furmston, G.~Lever, and D.~Barber.
\newblock Approximate newton methods for policy search in markov decision
  processes.
\newblock \emph{Journal of Machine Learning Research}, 17:\penalty0
  227:1--227:51, 2016.

\bibitem[Grathwohl et~al.(2018)Grathwohl, Choi, Wu, Roeder, and
  Duvenaud]{grathwohlBackpropagationVoidOptimizing2018}
W.~Grathwohl, D.~Choi, Y.~Wu, G.~Roeder, and D.~Duvenaud.
\newblock Backpropagation through the void: Optimizing control variates for
  black-box gradient estimation.
\newblock In \emph{6th {{International Conference}} on {{Learning
  Representations}}, {{ICLR}} 2018 - {{Conference Track Proceedings}}}, 2018.

\bibitem[Greensmith et~al.(2004)Greensmith, Bartlett, and
  Baxter]{greensmithVarianceReductionTechniques2004}
E.~Greensmith, P.~L. Bartlett, and J.~Baxter.
\newblock Variance reduction techniques for gradient estimates in reinforcement
  learning.
\newblock \emph{Journal of Machine Learning Research}, 5\penalty0 (9), 2004.

\bibitem[Heidergott and
  {V{\'a}zquez-Abad}(2008)]{heidergottMeasurevaluedDifferentiationMarkov2008}
B.~Heidergott and F.~{V{\'a}zquez-Abad}.
\newblock Measure-valued differentiation for {{Markov}} chains.
\newblock \emph{Journal of Optimization Theory and Applications}, 136\penalty0
  (2):\penalty0 187--209, 2008.

\bibitem[Jang et~al.(2017)Jang, Gu, and
  Poole]{jangCategoricalReparameterizationGumbelsoftmax2017}
E.~Jang, S.~Gu, and B.~Poole.
\newblock Categorical reparameterization with gumbel-softmax.
\newblock In \emph{5th {{International Conference}} on {{Learning
  Representations}}, {{ICLR}} 2017, {{Toulon}}, {{France}}, {{April}} 24-26,
  2017, {{Conference Track Proceedings}}}, 2017.

\bibitem[Kingma and Welling(2014)]{kingmaAutoencodingVariationalBayes2014}
D.~P. Kingma and M.~Welling.
\newblock Auto-encoding variational bayes.
\newblock In \emph{2nd International Conference on Learning Representations,
  {{ICLR}} 2014, Banff, {{AB}}, Canada, April 14-16, 2014, Conference Track
  Proceedings}, 2014.

\bibitem[Kool et~al.(2019{\natexlab{a}})Kool, {van Hoof}, and
  Welling]{koolAttentionLearnSolve2019}
W.~Kool, H.~{van Hoof}, and M.~Welling.
\newblock Attention, learn to solve routing problems!
\newblock In \emph{7th International Conference on Learning Representations,
  {{ICLR}} 2019, New Orleans, {{LA}}, {{USA}}, May 6-9, 2019}.
  {OpenReview.net}, 2019{\natexlab{a}}.

\bibitem[Kool et~al.(2019{\natexlab{b}})Kool, {van Hoof}, and
  Welling]{koolBuyREINFORCESamples2019}
W.~Kool, H.~{van Hoof}, and M.~Welling.
\newblock Buy 4 {{REINFORCE}} samples, get a baseline for free!
\newblock page~14, 2019{\natexlab{b}}.

\bibitem[Kool et~al.(2019{\natexlab{c}})Kool, {van Hoof}, and
  Welling]{koolStochasticBeamsWhere2019}
W.~Kool, H.~{van Hoof}, and M.~Welling.
\newblock Stochastic beams and where to find them: The gumbel-top-k trick for
  sampling sequences without replacement.
\newblock In K.~Chaudhuri and R.~Salakhutdinov, editors, \emph{Proceedings of
  the 36th International Conference on Machine Learning, {{ICML}} 2019, 9-15
  June 2019, Long Beach, California, {{USA}}}, volume~97 of \emph{Proceedings
  of Machine Learning Research}, pages 3499--3508. {PMLR}, 2019{\natexlab{c}}.

\bibitem[Kool et~al.(2020{\natexlab{a}})Kool, {van Hoof}, and
  Welling]{koolAncestralGumbeltopkSampling2020}
W.~Kool, H.~{van Hoof}, and M.~Welling.
\newblock Ancestral gumbel-top-k sampling for sampling without replacement.
\newblock \emph{Journal of Machine Learning Research}, 21\penalty0
  (47):\penalty0 1--36, 2020{\natexlab{a}}.

\bibitem[Kool et~al.(2020{\natexlab{b}})Kool, {van Hoof}, and
  Welling]{koolEstimatingGradientsDiscrete2020}
W.~Kool, H.~{van Hoof}, and M.~Welling.
\newblock Estimating gradients for discrete random variables by sampling
  without replacement.
\newblock page~28, 2020{\natexlab{b}}.

\bibitem[LeCun and Cortes(2010)]{lecunMNISTHandwrittenDigit2010}
Y.~LeCun and C.~Cortes.
\newblock {{MNIST}} handwritten digit database.
\newblock 2010.

\bibitem[Li et~al.(2017)Li, Zhou, Chen, and Li]{liMetaSGDLearningLearn2017}
Z.~Li, F.~Zhou, F.~Chen, and H.~Li.
\newblock Meta-{{SGD}}: Learning to {{Learn Quickly}} for {{Few}}-{{Shot
  Learning}}.
\newblock \emph{arXiv:1707.09835 [cs]}, Sept. 2017.

\bibitem[Liang et~al.(2019)Liang, Norouzi, Berant, Le, and
  Lao]{liangMemoryAugmentedPolicy2019}
C.~Liang, M.~Norouzi, J.~Berant, Q.~Le, and N.~Lao.
\newblock Memory {{Augmented Policy Optimization}} for {{Program Synthesis}}
  and {{Semantic Parsing}}.
\newblock \emph{arXiv:1807.02322 [cs, stat]}, Jan. 2019.

\bibitem[Lillicrap et~al.(2016)Lillicrap, Hunt, Pritzel, Heess, Erez, Tassa,
  Silver, and Wierstra]{lillicrapContinuousControlDeep2016}
T.~P. Lillicrap, J.~J. Hunt, A.~Pritzel, N.~Heess, T.~Erez, Y.~Tassa,
  D.~Silver, and D.~Wierstra.
\newblock Continuous control with deep reinforcement learning.
\newblock In \emph{4th {{International Conference}} on {{Learning
  Representations}}, {{ICLR}} 2016, {{San Juan}}, {{Puerto Rico}}, {{May}} 2-4,
  2016, {{Conference Track Proceedings}}}, 2016.

\bibitem[Liu et~al.(2019)Liu, Regier, Tripuraneni, Jordan, and
  McAuliffe]{liuRaoBlackwellizedStochasticGradients2019}
R.~Liu, J.~Regier, N.~Tripuraneni, M.~I. Jordan, and J.~D. McAuliffe.
\newblock Rao-blackwellized stochastic gradients for discrete distributions.
\newblock In K.~Chaudhuri and R.~Salakhutdinov, editors, \emph{Proceedings of
  the 36th International Conference on Machine Learning, {{ICML}} 2019, 9-15
  June 2019, Long Beach, California, {{USA}}}, volume~97 of \emph{Proceedings
  of Machine Learning Research}, pages 4023--4031. {PMLR}, 2019.

\bibitem[Maddison et~al.(2017)Maddison, Mnih, and
  Teh]{maddisonConcreteDistributionContinuous2017}
C.~J. Maddison, A.~Mnih, and Y.~W. Teh.
\newblock The concrete distribution: A continuous relaxation of discrete random
  variables.
\newblock In \emph{5th {{International Conference}} on {{Learning
  Representations}}, {{ICLR}} 2017, {{Toulon}}, {{France}}, {{April}} 24-26,
  2017, {{Conference Track Proceedings}}}, 2017.

\bibitem[Mahmood et~al.(2014)Mahmood, Van~Hasselt, and
  Sutton]{mahmoodWeightedImportanceSampling2014}
A.~R. Mahmood, H.~Van~Hasselt, and R.~S. Sutton.
\newblock Weighted importance sampling for off-policy learning with linear
  function approximation.
\newblock In \emph{{{NIPS}}}, pages 3014--3022, 2014.

\bibitem[Mao et~al.(2019)Mao, Foerster, Rockt{\"a}schel, {Al-Shedivat},
  Farquhar, and Whiteson]{maoBaselineAnyOrder2019}
J.~Mao, J.~Foerster, T.~Rockt{\"a}schel, M.~{Al-Shedivat}, G.~Farquhar, and
  S.~Whiteson.
\newblock A baseline for any order gradient estimation in stochastic
  computation graphs.
\newblock In K.~Chaudhuri and R.~Salakhutdinov, editors, \emph{Proceedings of
  the 36th International Conference on Machine Learning}, volume~97 of
  \emph{Proceedings of Machine Learning Research}, pages 4343--4351, {Long
  Beach, California, USA}, June 2019. {PMLR}.

\bibitem[Mnih and Gregor(2014)]{mnihNeuralVariationalInference2014}
A.~Mnih and K.~Gregor.
\newblock Neural variational inference and learning in belief networks.
\newblock In E.~P. Xing and T.~Jebara, editors, \emph{Proceedings of the 31st
  International Conference on Machine Learning}, volume~32 of \emph{Proceedings
  of Machine Learning Research}, pages 1791--1799, {Bejing, China}, June 2014.
  {PMLR}.

\bibitem[Mnih and Rezende(2016)]{mnihVariationalInferenceMonte2016}
A.~Mnih and D.~J. Rezende.
\newblock Variational inference for monte carlo objectives.
\newblock In \emph{33rd {{International Conference}} on {{Machine Learning}},
  {{ICML}} 2016}, 2016.
\newblock ISBN 978-1-5108-2900-8.

\bibitem[Mnih et~al.(2016)Mnih, Badia, Mirza, Graves, Lillicrap, Harley,
  Silver, and Kavukcuoglu]{mnihAsynchronousMethodsDeep2016}
V.~Mnih, A.~P. Badia, M.~Mirza, A.~Graves, T.~P. Lillicrap, T.~Harley,
  D.~Silver, and K.~Kavukcuoglu.
\newblock Asynchronous methods for deep reinforcement learning.
\newblock In \emph{Proceedings of the 33nd {{International Conference}} on
  {{Machine Learning}}, {{ICML}} 2016, {{New York City}}, {{NY}}, {{USA}},
  {{June}} 19-24, 2016}, pages 1928--1937, 2016.

\bibitem[Mohamed et~al.(2020)Mohamed, Rosca, Figurnov, and
  Mnih]{mohamedMonteCarloGradient2020}
S.~Mohamed, M.~Rosca, M.~Figurnov, and A.~Mnih.
\newblock Monte carlo gradient estimation in machine learning.
\newblock \emph{Journal of Machine Learning Research}, 21:\penalty0
  132:1--132:62, 2020.

\bibitem[Parmas(2018)]{parmasTotalStochasticGradient2018}
P.~Parmas.
\newblock Total stochastic gradient algorithms and applications in
  reinforcement learning.
\newblock In S.~Bengio, H.~Wallach, H.~Larochelle, K.~Grauman,
  N.~{Cesa-Bianchi}, and R.~Garnett, editors, \emph{Advances in Neural
  Information Processing Systems}, volume~31. {Curran Associates, Inc.}, 2018.

\bibitem[Paszke et~al.(2019)Paszke, Gross, Massa, Lerer, Bradbury, Chanan,
  Killeen, Lin, Gimelshein, Antiga, Desmaison, Kopf, Yang, DeVito, Raison,
  Tejani, Chilamkurthy, Steiner, Fang, Bai, and
  Chintala]{paszkePyTorchImperativeStyle2019}
A.~Paszke, S.~Gross, F.~Massa, A.~Lerer, J.~Bradbury, G.~Chanan, T.~Killeen,
  Z.~Lin, N.~Gimelshein, L.~Antiga, A.~Desmaison, A.~Kopf, E.~Yang, Z.~DeVito,
  M.~Raison, A.~Tejani, S.~Chilamkurthy, B.~Steiner, L.~Fang, J.~Bai, and
  S.~Chintala.
\newblock {{PyTorch}}: An imperative style, high-performance deep learning
  library.
\newblock In H.~Wallach, H.~Larochelle, A.~Beygelzimer,
  F.~{d{\textbackslash}textquotesingle Alch{\'e}-Buc}, E.~Fox, and R.~Garnett,
  editors, \emph{Advances in {{Neural Information Processing Systems}} 32},
  pages 8024--8035. {Curran Associates, Inc.}, 2019.

\bibitem[Pflug(1989)]{pflugSamplingDerivativesProbabilities1989}
G.~C. Pflug.
\newblock Sampling derivatives of probabilities.
\newblock \emph{Computing}, 1989.
\newblock ISSN 0010485X.
\newblock \doi{10.1007/BF02243227}.

\bibitem[Ranganath et~al.(2014)Ranganath, Gerrish, and
  Blei]{ranganathBlackBoxVariational2014}
R.~Ranganath, S.~Gerrish, and D.~M. Blei.
\newblock Black box variational inference.
\newblock In \emph{Proceedings of the Seventeenth International Conference on
  Artificial Intelligence and Statistics, {{AISTATS}} 2014, Reykjavik, Iceland,
  April 22-25, 2014}, volume~33 of \emph{{{JMLR}} Workshop and Conference
  Proceedings}, pages 814--822. {JMLR.org}, 2014.

\bibitem[Rennie et~al.(2017)Rennie, Marcheret, Mroueh, Ross, and
  Goel]{rennieSelfCriticalSequenceTraining2017}
S.~J. Rennie, E.~Marcheret, Y.~Mroueh, J.~Ross, and V.~Goel.
\newblock Self-{{Critical Sequence Training}} for {{Image Captioning}}.
\newblock In \emph{2017 {{IEEE Conference}} on {{Computer Vision}} and
  {{Pattern Recognition}} ({{CVPR}})}, pages 1179--1195, {Honolulu, HI}, July
  2017. {IEEE}.
\newblock ISBN 978-1-5386-0457-1.
\newblock \doi{10.1109/CVPR.2017.131}.

\bibitem[Rezende et~al.(2014)Rezende, Mohamed, and
  Wierstra]{rezendeStochasticBackpropagationApproximate2014}
D.~J. Rezende, S.~Mohamed, and D.~Wierstra.
\newblock Stochastic backpropagation and approximate inference in deep
  generative models.
\newblock In E.~P. Xing and T.~Jebara, editors, \emph{Proceedings of the 31st
  International Conference on Machine Learning}, volume~32 of \emph{Proceedings
  of Machine Learning Research}, pages 1278--1286, {Bejing, China}, June 2014.
  {PMLR}.

\bibitem[Rubinstein and Kroese(2016)]{rubinsteinSimulationMonteCarlo2016}
R.~Y. Rubinstein and D.~P. Kroese.
\newblock \emph{Simulation and the Monte Carlo Method}, volume~10.
\newblock {John Wiley \& Sons}, 2016.

\bibitem[Schulman et~al.(2015)Schulman, Heess, Weber, and
  Abbeel]{schulmanGradientEstimationUsing2015}
J.~Schulman, N.~Heess, T.~Weber, and P.~Abbeel.
\newblock Gradient estimation using stochastic computation graphs.
\newblock In \emph{Advances in {{Neural Information Processing Systems}}},
  2015.

\bibitem[Spall et~al.(1992)]{spallMultivariateStochasticApproximation1992}
J.~C. Spall et~al.
\newblock Multivariate stochastic approximation using a simultaneous
  perturbation gradient approximation.
\newblock \emph{IEEE transactions on automatic control}, 37\penalty0
  (3):\penalty0 332--341, 1992.

\bibitem[Tucker et~al.(2017)Tucker, Mnih, Maddison, Lawson, and
  {Sohl-Dickstein}]{tuckerREBARLowvarianceUnbiased2017}
G.~Tucker, A.~Mnih, C.~J. Maddison, D.~Lawson, and J.~{Sohl-Dickstein}.
\newblock {{REBAR}}: Low-variance, unbiased gradient estimates for discrete
  latent variable models.
\newblock In \emph{Advances in {{Neural Information Processing Systems}}},
  2017.

\bibitem[Weber et~al.(2019)Weber, Heess, Buesing, and
  Silver]{weberCreditAssignmentTechniques2019}
T.~Weber, N.~Heess, L.~Buesing, and D.~Silver.
\newblock Credit assignment techniques in stochastic computation graphs.
\newblock In K.~Chaudhuri and M.~Sugiyama, editors, \emph{The 22nd
  International Conference on Artificial Intelligence and Statistics,
  {{AISTATS}} 2019, 16-18 April 2019, Naha, Okinawa, Japan}, volume~89 of
  \emph{Proceedings of Machine Learning Research}, pages 2650--2660. {PMLR},
  2019.

\bibitem[Williams(1992)]{williamsSimpleStatisticalGradientfollowing1992}
R.~J. Williams.
\newblock Simple statistical gradient-following algorithms for connectionist
  reinforcement learning.
\newblock \emph{Machine Learning}, 1992.
\newblock ISSN 0885-6125.
\newblock \doi{10.1007/bf00992696}.

\bibitem[Xu et~al.(2019)Xu, Zu, Zhang, Zhou, and
  Feng]{xuBackpropQGeneralizedBackpropagation2019}
X.~Xu, S.~Zu, Y.~Zhang, H.~Zhou, and W.~Feng.
\newblock Backprop-{{Q}}: Generalized {{Backpropagation}} for {{Stochastic
  Computation Graphs}}.
\newblock \emph{arXiv:1807.09511 [cs, stat]}, Jan. 2019.

\bibitem[Yin and Zhou(2019)]{yinARMAugmentREINFORCEMergeGradient2019}
M.~Yin and M.~Zhou.
\newblock {{ARM}}: Augment-reinforce-merge gradient for stochastic binary
  networks.
\newblock In \emph{7th International Conference on Learning Representations,
  {{ICLR}} 2019, New Orleans, {{LA}}, {{USA}}, May 6-9, 2019}.
  {OpenReview.net}, 2019.

\bibitem[Yin et~al.(2019)Yin, Yue, and
  Zhou]{yinARSMAugmentREINFORCESwapMergeEstimator2019}
M.~Yin, Y.~Yue, and M.~Zhou.
\newblock {{ARSM}}: Augment-reinforce-swap-merge estimator for gradient
  backpropagation through categorical variables.
\newblock In K.~Chaudhuri and R.~Salakhutdinov, editors, \emph{Proceedings of
  the 36th International Conference on Machine Learning, {{ICML}} 2019, 9-15
  June 2019, Long Beach, California, {{USA}}}, volume~97 of \emph{Proceedings
  of Machine Learning Research}, pages 7095--7104. {PMLR}, 2019.

\end{thebibliography}

\appendix

\newcommand{\sampledice}{\bx_{\stochnode, i_{\stochnode}}}
\newcommand{\gradest}{g\left(i, \left\{ \sample, \costresult_i \right\}_{i=1}^\amt, \bx_{\stochastic_{\before\stochnode}} \right)}
\newcommand{\sampleprob}{\mu_\stochnode(\bx_\stochnode|\bx_{\pa(\stochnode)})}
\newcommand{\sampleprobof}[1]{\mu_\stochnode(\bx_\stochnode=#1|\bx_{\pa(\stochnode)})}
\newcommand{\functionAPP}{\func_\detnode(\bx_{\pa(\detnode)})}

\section{Forward-mode evaluation}
\label{sec:appendix-forward-mode}
In this section, we define several operators that we will use to mathematically define operators used within deep learning
to implement gradient estimators.

To define these, we will need to distinguish how deep learning libraries evaluate their functions.
~\cite{foersterDiCEInfinitelyDifferentiable2018} handles this using a different kind of equality, denoted $\mapsto$.
Unfortunately, it is not formally introduced, making it unclear as to what rules are allowed with this equality.
For instance, they define the DiCE operator as
\[
\begin{array}{l}\text { 1. }\magic(f(\theta)) \mapsto 1 \\ \text { 2. } \nabla_{\theta} \magic(f(\theta))=\magic(f(\theta)) \nabla_\theta f(\theta)\end{array}
\]
However, without a clearly defined meaning of $\mapsto$ `equality under evaluation', it is unclear whether the following is allowed:
\[
    \nabla_{\theta} \magic(f(\theta)) \mapsto \nabla_{\theta} 1 = 0
\]
This would lead to a contradiction, as by definition
\[
    \nabla_{\theta} \magic(f(\theta)) = \magic(f(\theta)) \nabla_\theta f(\theta) \mapsto \nabla_\theta f(\theta)
\].
We first introduce an unambigiuous formulation for forward mode evaluation that does not allow such inconsistencies.

\begin{definition}
    The \emph{stop-grad} operator $\bot$ is a function such that $\nabla_x\bot(x)=0$.
    The \emph{forward-mode} operator $\forward{}$ is a function such that, for well formed formulas $a$ and $b$,
    \begin{enumerate}
        \item $\forward{\bot(a)}=\forward{a}$
        \item $\forward{a+b}=\forward{a}+\forward{b}$
        \item $\forward{a\cdot b}=\forward{a} \cdot \forward{b}$
        \item $\forward{a^b}=\forward{a}^{ \forward{b}}$
        \item $\forward{c}=c$, if $c$ is a constant or a variable.
        \item $\forward{\forward{a}}=\forward{a}$
    \end{enumerate}
    Additionally, we define the DiCE operator $\magic(x)=\exp(x-\bot(x))$
\end{definition}
When computing the results of a function $f(x)$, Deep Learning libraries instead compute $\forward{f(x)}$.
Importantly, $\forward{\nabla_x f(x)}$ does not always equal $\nabla_x\forward{f(x)}$.
For example, $\forward{\nabla_x \bot(f(x))}=\forward{0}=0$, while $\nabla_x \forward{\bot(f(x))}=\nabla_x \forward{f(x)}$.

In the last example, the derivative will first have to be rewritten to find a closed-form formula that does not contain the $\forward{}$ operator.
Furthermore, $\bot(f(x))$ only evaluates to a closed-form formula if it is reduced using derivation, or if it is enclosed in $\forward{}$.

We note that $\mathbb{E}_{p(x)}[\forward{f(x)}]=\forward{\mathbb{E}_{p(x)}[f(x)]}$ for both continuous and discrete distributions $p(x)$ if $\forward{p(x)}=p(x)$. This is easy to see for discrete distributions since these are weighted sums over an amount of elements. For continuous distributions we can use the Riemann integral definition. 
\begin{align}
    \mathbb{E}_{p(x)}[\forward{f(x)}] = \int p(x) \forward{f(x)}
\end{align}
\begin{proposition}
    1: $\forward{\magic(f(x))}=1$ and 2: $\nabla_x \magic(f(x))=\magic(f(x))\cdot \nabla_x f(x)$
\end{proposition}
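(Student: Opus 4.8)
The plan is to unfold the definition $\magic(x)=\exp(x-\bot(x))$ and apply the algebraic rules for $\forward{}$ and the defining property of $\bot$ directly; both claims then fall out with essentially no computation.

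For the first claim, I would write $\forward{\magic(f(x))}=\forward{\exp(f(x)-\bot(f(x)))}$ and treat $\exp(\cdot)$ as exponentiation of the constant $e$, so that rule (4) for $\forward{}$ gives $\forward{e^{f(x)-\bot(f(x))}}=e^{\forward{f(x)-\bot(f(x))}}$ (using rule (5) on the constant base $e$). Rule (2) then splits the exponent as $\forward{f(x)}-\forward{\bot(f(x))}$ (subtraction handled via rules (2) and (3) on the constant $-1$), and rule (1) collapses $\forward{\bot(f(x))}=\forward{f(x)}$. The exponent is therefore $0$, and $e^{0}=1$.

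For the second claim, I would apply the ordinary chain rule symbolically to $\nabla_x\exp(f(x)-\bot(f(x)))$, obtaining $\exp(f(x)-\bot(f(x)))\cdot\nabla_x\big(f(x)-\bot(f(x))\big)$. The key sub-step is that $\nabla_x\bot(f(x))=0$: this follows from the defining property $\nabla_y\bot(y)=0$ together with the chain rule, since $\nabla_x\bot(f(x))=\bot'(f(x))\,\nabla_x f(x)$ and $\bot'\equiv 0$. Hence $\nabla_x\big(f(x)-\bot(f(x))\big)=\nabla_x f(x)$, and recognizing $\exp(f(x)-\bot(f(x)))=\magic(f(x))$ yields $\nabla_x\magic(f(x))=\magic(f(x))\cdot\nabla_x f(x)$.

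The only real subtlety — which I would flag as the main obstacle — is keeping straight the order of operations between the symbolic derivative $\nabla_x$ and the evaluation operator $\forward{}$: the derivative in claim 2 is taken on the unevaluated expression, where $\bot$ has not yet been replaced by anything, so the chain-rule manipulation of $\bot(f(x))$ is the legitimate one rather than first evaluating $\bot$ away. It is worth remarking, as a consistency check, that differentiating the closed form from claim 2 and only then applying $\forward{}$ recovers $\forward{\nabla_x f(x)}$, which is exactly why the discussion preceding the proposition insists that $\forward{\nabla_x\magic(f(x))}$ must not be conflated with $\nabla_x\forward{\magic(f(x))}=\nabla_x 1=0$.
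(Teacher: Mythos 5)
Your proof is correct and follows essentially the same route as the paper's: unfold $\magic(f(x))=\exp(f(x)-\bot(f(x)))$, push $\forward{}$ through the exponential and use $\forward{\bot(a)}=\forward{a}$ to cancel the exponent for claim 1, and apply the chain rule with $\bot'\equiv 0$ (so $\nabla_x\bot(f(x))=0$) for claim 2. Your closing remark about not conflating $\forward{\nabla_x\magic(f(x))}$ with $\nabla_x\forward{\magic(f(x))}$ matches the paper's surrounding discussion.
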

\begin{proof}
    \begin{enumerate}
    \item 
        \begin{align*}
            \forward{\magic(f(x))} &= \forward{\exp(f(x)-\bot(f(x)))} \\
            &= \exp(\forward{f(x)}-\forward{\bot(f(x))})\\
            &= \exp(f(x)-f(x))=1
        \end{align*}
    \item
        \begin{align*}
        \nabla_x \magic(f(x)) &= \nabla_x\exp(f(x)-\bot(f(x))) \\
            &= \exp(f(x)-\bot(f(x))) \nabla_x (f(x)-\bot(f(x)))\\
            &= \magic(f(x))( \nabla_x f(x) - \nabla_x \bot(f(x)) \nabla_x f(x)) \\
            &= \magic(f(x)) (\nabla_x f(x) - 0\cdot\nabla_x f(x)) = \magic(f(x)) \nabla_x f(x)
        \end{align*}
    \end{enumerate} 
\end{proof}
Furthermore, unlike in the DiCE paper, with this notation $\forward{\nabla_x \magic(f(x))}$ unambiguously evaluates to $\forward{\nabla_x f(x)}$, as
$\forward{\nabla_x \magic(f(x))}=\forward{\magic(f(x))\nabla_x f(x)}=\forward{\magic(f(x))}\cdot \forward{\nabla_x f(x)}=\forward{\nabla_x f(x)}$.
Note that, although this is not a closed-form formula, by finding a closed-form formula for $\nabla_x f(x)$, this can be
reduced to $\nabla_x f(x)$.

\begin{proposition}
    \label{prop:DICE}
    For any two functions $f(x)$ and $l(x)$, it holds that  for all $n\in (0, 1, 2, ...)$,
    \begin{equation}
        \forward {\nabla^{(n)}_{x} \magic( l(x) f(x) ) } = \forward{ \g^{(n)}(x)}.
    \end{equation}
    where $\g^{(n)}(x)=\nabla_x \g^{(n-1)}(x)+\g^{(n-1)}(x)\nabla_x l(x)$ for $n>0$, and $\g^{(0)}(x)=f(x)$.
\end{proposition}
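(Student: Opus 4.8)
The plan is to establish a stronger, purely symbolic identity first and only evaluate at the very end. Concretely, I would prove by induction on $n$ that
\[
\nabla_x^{(n)}\big(\magic(l(x))\,f(x)\big) \;=\; \magic(l(x))\,\g^{(n)}(x),
\]
and then apply the forward-mode operator $\forward{}$ to both sides. Keeping $\magic(l(x))$ unresolved all the way through the induction is the essential discipline here: collapsing $\magic$ to $1$ inside a derivative is precisely the kind of step that produced the inconsistency flagged in the discussion of DiCE's $\mapsto$, so the symbolic identity must be in hand before any application of $\forward{}$.

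The base case $n=0$ is immediate because $\g^{(0)}(x)=f(x)$. For the inductive step, suppose $\nabla_x^{(n-1)}(\magic(l(x))f(x)) = \magic(l(x))\,\g^{(n-1)}(x)$. Differentiating once more and using the product rule gives
\[
\nabla_x\!\big(\magic(l(x))\,\g^{(n-1)}(x)\big) = \big(\nabla_x\magic(l(x))\big)\g^{(n-1)}(x) + \magic(l(x))\,\nabla_x\g^{(n-1)}(x).
\]
Substituting the \magicbox{} derivative rule $\nabla_x\magic(l(x)) = \magic(l(x))\,\nabla_x l(x)$ from the preceding proposition and factoring $\magic(l(x))$ out front yields $\magic(l(x))\big(\nabla_x\g^{(n-1)}(x) + \g^{(n-1)}(x)\nabla_x l(x)\big)$, which equals $\magic(l(x))\,\g^{(n)}(x)$ by the defining recursion of $\g^{(n)}$. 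This closes the induction.

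Finally I would apply $\forward{}$ to the identity, invoking multiplicativity of the forward-mode operator together with $\forward{\magic(l(x))}=1$:
\[
\forward{\nabla_x^{(n)}\big(\magic(l(x))f(x)\big)} = \forward{\magic(l(x))}\cdot\forward{\g^{(n)}(x)} = \forward{\g^{(n)}(x)},
\]
which is the claim. I do not expect a real obstacle; the only points needing care are bookkeeping the order of operations so that all derivatives are taken symbolically and $\forward{}$ is applied exactly once at the end, and --- if $\nabla_x^{(n)}$ is read for vector-valued $x$ --- tracking the tensor indices in the product rule, which leaves the argument unchanged.
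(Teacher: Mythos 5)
Your proposal is correct and takes essentially the same route as the paper: both establish the purely symbolic identity $\nabla_x^{(n)}\big(\magic(l(x))f(x)\big)=\magic(l(x))\,\g^{(n)}(x)$ by induction, using the product rule together with $\nabla_x\magic(l(x))=\magic(l(x))\nabla_x l(x)$, and only afterwards apply $\forward{}$ with $\forward{\magic(l(x))}=1$ to conclude. The paper simply writes the inductive chain in the reverse direction (starting from $\magic(l(x))\g^{(n+1)}(x)$), which is the same argument.
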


For this proof, we use a similar argument as in \cite{foersterDiCEInfinitelyDifferentiable2018}.

\begin{proof}
    

    First, we show that $\magic(l(x)) \g^{(n)}(x) = \nabla_x^{(n)}\magic(l(x))f(x)$.     We start off with the base case, $n=0$. Then, 
    $\magic(l(x))\g^{(0)}(x) = \magic(l(x))f(x) $.

    Next, assume the proposition holds for $n$, that is, $\magic(l(x)) \g^{(n)}(x) = \nabla_x^{(n)} \magic(\multipl{i})f(x)$. Consider $n+1$. 

    \begin{align}
        \magic(l(x)) \g^{(n+1)}(x) &= \magic(l(x)) ( \nabla_x \g^{(n)}(x) + g^{(n)}(x)\nabla_x l(x)) \\
        &= \nabla_x \magic(l(x))  \g^{(n)}(x) \\
        &= \nabla_x( \nabla_x^{(n)}(\magic(l(x))f(x)) ) \\
        &= \nabla_x^{(n + 1)}\magic(l(x))f(x)
    \end{align}

    Where from line 1 to 2 we use the DiCE proposition in the reversed direction. From 2 to 3 we use the inductive hypothesis.


    
    We use this result, $\magic(l(x)) \g^{(n)}(x) = \nabla_x^{(n)}\magic(l(x))f(x)$, to prove our proposition. Since $\forward{a}=1\cdot \forward{a}=\forward{\magic(\multipl{i})}\forward{a}=\forward{\magic(\multipl{i})a}$,
    \begin{align}
        \forward {\g^{(n)}(x)} = \forward{ \magic(l(x)) \g^{(n)}(x) }= \forward {\nabla_\node^{(n)} \magic(l(x))f(x) } 
    \end{align}
\end{proof}
\begin{definition}
    We say a function $f$ is \emph{identical under evaluation} if for all $n\in (0, 1, 2, ...)$, $\forward{\nabla_x^{(n)} f(x)} = \nabla_x^{(n)} f(x)$. Furthermore, we say that two functions $f$ and $g$ are \emph{equivalent under evaluation}, denoted $f\equivforward g$, if for all $n\in (0, 1, 2, ...)$, $\forward{\nabla_x^{(n)} f(x)} = \forward{\nabla_x^{(n)} g(x)}$.
\end{definition}

Every function that does not contain a stop-grad operator ($\bot$) is identical under evaluation, although  functions that are identical under evaluation can have stop-grad operators (for example, consider $f(x)\equivforward f(x) + \bot(f(x) - f(x))$). Note that $\forward{f(x)}=f(x)$ does not necessarily mean that $f$ is identical under evaluation, since for instance the function $f'(x)=\magic(2x)f(x)$ has $\forward{f'(x)}=f(x)$, but $\forward{\nabla_x f'(x)} = \forward{\magic(2x)(\nabla_x f(x) + 2)} = \nabla_x f(x) + 2 \neq \nabla_x f'(x) = \magic(2x)(\nabla_x f(x) + 2)$.

\begin{proposition}
    \label{prop:equiv-eval}
    If $f(x)$ and $l(x)$ are identical under evaluation, then all $\g^{(n)}(x)$  from $n=0, ..., n$ as defined in Proposition \ref{prop:DICE} are also identical under evaluation. 
\end{proposition}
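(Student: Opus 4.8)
The plan is to prove the statement by induction on $n$, after first isolating the structural fact that drives the argument: the collection of functions that are identical under evaluation is closed under addition, under multiplication, and under differentiation with respect to $x$. Granting this closure property, the induction is immediate. The base case $\g^{(0)}(x)=f(x)$ is identical under evaluation by hypothesis. For the step, assume $\g^{(n)}(x)$ is identical under evaluation; then $\nabla_x \g^{(n)}(x)$ is identical under evaluation by closure under differentiation, and since $l(x)$ is assumed identical under evaluation, so is $\nabla_x l(x)$, hence so is the product $\g^{(n)}(x)\nabla_x l(x)$; their sum is exactly $\g^{(n+1)}(x)$, which is therefore identical under evaluation as well. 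Running this from $0$ up to the desired $n$ gives the claim.

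So the real content is the closure lemma. For addition, if $a$ and $b$ are identical under evaluation then, using linearity of $\nabla_x^{(m)}$ and axiom~2 of $\forward{}$, $\forward{\nabla_x^{(m)}(a+b)}=\forward{\nabla_x^{(m)}a}+\forward{\nabla_x^{(m)}b}=\nabla_x^{(m)}a+\nabla_x^{(m)}b=\nabla_x^{(m)}(a+b)$ for every $m$. For differentiation, if $a$ is identical under evaluation then $\forward{\nabla_x^{(m)}(\nabla_x a)}=\forward{\nabla_x^{(m+1)}a}=\nabla_x^{(m+1)}a=\nabla_x^{(m)}(\nabla_x a)$, so $\nabla_x a$ is identical under evaluation. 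The multiplication case is the one that needs care: by the general Leibniz rule, $\nabla_x^{(m)}(ab)=\sum_{j=0}^{m}\binom{m}{j}\,\nabla_x^{(j)}a\cdot\nabla_x^{(m-j)}b$, and applying $\forward{}$ together with axioms~2 and~3 gives $\forward{\nabla_x^{(m)}(ab)}=\sum_{j=0}^{m}\binom{m}{j}\,\forward{\nabla_x^{(j)}a}\cdot\forward{\nabla_x^{(m-j)}b}$. Since $a$ and $b$ are identical under evaluation each factor reduces, recovering $\sum_{j=0}^{m}\binom{m}{j}\,\nabla_x^{(j)}a\cdot\nabla_x^{(m-j)}b=\nabla_x^{(m)}(ab)$, so $ab$ is identical under evaluation.

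I expect the main obstacle to be bookkeeping rather than depth: one must be scrupulous that $\forward{}$ is pushed only through the operations for which the definition supplies a rule, and that the Leibniz expansion is carried out \emph{before} the $\forward{}$ operator is resolved, since $\forward{\nabla_x(\cdot)}$ and $\nabla_x\forward{(\cdot)}$ need not agree when stop-grad is present. No step uses anything beyond the $\forward{}$ axioms and ordinary calculus, so once the closure lemma is stated cleanly the induction closes without further work.
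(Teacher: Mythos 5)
Your proof is correct and follows essentially the same route as the paper's: the same induction on $n$, with the inductive step handled by an index shift for $\forward{\nabla_x^{(m)}\nabla_x \g^{(n)}(x)}$, the general Leibniz rule for the product $\g^{(n)}(x)\nabla_x l(x)$, and linearity for the sum. Packaging these three computations as an explicit closure lemma (closure of the class of functions identical under evaluation under addition, multiplication, and differentiation) is a clean reorganization, but the mathematical content coincides with the paper's inline argument.
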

\begin{proof}
    Consider $n=0$. Then $g^{(0)}(x) = f(x)$. Since $f(x)$ is identical under evaluation, $g^{(0)}$ is as well. 
    
    Assume the proposition holds for $n$, and consider $n+1$. Let $m$ be any positive number. $\forward{\nabla_x^{(m)} g^{(n+1)}(x)}=\forward{\nabla_x^{(m)}(\nabla_x \g^{(n)}(x) + \g^{(n)}(x) \nabla_x l(x))}$. Since $\g^{(n)}(x)$ is identical under evaluation by the inductive hypothesis, $\forward{\nabla_x^{(m)}\nabla_x \g^{(n)}(x)} = \forward{\nabla_x^{(m+1)}\g^{(n)}(x)}=\nabla_x^{(m+1)}\g^{(n)}(x)$. 
    
    Next, using the general Leibniz rule, we find that $\forward{\nabla_x^{(m)}\g^{(n)}(x) \nabla_x l(x)}=\sum_{j=0}^m{m \choose j\nabla_x^{(m-j)}\g^{(n)}(x)}\forward{\nabla_x^{(j+1)}l(x)}$. Since both $\g^{(n)}(x)$ and $l(x)$ are identical under evaluation, this is equal to $\sum_{j=0}^m{m \choose j} \nabla_x^{(m-j)}\g^{(n)}(x)\nabla_x^{(j+1)}l(x)) = \nabla_x^{(m)}\g^{(n)}(x) \nabla_x l(x)$. 
    
    Therefore, $\forward{\nabla_x^{(m)} g^{(n+1)}(x)} = \nabla_x^{(m)}(\nabla_x \g^{(n)}(x) + \g^{(n)}(x) \nabla_x l(x)) = \nabla_x^{(m)}\g^{(n+1)}(x)$, which shows that $\g^{(n+1)}(x)$ is identical under evaluation. 
\end{proof}

We next introduce a very useful proposition that we will use to prove unbiasedness of the \emph{Storchastic} framework. This result was first used without proof in \cite{farquharLoadedDiCETrading2019}.

\begin{proposition}
    \label{prop:DiCE_multiply}
    For any three functions $l_1(x)$, $l_2(x)$ and $f(x)$, $\magic(l_1(x)+l_2(x)) f(x) \equivforward \magic(l_1(x)) \magic(l_2(x)) f(x)$. That is, for all $n\in (0, 1, 2, ...)$.

    \begin{equation}
        \forward{\nabla_x^{(n)} \magic(l_1(x)+l_2(x)) f(x)} = \forward{\nabla_x^{(n)} \magic(l_1(x)) \magic(l_2(x)) f(x)}
    \end{equation}
\end{proposition}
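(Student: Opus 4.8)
The plan is to reduce Proposition~\ref{prop:DiCE_multiply} to the single-\magicbox{} result of Proposition~\ref{prop:DICE}. Observe that $\magic(l_1(x))\magic(l_2(x)) = \exp(l_1(x) - \bot(l_1(x)))\exp(l_2(x) - \bot(l_2(x))) = \exp\big((l_1(x)+l_2(x)) - (\bot(l_1(x)) + \bot(l_2(x)))\big)$. This is \emph{almost} $\magic(l_1(x) + l_2(x)) = \exp\big((l_1(x)+l_2(x)) - \bot(l_1(x)+l_2(x))\big)$, the only discrepancy being $\bot(l_1)+\bot(l_2)$ versus $\bot(l_1+l_2)$ inside the exponent. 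Since $\nabla_x \bot(\cdot) = 0$ for both, the two expressions have the same derivative behaviour; I want to package this cleanly rather than manipulate $\bot$ directly.

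The cleanest route: apply Proposition~\ref{prop:DICE} to each side with the \emph{same} effective multiplier $l(x) := l_1(x) + l_2(x)$ and the same $f(x)$, and show both sides produce the \emph{same} sequence $\g^{(n)}(x)$. For the left-hand side $\magic(l_1(x)+l_2(x))f(x)$, Proposition~\ref{prop:DICE} directly gives $\forward{\nabla_x^{(n)}\magic(l_1(x)+l_2(x))f(x)} = \forward{\g^{(n)}(x)}$ with $\g^{(0)} = f$, $\g^{(n)} = \nabla_x \g^{(n-1)} + \g^{(n-1)}\nabla_x(l_1+l_2)$. For the right-hand side, I would prove a small companion fact by induction on $n$: $\magic(l_1(x))\magic(l_2(x))\,\g^{(n)}(x) = \nabla_x^{(n)}\big(\magic(l_1(x))\magic(l_2(x))f(x)\big)$, mirroring the base/inductive-step structure of the proof of Proposition~\ref{prop:DICE}. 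The base case $n=0$ is immediate. For the inductive step, using the product-rule identity $\nabla_x(\magic(l_1)\magic(l_2)) = \magic(l_1)\magic(l_2)\,\nabla_x(l_1 + l_2)$ (which follows from part~2 of the \magicbox{} proposition applied twice, since $\nabla_x\magic(l_j) = \magic(l_j)\nabla_x l_j$), one gets
\begin{align}
    \magic(l_1)\magic(l_2)\,\g^{(n+1)} &= \magic(l_1)\magic(l_2)\big(\nabla_x \g^{(n)} + \g^{(n)}\nabla_x(l_1+l_2)\big) \\
    &= \nabla_x\big(\magic(l_1)\magic(l_2)\,\g^{(n)}\big) \\
    &= \nabla_x^{(n+1)}\big(\magic(l_1)\magic(l_2)f\big),
\end{align}
using the inductive hypothesis in the last line. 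Then, exactly as at the end of the proof of Proposition~\ref{prop:DICE}, since $\forward{\magic(l_1)\magic(l_2)} = \forward{\magic(l_1)}\cdot\forward{\magic(l_2)} = 1$ we have $\forward{\g^{(n)}} = \forward{\magic(l_1)\magic(l_2)\g^{(n)}} = \forward{\nabla_x^{(n)}\magic(l_1)\magic(l_2)f}$. Comparing with the left-hand side expression $\forward{\nabla_x^{(n)}\magic(l_1+l_2)f} = \forward{\g^{(n)}}$ finishes the proof.

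The main obstacle is bookkeeping rather than conceptual: making sure the product-rule identity for $\nabla_x(\magic(l_1)\magic(l_2))$ is justified carefully (it is just the Leibniz rule plus part~2 of the \magicbox{} proposition), and making sure the induction for the companion identity is stated with the correct $\g^{(n)}$ — the one generated by $l = l_1 + l_2$, not by $l_1$ or $l_2$ separately. One should also note that no \emph{identical-under-evaluation} hypothesis on $l_1, l_2, f$ is needed here, since the statement is only an equivalence-under-evaluation $\equivforward$ between the two sides, not a claim that either side is identical under evaluation; the argument above never removes a $\forward{}$ from around an un-reduced $\bot$.
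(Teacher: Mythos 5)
Your proof is correct, but it takes a genuinely different route from the paper's. The paper inducts directly on the proposition's statement: it expands $\forward{\nabla_x^{(n+1)}\magic(l_1)\magic(l_2)f}$ one derivative at a time, collects $h = f\nabla_x(l_1+l_2)+\nabla_x f$, and applies the inductive hypothesis \emph{to $h$ in place of $f$} — so the universal quantifier over $f$ in the statement is essential to make its induction go through — before invoking Proposition \ref{prop:DICE} once at the end of each inductive step. You instead factor the problem through the common closed form $\forward{\g^{(n)}}$: the left side equals it by Proposition \ref{prop:DICE} with $l=l_1+l_2$, and for the right side you prove the exact (pre-evaluation) companion identity $\magic(l_1)\magic(l_2)\,\g^{(n)} = \nabla_x^{(n)}\bigl(\magic(l_1)\magic(l_2)f\bigr)$ by mirroring the internal lemma of Proposition \ref{prop:DICE}'s proof, applying $\forward{}$ only at the very end. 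Your version buys a fixed-$f$ induction on an exact algebraic identity (no forward operator inside the induction), makes explicit that both sides share the same evaluated closed form, and iterates cleanly to $k$ factors; the paper's version buys black-box reuse of Proposition \ref{prop:DICE} without restating its internal recursion, at the cost of the strengthened (quantified-over-$f$) hypothesis. Your closing remark is also consistent with the paper: neither proof needs $l_1,l_2,f$ to be identical under evaluation, and your exact identity lives entirely on the product side, so it does not collide with the paper's caveat that $\magic(l_1+l_2)f$ and $\magic(l_1)\magic(l_2)f$ are not equal as un-evaluated formulas.
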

\begin{proof}
    Start with the base case $n=0$. Then, $\forward{\magic(l_1(x)+l_2(x))f(x)} = \forward{f(x)} = 1\cdot 1\cdot \forward{f(x)} = \forward{\magic(l_1(x))\magic(l_2(x))f(x)}$.

    Next, assume the proposition holds for $n$. Then consider $n+1$:
    \begin{align}
        &\forward{\nabla_x^{(n+1)}\magic(l_1(x))\magic(l_2(x))f(x)} \\
        =& \forward{\nabla_x^{(n)}\nabla_x \magic(l_1(x)\magic(l_2(x))f(x))} \\
        =& \forward {\nabla_x^{(n)} \magic(l_1(x))\magic(l_2(x))(f(x)\nabla_x l_1(x) + f(x)\nabla_x l_2(x) + \nabla_x f(x) )} \\
        =& \forward {\nabla_x^{(n)} \magic(l_1(x))\magic(l_2(x))(f(x)\nabla_x (l_1(x)+l_2(x)) +  \nabla_x f(x) )}
    \end{align}

    Define function $h(x) = f(x)\nabla_x (l_1(x)+l_2(x)) +  \nabla_x f(x)$. Since the proposition works for any function, we can apply the inductive hypothesis replacing $f(x)$ by $h(x)$:
    \begin{align}
     \forward {\nabla_x^{(n)} \magic(l_1(x))\magic(l_2(x)) h(x)}  \forward {\nabla_x^{(n)} \magic(l_1(x)+l_2(x)) h(x) } 
    \end{align}
    Finally, we use Proposition \ref{prop:DICE} with $g^{(1)}(x)=h(x)$ and $l(x)=l_1(x)+l_2(x)$:
    \begin{align}
         &\forward {\nabla_x^{(n)} \magic(l_1(x)+l_2(x)) f(x)\nabla_x (l_1(x)+l_2(x)) + \nabla_x f(x) }  \\ 
         =&\forward{\nabla_x^{(n)}\nabla_x \magic(l_1(x)+l_2(x)) f(x) } =\forward{\nabla_x^{(n+1)}\magic(l_1(x)+l_2(x)) f(x) }
    \end{align}
\end{proof}

It should be noted that it cannot be proven that $\nabla_x^{(n)} \magic(\sum_{i=1}^k l_i(x)) f(x) = \nabla_x^{(n)} \magic(\sum_{i=1}^{k-1} l_i(x)) \magic(l_k(x)) f(x)$ because the base-case cannot be proven without the $\forward{}$ operator interpreting the $\magic$ operator.

Also note the parallels with the exponential function, where $e^{l_1(x)+l_2(x)}=e^{l_1(x)} e^{l_2(x)}$.

\section{The \emph{Storchastic} framework (formal)}\label{sec:dice-formulation}

In this section we formally introduce \emph{Storchastic} to provide the mathematical machinery needed to prove our results.
Let $\stochastic_1, \dots, \stochastic_k$ be a partition of $\stochastic_{\before\detnode}$. 
Assume the sets $\stochastic_1, \dots, \stochastic_k$ are topologically sorted, that is, there is no $i<j$ such that there exists a stochastic node $\stochnode\in\stochastic_j$ that is also in $\stochastic_{<i}=\bigcup_{j=1}^{j-1} \stochastic_j$.
We use assignment $\vals{i}$ to denote a set that gives a value to each of the random variables $\stochnode \in \stochastic_i$. That is, $\vals{i}\in \prod_{\stochnode \in \stochastic_i}\sspace_\stochnode$. We additionally use $\vals{< i}$ to denote a set that gives values to all random variables in $\stochastic_{< i}$. 
In the same vein, $\sample{i}$ denotes a set of sets of values $\vals{i}$, that is $\sample{i}=\{\vals{i, 1}, ..., \vals{i, |\sample{i}|}\}$.

\begin{definition}
For each partition $\stochastic_i$ there is a \textbf{gradient estimator} $\gradestim$ where $\fproposalcond{i}$ is a distribution over a set of values $\sample{i}$ conditioned on $\vals{<i}$, $w_i:\prod_{\stochnode \in \stochastic_i}\sspace_\stochnode \rightarrow \mathbb{R}^+ $ is the weighting function that weights different values $\vals{i}$, $l_i:\prod_{\stochnode \in \stochastic_i}\sspace_\stochnode \rightarrow \mathbb{R}$ is the \tmultipl{} that provides the gradient produced by each $\vals{i}$, and the \tadditive{} $a_i:\prod_{j=1}^{i}\prod_{\stochnode \in \stochastic_j}\sspace_\stochnode \rightarrow \mathbb{R}$ is a function of both $\vals{i}$ and $\vals{<i}$.
\end{definition}

$\fproposalcond{i}$ is factorized as follows: Order stochastic nodes $\stochnode_{i, 1}, \dots, \stochnode_{i, m} \in \stochastic_i$ topologically, then
$\fproposalcond{i}=\prod_{j=1}^m q(\sampleset_{i, j}|\sampleset_{i, <j}, \vals{<i})$.

%

In the rest of this appendix, we will define some shorthands to declutter the notation, as follows:
\begin{itemize}
    \item $\weight{i} = \fweight{i}$ and $\Weight{i} = \prod_{j=1}^i \weight{i}$
    \item $\multipl{i} = \fmultipl{i}$ and $\Multipl{i} = \sum_{j=1}^i \multipl{i}$
    \item $\additive{i} = \fadditive{i}$
    \item $\proposal{1} = \fproposal{1}$ and $\proposalcond{i} = \fproposalcond{i}$ (for $i > 1$)
\end{itemize}
These interfere with the functions and distributions themselves, but it should be clear from context which of the two is meant.

\begin{proposition}
\label{prop:translate-surrogate-loss}
Given a topologically sorted partition $\stochastic_1, ..., \stochastic_k$ of $\stochastic_{\before\detnode}$ and corresponding gradient estimators $\langle q_i, w_i, l_i, a_i \rangle$ for each $1\leq i\leq k$, the evaluation of the $n$-th order derivative of the \emph{Storchastic} surrogate loss $\forward{\nabla_\node^{(n)} \SL}$ of Equation \ref{eq:surrogate-loss} is equal in expectation to
\begin{align}
    \label{eq:storchastic-expectation}
   \mathbb{E}_{\proposal{1}} \bigg[ \sum_{\itersample{1}} \forward{\nabla_\node^{(n)} \weight{1} \additive{1}} + 
        \dots \mathbb{E}_{\proposalcond{k}} \bigg[ \sum_{\itersample{k}}\forward{ \nabla_\node^{(n)} \Weight{k}   
             \magic( \Multipl{k-1} )\additive{k} } +  \forward{\nabla_\node^{(n)} \Weight{k}   
             \magic( \Multipl{k} )  \detnode  } \bigg] \dots \bigg]
\end{align}
where the $i$-th term in the dots is $\mathbb{E}_{\proposalcond{i}}[\sum_{\itersample{i}}\forward{\nabla_\node^{(n)} \Weight{i} \magic{\Multipl{i-1}}\additive{i}} + (\dots)]$
\end{proposition}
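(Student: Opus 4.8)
The plan is to prove Proposition \ref{prop:translate-surrogate-loss} by induction on the recursion structure of the \textproc{surrogate\_loss} function in Algorithm \ref{alg:storchastic}, peeling off one partition at a time. The key observation is that the surrogate loss in Equation \eqref{eq:surrogate-loss} is built by nesting: the innermost layer contains the cost term $\magic(\sum_{i=1}^k \fmultipl{i})\cost$ together with the $k$-th \tadditive{} term $\magic(\sum_{j=1}^{k-1}\fmultipl{j})\fadditive{k}$, and each outer layer $i$ wraps the previous one with $\sum_{\itersample{i}} \fweight{i}[\magic(\sum_{j<i}\fmultipl{j})\fadditive{i} + (\dots)]$. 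So I would first establish the appropriate inductive hypothesis: for each $i$ from $k$ down to $1$, the partial surrogate loss starting at partition $i$ (conditioned on fixed values $\vals{<i}$ and with accumulated multiplier $L = \Multipl{i-1}$, matching the arguments passed in line 6 of the algorithm) equals in expectation over $\proposalcond{i}, \dots, \proposalcond{k}$ the corresponding tail of Equation \eqref{eq:storchastic-expectation}.

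The main technical tool is two-fold. First, I would use the fact that $\mathbb{E}_{p(x)}[\forward{f(x)}] = \forward{\mathbb{E}_{p(x)}[f(x)]}$ when $\forward{p(x)} = p(x)$ (noted in Appendix \ref{sec:appendix-forward-mode}), so that the $\forward{}$ operator can be pushed inside and outside expectations freely — this requires $\forward{\fproposalcond{i}} = \fproposalcond{i}$, which is exactly condition 4 of Theorem \ref{thrm:storchastic-informal}, though strictly for this proposition (which is just an algebraic rewriting, not yet the unbiasedness claim) we only need it to commute the expectation operators with $\forward{}$. Second, and more importantly, I would use Proposition \ref{prop:DiCE_multiply}: the surrogate loss in Equation \eqref{eq:surrogate-loss} contains $\magic$ applied to \emph{sums} of \tmultipl{}s accumulated along the recursion (the quantity $L$ in the algorithm), whereas Equation \eqref{eq:storchastic-expectation} is written with the same $\magic(\Multipl{i-1})$, $\magic(\Multipl{k})$ factors — so the two are aligned once one recognizes that $\nabla^{(n)}$ distributes linearly over the sum of terms inside each bracket, and that the $\fweight{i}$ and $\magic$ factors from outer partitions multiply through. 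The linearity of $\nabla^{(n)}_\node$ over sums, and the fact that $\sum_{\itersample{i}} \fweight{i}$ and each $\fproposalcond{i}$ do not depend on the recursion variable of inner partitions, lets me commute the finite sums $\sum_{\itersample{i}}$ and the expectations $\mathbb{E}_{\proposalcond{i}}$ past $\nabla^{(n)}_\node$ and past the $\forward{}$ operator.

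Concretely, the inductive step goes: assume the partial surrogate loss at level $i+1$ with accumulated multiplier $L$ satisfies the claim; then at level $i$ we have $\sum_{\itersample{i}} \fweight{i}[\magic(L)\fadditive{i} + (\text{level } i{+}1 \text{ loss with } L + \fmultipl{i})]$. Applying $\forward{\nabla_\node^{(n)}}$, using linearity to split into the \tadditive{} term and the recursive term, pulling the deterministic $\sum_{\itersample{i}}$ outside (it is a finite sum independent of later samples), and then taking $\mathbb{E}_{\proposalcond{i}}$ of both sides while invoking the inductive hypothesis on the recursive term, yields exactly the $i$-th layer of Equation \eqref{eq:storchastic-expectation}, where the $\magic(L) = \magic(\Multipl{i-1})$ factor attaches to $\fadditive{i}$ and the $\fweight{i}$ factors accumulate into $\Weight{i}$. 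The base case $i = k+1$ is line 7 of the algorithm: the loss is simply $\magic(L)\cost = \magic(\Multipl{k})\cost$ which already matches.

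The main obstacle I expect is bookkeeping rather than conceptual depth: carefully tracking that the accumulated multiplier $L$ at recursion depth $i$ equals $\Multipl{i-1}$, that the accumulated weight equals $\Weight{i-1}$ before the level-$i$ weighting is applied and $\Weight{i}$ after, and that $\fadditive{i}$ multiplied by $\magic(L)$ as in line 11 produces precisely the $\magic(\Multipl{i-1})\additive{i}$ term. A subtlety worth stating explicitly is that this proposition is a purely \emph{syntactic/algebraic} identity (an equality in expectation of $\forward{\nabla^{(n)}}$ of two expressions) — it does \emph{not} yet use conditions 1–3 of Theorem \ref{thrm:storchastic-informal}, which are only invoked later to show Equation \eqref{eq:storchastic-expectation} actually equals $\nabla_\theta^{(n)}\mathbb{E}[\cost]$. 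I would make sure the proof of this proposition relies only on linearity of differentiation, commutativity of finite sums and expectations with $\forward{\nabla^{(n)}}$ (condition 4 / the $\forward{}$-expectation interchange), and possibly Proposition \ref{prop:DiCE_multiply} if any regrouping of nested $\magic$es is needed — keeping it cleanly separated from the substantive unbiasedness argument that follows.
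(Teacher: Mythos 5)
Your proposal is correct and takes essentially the same route as the paper: the paper's proof simply moves the weights inward, flattens the nested sum into $k+1$ terms, uses linearity of $\forward{\nabla^{(n)}_\node}$ over the finite sums $\sum_{\itersample{i}}$, and then wraps the expectations around the respective summations — your induction on recursion depth is just a more formal bookkeeping of that same expansion, and you rightly flag that conditions 1--3 play no role here. The only (harmless) over-engineering is your hedge about Proposition \ref{prop:DiCE_multiply} and condition 4: neither is actually needed, since the $\magic$ arguments in Equation \eqref{eq:surrogate-loss} are already written as the sums $\Multipl{i-1}$ and the $\forward{}$ operator remains inside the expectations on both sides, so no regrouping or expectation--$\forward{}$ interchange is required.
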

\begin{proof}
    By moving the weights inwards and using the $\Multipl{i}$ notation,
    \begin{align}
        \forward{\nabla_\node^{(n)} \SL}=&\forward{\nabla_\node^{(n)} \sum_{\itersample{1}} \weight{1}\Big[\additive{1} + 
         \dots   
        + \sum_{\itersample{k}} \weight{k}  \Big[ \magic( \Multipl{k-1})\additive{k} + \magic(  \Multipl{k} )\cost  \Big] \dots \Big] \Big] } \\
        &=\forward{\nabla_\node^{(n)} \sum_{\itersample{1}}  \weight{1}\additive{1} +  
         \dots \sum_{\itersample{i}} \Weight{i} \magic(\Multipl{i-1}) \additive{i} + \dots}\\
        &\forward{+ \sum_{\itersample{k}} \Weight{k} \magic( \Multipl{k-1})\additive{k} + \Weight{k}\magic(  \Multipl{k} )\cost    } \\
        &= \sum_{\itersample{1}} \forward{\nabla_\node^{(n)} \weight{1}\additive{1}} +  
         \dots \sum_{\itersample{i}} \forward{\nabla_\node^{(n)}\Weight{i} \magic(\Multipl{i-1}) \additive{i}} + \dots\\
        &+ \sum_{\itersample{k}} \forward{\nabla_\node^{(n)}\Weight{k} \magic( \Multipl{k-1})\additive{k}} + \forward{\nabla_\node^{(n)}\Weight{k}\magic(  \Multipl{k} )\cost}   
    \end{align}
    This is all under sampling $\sampleset_1 \sim \fproposal{1}, \sampleset_2 \sim \fproposalcond{2}, ..., \sampleset_k \sim \fproposalcond{k}$. Taking expectations over these distributions before the respective summation over $\sampleset{i}$ gives the result.
\end{proof}



In the \emph{Storchastic} framework, we require that $\mathbb{E}[F]$ is identical under evaluation, that is,  $\forward{\nabla_\node^{(n)} \mathbb{E}[F]}=\nabla_\node^{(n)} \mathbb{E}[F]$. This in practice means that the probability distributions and functions in the stochastic computation graph contain no stop gradient operators ($\bot$).

Using Proposition \ref{prop:DICE}, we give a recursive expression for $\forward{\nabla^{(n)}_{\node}\weight{i} \big(\additive{i} + \magic( \multipl{i}) f(\vals{i}) \big) }$.

\begin{proposition}
    \label{prop:grad_recur}
    For any gradient estimator $\gradestim$ it holds that  
    \begin{equation}
        \forward{\nabla^{(n)}_{\node}\weight{i} \big(\magic(\Multipl{i-1})\additive{i} + \magic( \multipl{i}) f(\vals{i}) \big) } = \forward{ \nabla^{(n)}_\node \weight{i} \magic(\Multipl{i-1})\additive{i} + \g_i^{(n)}(\vals{i})}
    \end{equation}
     where $\g_i^{(n)}(\vals{i})=\nabla_\node \g_i^{(n-1)}(\vals{i})+\g_i^{(n-1)}\nabla_\node \multipl{i}$ for $n>0$, and $\g_i^{(0)}(\vals{i})=\weight{i}f(\vals{i})$.
\end{proposition}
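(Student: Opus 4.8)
The plan is to reduce this to Proposition~\ref{prop:DICE} after a linearity split. First I would distribute $\weight{i}$ over the sum inside the parentheses and use that both $\forward{}$ and $\nabla_\node^{(n)}$ are linear to write
\begin{equation}
    \forward{\nabla^{(n)}_{\node}\weight{i} \big(\magic(\Multipl{i-1})\additive{i} + \magic( \multipl{i}) f(\vals{i}) \big)} = \forward{\nabla^{(n)}_\node \weight{i}\magic(\Multipl{i-1})\additive{i}} + \forward{\nabla^{(n)}_\node \weight{i}\magic(\multipl{i})f(\vals{i})}.
\end{equation}
The first summand already matches the first term appearing under $\forward{}$ on the right-hand side of the claim, so it requires no further work.

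For the second summand I would invoke Proposition~\ref{prop:DICE} with differentiation variable $x := \node$, multiplier $l(x) := \multipl{i}$, and the function role played by the product $\weight{i}f(\vals{i})$. This is legitimate because $\weight{i}$, $\multipl{i}$ and $f$ depend on $\node$ only through the deterministic ancestor nodes (the sample assignment $\vals{i}$ being held fixed), so the proof of Proposition~\ref{prop:DICE} goes through verbatim with $\node$ in place of $x$. Proposition~\ref{prop:DICE} then yields $\forward{\nabla^{(n)}_\node \weight{i}\magic(\multipl{i})f(\vals{i})} = \forward{\g_i^{(n)}(\vals{i})}$ where the sequence is generated by $\g_i^{(n)}(\vals{i}) = \nabla_\node \g_i^{(n-1)}(\vals{i}) + \g_i^{(n-1)}(\vals{i})\nabla_\node\multipl{i}$ for $n>0$ with $\g_i^{(0)}(\vals{i}) = \weight{i}f(\vals{i})$ --- exactly the recursion stated in the proposition. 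Folding the two summands back together under a single $\forward{}$ by linearity gives the claimed identity.

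The calculation is essentially bookkeeping, so I do not expect a genuine obstacle; the only point needing care is the distinction between the differentiation variable $\node$ and the fixed sample assignment $\vals{i}$, which I would handle by stating once that $\Multipl{i-1}$, $\additive{i}$, $\weight{i}$, $\multipl{i}$ and $f$ are to be regarded as functions of $\node$ alone (with $\vals{i}$ a parameter), making the appeal to Proposition~\ref{prop:DICE} valid. In effect this proposition just repackages the DiCE identity of Proposition~\ref{prop:DICE} into the per-partition notation used later in the unbiasedness argument, isolating the single multiplicative factor $\magic(\multipl{i})$ attached to partition $\stochastic_i$ while leaving the preceding $\magic(\Multipl{i-1})$ and the control variate term untouched.
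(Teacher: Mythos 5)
Your proof is correct and follows essentially the same route as the paper's: split the expression by linearity of $\forward{}$ and $\nabla_\node^{(n)}$, leave the control-variate term alone, and apply Proposition~\ref{prop:DICE} with $l=\multipl{i}$ and the function role played by $\weight{i}f(\vals{i})$ (so that $\g_i^{(0)}=\weight{i}f(\vals{i})$), then recombine. The paper merely writes the same two steps starting from the right-hand side instead of the left.
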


\begin{proof}
    Using Proposition \ref{prop:DICE}, we find that 
    \begin{align}
        \forward { \nabla_\node^{(n)} \weight{i}\magic(\Multipl{i-1})\additive{i} + \g_i^{(n)}(\vals{i}) } &= \forward {\nabla_\node^{(n)} \weight{i} \magic(\Multipl{i-1}) \additive{i}} + \forward{\nabla_\node^{(n)} \weight{i} \magic(\multipl{i})f(\vals{i}) }  \\
        &= \forward {\nabla_\node^{(n)} \weight{i} (\magic(\Multipl{i-1})\additive{i} + \magic(\multipl{i})f(\vals{i}) )} 
    \end{align}
\end{proof}

Proposition \ref{prop:grad_recur} is useful because it gives a fairly simple recursion to proof unbiasedness of any-order estimators with, when the gradient estimator is implemented in \emph{Storchastic}. Note that it doesn't itself show that such gradient estimators are unbiased in any-order derivatives.


\subsection{Unbiasedness of the \emph{Storchastic} framework}
\label{seq:unbiasedness-proof}
In this section, we use the equivalent expectation from Proposition \ref{prop:translate-surrogate-loss}
\begin{theorem}
    \label{thrm:storchastic}
    Let $\gradestim$ for $i=1, ..., k$ be a sequence of gradient estimators. Let the stochastic computation graph $\mathbb{E}[F]$ be identical under evaluation\footnote{In other words, all deterministic functions, and all probability measures associated with the stochastic nodes are identical under evaluation.}. The evaluation of the $n$th-order derivative of the \emph{Storchastic} surrogate loss is an unbiased estimate of $\nabla_\node^{(n)}\mathbb{E}[F]$ , that is
    \begin{align}
        \nabla_\node^{(n)} \mathbb{E}[\detnode] =
   \mathbb{E}_{\proposal{1}} \bigg[ \sum_{\itersample{1}} \forward{\nabla_\node^{(n)} \weight{1} \additive{1}} + 
        \dots \mathbb{E}_{\proposalcond{k}} \bigg[ \sum_{\itersample{k}}\forward{ \nabla_\node^{(n)} \Weight{k}   
             \magic( \Multipl{k-1} )\additive{k} } +  \forward{\nabla_\node^{(n)} \Weight{k}   
             \magic( \Multipl{k} )  \detnode  } \bigg] \dots \bigg]
    \end{align}
    if the following conditions hold for all estimators $i=1, ..., k$ and all preceding orders of differentiation $n\geq m \geq 0$:
    \begin{enumerate}
        \item  $\mathbb{E}_{q_i}[\sum_{\itersample{i}} \forward{\nabla^{(m)}_\node \weight{i} \magic(\multipl{i})f(\vals{i})}]= \forward{\nabla_\node^{(m)} \mathbb{E}_{\stochastic_i}[f(\vals{i})] }$ for any deterministic function $f$;
        \item $\mathbb{E}_{q_i}[\sum_{\itersample{i}} \forward{\nabla^{(m)}_\node \weight{i} \additive{i}}]=0$;
        \item  for $n\geq m>0$, $\mathbb{E}_{q_i}[\sum_{\itersample{i}} \forward{\nabla_\node^{(m)} \weight{i}}]=0$;
        \item $\forward{\fproposalcond{i}} = \fproposalcond{i}$.
    \end{enumerate}
\end{theorem}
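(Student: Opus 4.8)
I would prove the displayed identity by an \emph{outward} induction over the gradient estimators, from the innermost index $i=k$ to the outermost $i=1$. Write $\Lambda_i(\vals{<i})$ for the nested subexpression that begins with the expectation $\mathbb{E}_{\proposalcond{i}}$ (so $\Lambda_1$ is the whole right-hand side), and set $\tilde F_i(\vals{<i}):=\mathbb{E}_{\stochastic_i,\dots,\stochastic_k}[\detnode\mid\vals{<i}]$, the true conditional expectation of the cost over all remaining stochastic nodes. The inductive invariant I would carry is
\[
  \Lambda_i \;=\; \forward{\,\nabla_\node^{(n)}\;\Weight{i-1}\,\magic(\Multipl{i-1})\,\tilde F_i(\vals{<i})\,}\,,
\]
noting that $\Weight{i-1}=\prod_{j<i}\weight{j}$ and $\magic(\Multipl{i-1})$ depend only on $\vals{<i}$, hence are constant with respect to both $\mathbb{E}_{\proposalcond{i}}$ and the sum over $\itersample{i}$ (though \emph{not} with respect to $\nabla_\node$). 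For $i=1$ we have $\Weight{0}=1$, $\magic(\Multipl{0})=\magic(0)=1$ and $\stochastic_1\cup\dots\cup\stochastic_k=\stochastic_{\before\detnode}$, so the invariant reads $\Lambda_1=\forward{\nabla_\node^{(n)}\mathbb{E}[\detnode]}$; since $\mathbb{E}[\detnode]$ is identical under evaluation this equals $\nabla_\node^{(n)}\mathbb{E}[\detnode]$, which is exactly the claim.

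\textbf{The inductive step.}
The base case $i=k$ and the step $i+1\rightarrow i$ can be handled by one argument, because $\Lambda_i$ always has the shape $\mathbb{E}_{\proposalcond{i}}\big[\sum_{\itersample{i}}\forward{\nabla_\node^{(n)}\Weight{i}\magic(\Multipl{i-1})\additive{i}}+(\text{inner term})\big]$, where the inner term is $\forward{\nabla_\node^{(n)}\Weight{k}\magic(\Multipl{k})\detnode}$ when $i=k$ and, by the inductive hypothesis, $\forward{\nabla_\node^{(n)}\Weight{i}\magic(\Multipl{i})\tilde F_{i+1}(\vals{\leq i})}$ when $i<k$ --- in both cases of the form $\forward{\nabla_\node^{(n)}\Weight{i}\magic(\Multipl{i})\,g(\vals{i})}$ with $g$ a deterministic function ($g=\detnode$, resp.\ $g=\tilde F_{i+1}$, which is again deterministic --- in fact identical under evaluation, being a conditional expectation of the cost). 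I would then: (i) split $\Weight{i}=\Weight{i-1}\weight{i}$, $\Multipl{i}=\Multipl{i-1}+\multipl{i}$ and apply Proposition~\ref{prop:DiCE_multiply} (which is stable under multiplication by the common factor $\Weight{i-1}$) to rewrite the inner term as $\forward{\nabla_\node^{(n)}\Weight{i-1}\magic(\Multipl{i-1})\magic(\multipl{i})\weight{i}g}$, and the $\additive{i}$ term as $\forward{\nabla_\node^{(n)}\Weight{i-1}\magic(\Multipl{i-1})\weight{i}\additive{i}}$; (ii) apply the general Leibniz rule to detach the $\vals{<i}$-measurable factor $\Weight{i-1}\magic(\Multipl{i-1})$, writing each term as $\sum_{m=0}^n {n \choose m}\,\forward{\nabla_\node^{(n-m)}\Weight{i-1}\magic(\Multipl{i-1})}\cdot\mathbb{E}_{\proposalcond{i}}\big[\sum_{\itersample{i}}\forward{\nabla_\node^{(m)}(\cdots)}\big]$, using Condition~4 to commute $\forward{}$ past $\mathbb{E}_{\proposalcond{i}}$; (iii) kill every summand of the $\additive{i}$ term by Condition~2, and evaluate the $g$-summand by Condition~1 (with $f=g$) as $\forward{\nabla_\node^{(m)}\mathbb{E}_{\stochastic_i}[g\mid\vals{<i}]}$; (iv) recombine the sum over $m$ by the Leibniz rule in reverse, obtaining $\forward{\nabla_\node^{(n)}\Weight{i-1}\magic(\Multipl{i-1})\mathbb{E}_{\stochastic_i}[g\mid\vals{<i}]}$; (v) use the tower property together with the topological ordering of the partitions (so that $p(\vals{\geq i}\mid\vals{<i})$ factors as $p(\vals{i}\mid\vals{<i})\,p(\vals{>i}\mid\vals{\leq i})$) to identify $\mathbb{E}_{\stochastic_i}[g\mid\vals{<i}]=\tilde F_i(\vals{<i})$. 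This is the invariant for $i$. Condition~3 is what certifies steps (ii)--(iii): any positive-order derivative of the weighting function not already absorbed into Conditions~1 and~2 integrates to zero, i.e.\ $\mathbb{E}_{\proposalcond{i}}[\sum_{\itersample{i}}\forward{\nabla_\node^{(m)}\weight{i}}]=0$ for $m>0$.

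\textbf{Main obstacle.}
The delicate part is twofold. First, one must choose the invariant so that the prefactor $\Weight{i-1}\magic(\Multipl{i-1})$ stays \emph{inside} $\forward{\nabla_\node^{(n)}\cdot}$ (it is not constant in $\node$, so it cannot simply be factored out), and so that the quantity estimated at level $i$ is recognised as the genuine conditional expectation $\tilde F_i$ --- this is what lets Condition~1, stated for an arbitrary deterministic function, be reapplied all the way down the recursion. Second, every manipulation takes place under $\forward{}$ while $\magic$, $\bot$ and $\nabla_\node^{(n)}$ do not commute naively; the device that makes the bookkeeping close is Proposition~\ref{prop:DiCE_multiply}, which peels a single $\magic(\multipl{i})$ off $\magic(\Multipl{i})$ so that Condition~1 (whose left-hand side contains exactly one $\magic(\multipl{i})$) becomes applicable, combined with the general Leibniz rule, which is compatible with $\forward{}$ because $\forward{}$ distributes over sums and products. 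The remaining ingredients --- reaching the nested-expectation form via Proposition~\ref{prop:translate-surrogate-loss} and linearity of expectation, and dropping the outermost $\forward{}$ at $i=1$ by identity under evaluation --- are routine.
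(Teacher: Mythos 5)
Your proposal is correct and follows essentially the same route as the paper's proof: reduce the innermost expectation at each level by combining Proposition~\ref{prop:DiCE_multiply}, the general Leibniz rule to detach the $\vals{<i}$-measurable prefactor $\Weight{i-1}\magic(\Multipl{i-1})$, Condition~2 to annihilate the control-variate term, and Condition~1 to replace the cost by its conditional expectation over $\stochastic_i$, then recombine with the reverse Leibniz rule. The only difference is organizational --- the paper splits the argument into two separate inductions (one showing all control-variate terms vanish, one handling the $\magic(\Multipl{k})\detnode$ term), whereas you carry both in a single invariant --- which does not change the substance of the argument.
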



\begin{proof}
    In this proof, we make extensive use of the general Leibniz rule, which states that 
    \begin{equation}
        \nabla_x^{(n)} f(x) g(x) = \sum_{m=0}^n {n \choose m} \nabla_x^{(n-m)} f(x) \nabla_x^{(m)} g(x).
    \end{equation}
    We consider the terms  $\mathbb{E}_{\proposalcond{i}}\bigg[\sum_{\itersample{k}} \forward { \nabla_\node^{(n)} \Weight{i} \magic(\Multipl{i-1})\additive{i} }\bigg]$ and the term $\mathbb{E}_{\proposalcond{k}}\bigg[\sum_{\itersample{k}} \forward { \nabla_\node^{(n)} \Weight{k} \magic(\Multipl{k}) \detnode }\bigg]$ separately, starting with the first.

    \begin{lemma}
        \label{lemma:additive}
        For any positive number $1\leq j \leq k$,
        \begin{align}
            \mathbb{E}_{\proposal{1}}\bigg[\sum_{\itersample{1}}
                \dots \mathbb{E}_{\proposalcond{j}}\bigg[\sum_{\itersample{j}} \forward { \nabla_\node^{(n)} \Weight{j} \magic(\Multipl{j}) \additive{j}  }\bigg]\dots \bigg] = 0 .
        \end{align}
    \end{lemma}
    \begin{proof}

    We will prove the lemma using induction. First, let $j=1$. Then, using condition 2,
    \begin{align}
        \mathbb{E}_{\proposal{1}}\bigg[\sum_{\itersample{1}} \forward { \nabla_\node^{(n)} \weight{1} \additive{1}  }\bigg] = 0
    \end{align}
    Next, assume the inductive hypothesis holds for $j$, and consider the inner expectation of $j+1$:
    \begin{align}
        =& \mathbb{E}_{\proposalcond{j+1}}\bigg[\sum_{\itersample{j+1}} \forward{ \nabla_\node^{(n)}\magic(\Multipl{j})\additive{j+1} \Weight{j+1} }\bigg] = \mathbb{E}_{\proposalcond{j+1}}\bigg[\sum_{\itersample{j+1}} \forward{ \nabla_\node^{(n)}\weight{j+1}\additive{j+1} \magic(\Multipl{j})\Weight{j} }\bigg]\\
        =& \mathbb{E}_{\proposalcond{j+1}}\bigg[\sum_{\itersample{j+1}} \forward{ \sum_{m=0}^n \binom{n}{m}\nabla_\node^{(m)}\weight{j+1}  \additive{j+1} \nabla_\node^{(n-m)} \magic(\Multipl{j})\Weight{j}  }\bigg]
    \end{align}
    Next, note that $\Weight{j}$ and $\Additive{j}$ are both independent of $\vals{j+1}$. 
    Therefore, they can be moved out of the expectation. To do this, we implicitly use condition 4 to move the $\forward{}$ operator through the expectation. 
    \begin{align}
        & \mathbb{E}_{\proposalcond{j+1}}\bigg[\sum_{\itersample{j+1}} \sum_{m=0}^n \binom{n}{m}  \forward { \nabla_\node^{(m)} \weight{j+1}  \additive{j+1} \nabla_\node^{(n-m)} \magic(\Multipl{j})\Weight{j} } \bigg] \\
        =& \sum_{m=0}^n \binom{n}{m} \forward{\nabla_\node^{(n-m)} \magic(\Multipl{j})\Weight{j}} \mathbb{E}_{\proposalcond{j+1}}\Big[\sum_{\itersample{j+1}}   \forward { \nabla_\node^{(m)} \weight{j+1} \additive{j+1}   }\Big]
    \end{align}
    By condition 2 of the theorem, $\mathbb{E}_{\proposalcond{j+1}}\Big[\sum_{\itersample{j+1}}   \forward { \nabla_\node^{(m)} \weight{j+1} \additive{j+1}   }\Big]=0$. Therefore, we can remove this term and conclude that 
    \begin{align}
        \mathbb{E}_{\proposal{1}}\bigg[\sum_{\itersample{1}}\dots \mathbb{E}_{\proposalcond{j+1}} \bigg[ \sum_{\itersample{j+1}}\forward{\nabla_\node^{(n)} \Weight{j+1} \magic(\Multipl{j}) \additive{j+1}} \bigg] \dots \bigg]=0.
    \end{align}
\end{proof} 

    Next, we consider the term $\mathbb{E}_{\proposalcond{k}}\bigg[\sum_{\itersample{k}} \forward { \nabla_\node^{(n)} \Weight{k} \magic(\Multipl{k}) \detnode }\bigg]$ and prove using induction that

    \begin{lemma}
        \label{lemma:multiplicative}
        For any $1\leq j\leq k$, it holds that
    \begin{align}
        \mathbb{E}_{\proposal{1}}\bigg[\sum_{\itersample{1}}
            \dots \mathbb{E}_{\proposalcond{j}}\bigg[\sum_{\itersample{j}} \forward { \nabla_\node^{(n)} \Weight{j} \magic(\Multipl{j}) \detnode' }\bigg]\dots \bigg] = \nabla_\node^{(n)} \mathbb{E}[\detnode]
    \end{align}
    where $\detnode'=\mathbb{E}_{\stochastic_{j+1}, ..., \stochastic_{k}}[\detnode]$. Furthermore, for $1< j \leq k$, it holds that 
    \begin{align}
        & \mathbb{E}_{\proposal{1}}\bigg[\sum_{\itersample{1}}
            \dots \mathbb{E}_{\proposalcond{j}}\bigg[\sum_{\itersample{j}} \forward { \nabla_\node^{(n)} \Weight{j} \magic(\Multipl{j}) \detnode' }\bigg]\dots \bigg] \\
        =& \mathbb{E}_{\proposal{1}}\bigg[\sum_{\itersample{1}}
        \dots \mathbb{E}_{\proposalcond{j-1}}\bigg[\sum_{\itersample{j-1}} \forward { \nabla_\node^{(n)} \Weight{j-1} \magic(\Multipl{j-1}) \mathbb{E}_{\stochastic_j}[\detnode'] }\bigg]\dots \bigg]
    \end{align}
    \end{lemma}
    
    \begin{proof}
        The base case $j=1$ directly follows from condition 1: 
        \begin{align}
            \mathbb{E}_{\proposal{1}}\bigg[\sum_{\itersample{1}} \forward { \nabla_\node^{(n)} \weight{1} \magic(\multipl{i})\detnode'  }\bigg] = \forward{\nabla_\node^{(n)} \mathbb{E}_{\stochastic_1}[\detnode']} = \nabla_\node^{(n)} \mathbb{E}[\detnode],
        \end{align}
        since $\mathbb{E}[\detnode] = \mathbb{E}_{\stochastic_1, ..., \stochastic_k}[\detnode]$ and by the assumption that $\mathbb{E}[\detnode]$ is identical under evaluation. 
    
        Assume the lemma holds for $j<k$ and consider $j+1$. First, we use Proposition \ref{prop:DiCE_multiply} and reorder the terms:
        \begin{align}
            & \mathbb{E}_{\proposalcond{j+1}}\bigg[\sum_{\itersample{j+1}} \forward { \nabla_\node^{(n)} \Weight{j+1}  \magic(\Multipl{j+1}) \detnode' }\bigg] \\
            =& \mathbb{E}_{\proposalcond{j+1}}\bigg[\sum_{\itersample{j}}  \forward {  \nabla_\node^{(n)} \Weight{j+1} \magic(\Multipl{j+1}) \weight{j+1}  \magic( \multipl{j+1})  \detnode' }\bigg] 
        \end{align}
        Next, we again use the general Leibniz rule:
        \begin{align}
            & \mathbb{E}_{\proposalcond{j+1}}\bigg[\sum_{\itersample{j+1}}  \forward {  \nabla_\node^{(n)} \Weight{j} \magic(\Multipl{j}) \weight{j+1}  \magic( \multipl{j+1})  \detnode' }\bigg] \\
            =&\mathbb{E}_{\proposalcond{j+1}}\bigg[\sum_{\itersample{j+1}}  \forward {  \sum_{m=0}^n \binom{n}{m} \nabla_\node^{(n-m)} \Weight{j}   \magic(\Multipl{j}) \nabla_\node^{(m)} \weight{j+1}  \magic( \multipl{j+1})  \detnode' }\bigg] 
        \end{align}
        where we use for the general Leibniz rule $f=\Weight{j}  \magic(\Multipl{j})$ and $g=\weight{j+1}  \magic( \multipl{j+1})  \detnode'$. 
        Note that $\nabla_\node^{(n-m)} \Weight{j} \magic(\Multipl{j})$ does not depend on $\vals{j+1}$. Therefore,
        \begin{align}
            &\mathbb{E}_{\proposalcond{j+1}}\bigg[\sum_{\itersample{j+1}}  \forward {  \sum_{m=0}^n \binom{n}{m} \nabla_\node^{(n-m)} \Weight{j}   \magic(\Multipl{j}) \nabla_\node^{(m)} \weight{j+1}  \magic( \multipl{j+1})  \detnode' }\bigg] \\
            =& \sum_{m=0}^n \binom{n}{m}  \forward {  \nabla_\node^{(n-m)} \Weight{j}   \magic(\Multipl{j})} \mathbb{E}_{\proposalcond{j+1}}\bigg[\sum_{\itersample{j+1}} \forward{ \nabla_\node^{(m)} \weight{j+1}  \magic( \multipl{j+1})  \detnode' }\bigg] \\
            =&   \forward {\sum_{m=0}^n \binom{n}{m}  \nabla_\node^{(n-m)} \Weight{j}   \magic(\Multipl{j}) \nabla_\node^{(m)} \mathbb{E}_{\stochastic_{j+1}}[ \detnode' ]} \\
            =& \forward {  \nabla_\node^{(n)} \Weight{j}   \magic(\Multipl{j}) \mathbb{E}_{\stochastic_{j+1}}[ \detnode' ]}
        \end{align}

        From lines 2 to 3, we use condition 1 to reduce the expectation. In the last line, we use the general Leibniz rule in the other direction. We showed that 

        \begin{align}
            &\mathbb{E}_{\proposal{1}}\bigg[\sum_{\itersample{1}}
            \dots \mathbb{E}_{\proposalcond{j+1}}\bigg[\sum_{\itersample{j+1}} \forward { \nabla_\node^{(n)} \Weight{j+1} \magic(\Multipl{j+1}) \detnode' }\bigg] \dots \bigg] \\
            =& \mathbb{E}_{\proposal{1}}\bigg[\sum_{\itersample{1}}
            \dots \mathbb{E}_{\proposalcond{j}}\bigg[\sum_{\itersample{j}} \forward {  \nabla_\node^{(n)} \Weight{j} \magic(\Multipl{j}) \mathbb{E}_{\stochastic_{j+1}}[ \detnode' ]} \bigg] \dots \bigg] 
            = \nabla_\node^{(n)}\mathbb{E}[\detnode]
        \end{align}
        where we use the inductive hypothesis from step 2 to 3, using that $\mathbb{E}_{\stochastic_{j+1}}[\detnode'] = \mathbb{E}_{\stochastic_{j+1}, ..., \stochastic_k}[\detnode]$. 
    \end{proof}

    Using these two lemmas and condition 4, it is easy to show the theorem:
    \begin{align}
        &\mathbb{E}_{\proposal{1}}\bigg[\sum_{\itersample{1}}
            \dots \mathbb{E}_{\proposalcond{k}}\bigg[\sum_{\itersample{k}} \forward { \nabla_\node^{(n)} \Weight{k} \Big( \Additive{k} +\magic(\Multipl{k}) \detnode \Big) }\bigg]\dots \bigg] \\
        =&\mathbb{E}_{\proposal{1}}\bigg[\sum_{\itersample{1}}
            \dots \mathbb{E}_{\proposalcond{k}}\bigg[\sum_{\itersample{k}} \forward { \nabla_\node^{(n)} \Weight{k} \Additive{k} }\bigg]\dots \bigg] \\
        +&\mathbb{E}_{\proposal{1}}\bigg[\sum_{\itersample{1}}
        \dots \mathbb{E}_{\proposalcond{k}}\bigg[\sum_{\itersample{k}} \forward { \nabla_\node^{(n)} \Weight{k} \magic(\Multipl{k}) \detnode  }\bigg]\dots \bigg] \\
        =& 0 + \nabla_\node^{(n)}\mathbb{E}[\detnode]= \nabla_\node^{(n)}\mathbb{E}[\detnode]
    \end{align}
\end{proof}

Note that we used in the proof that condition 1 implies that $\nabla_\node^{(n)}\mathbb{E}_\proposalcond{i}[\sum_{\itersample{i}} \forward{\weight{i}\magic(\multipl{i})}] = 0$, which can be seen by taking $f(\vals{i}) = 1$ and noting that $\nabla_\node^{(m)}\mathbb{E}_{\stochastic_i}[1] = 0$ for $n> 0$. 
\section{Any-order \tadditive{}}
\label{sec:baselines}
Many gradient estimators are combined with \tadditive{}s to reduce variance.
We consider \tadditive{}s for any-order derivative estimation. 
\cite{maoBaselineAnyOrder2019} introduces an any-order baseline in the context of score functions, but only provides proof that this is the baseline for the second-order gradient estimate. 
We use the \emph{Storchastic} framework to prove that it is also the correct baseline for any-order derivatives\footnote{We use a slight variant of the baseline introduced in \cite{maoBaselineAnyOrder2019} to solve an edge case. We will explain in the end of this section how they differ.}. 
Furthermore, we generalize the ideas behind this baseline to all \tadditive{}s, instead of just score-function baselines.

The \tadditive{} that implements any-order baselines is: 
\begin{equation}
    \label{eq:higher-order-baseline}
    \fadditive{i} = (1-\magic(\multipl{i})) \fbaseline{i}.
\end{equation}
First, we show that baselines satisfy condition 2 of Theorem \ref{thrm:storchastic}. 
We will assume here that we take only 1 sample with replacement, but the result generalizes to taking multiple samples in the same way as for the first-order baseline. 
For $n=0$, the any-order baseline evaluates to zero which can be seen by considering $\forward{1-\magic(\multipl{i})}$. 
If $n>0$, then noting that $\baseline{i}$ is independent of $x_i$,
\begin{align}
    \mathbb{E}_{q_i}[ \forward{ \nabla^{(n)}_\node(1-\magic(\multipl{i}))\baseline{i}}]=&   \mathbb{E}_{x_{i}}[\forward{-\baseline{i}\nabla^{(n)}_\node-\magic(\multipl{i})}] 
    = \forward{-\baseline{i}}  \nabla^{(n)}_\node\mathbb{E}_{x_{i}}[1] = 0
\end{align}
We next provide a proof for the validity of this baseline for variance reduction of any-order gradient estimation. To do this, we first prove a new general result on the $\magic$ operator:
\begin{proposition}
    \label{prop:baseline-generator}
    For any sequence of functions $\{\multipl{1}, ..., \multipl{k}\}$, $\magic(\Multipl{k})$ is equivalent under evaluation for orders of differentiation $n>0$ to $\sum_{i=1}^k(\magic(\multipl{i}) - 1)\magic(\Multipl{i-1})$. 
    That is, for all positive numbers $n>0$,
    \begin{equation}
        \forward{\nabla_\node^{(n)} \magic(\Multipl{k})} = \forward{\nabla_\node^{(n)} \sum_{i=1}^k \big(\magic(\multipl{i}) - 1\big) \magic(\Multipl{i-1})}
    \end{equation}
\end{proposition}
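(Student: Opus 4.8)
The plan is to prove this by induction on $k$, the length of the sequence $\{\multipl{1},\ldots,\multipl{k}\}$. Recall that $\Multipl{0}=\sum_{j=1}^{0}\multipl{j}=0$, so $\magic(\Multipl{0})=\magic(0)=1$, and note that the asserted equivalence is claimed only at orders $n>0$: at $n=0$ the two sides genuinely disagree, since $\forward{\magic(\Multipl{k})}=1$ whereas $\forward{\sum_{i=1}^{k}(\magic(\multipl{i})-1)\magic(\Multipl{i-1})}=\sum_{i=1}^{k}(1-1)\cdot 1=0$, and it is precisely this discrepancy that the $-1$ correction in each term absorbs.

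For the base case $k=1$, the right-hand sum collapses to the single term $(\magic(\multipl{1})-1)\magic(\Multipl{0})=\magic(\multipl{1})-1$; since $\Multipl{1}=\multipl{1}$ and $\nabla_\node^{(n)}1=0$ for every $n>0$, linearity of $\nabla_\node^{(n)}$ and of $\forward{}$ give $\forward{\nabla_\node^{(n)}(\magic(\multipl{1})-1)}=\forward{\nabla_\node^{(n)}\magic(\multipl{1})}=\forward{\nabla_\node^{(n)}\magic(\Multipl{1})}$, as required. (The empty case $k=0$ is likewise immediate, both sides being $\forward{\nabla_\node^{(n)}1}=0$ for $n>0$.)

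For the inductive step, suppose the identity holds for $k$ and consider $k+1$. Using $\Multipl{k+1}=\Multipl{k}+\multipl{k+1}$, Proposition \ref{prop:DiCE_multiply} applied with $f\equiv 1$ gives $\magic(\Multipl{k+1})\equivforward\magic(\Multipl{k})\magic(\multipl{k+1})$, hence for every $n$ (in particular every $n>0$),
\[
\forward{\nabla_\node^{(n)}\magic(\Multipl{k+1})}=\forward{\nabla_\node^{(n)}\,\magic(\Multipl{k})\magic(\multipl{k+1})}.
\]
Now apply the elementary identity $\magic(\Multipl{k})\magic(\multipl{k+1})=(\magic(\multipl{k+1})-1)\magic(\Multipl{k})+\magic(\Multipl{k})$ and split $\forward{\nabla_\node^{(n)}(\cdot)}$ over this sum by linearity. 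The inductive hypothesis (legitimate since $n>0$) rewrites the leftover $\magic(\Multipl{k})$ summand as $\forward{\nabla_\node^{(n)}\sum_{i=1}^{k}(\magic(\multipl{i})-1)\magic(\Multipl{i-1})}$, and the remaining summand $(\magic(\multipl{k+1})-1)\magic(\Multipl{k})$ is exactly the $i=k+1$ term of the target sum because $\magic(\Multipl{k})=\magic(\Multipl{(k+1)-1})$. Recombining yields $\forward{\nabla_\node^{(n)}\sum_{i=1}^{k+1}(\magic(\multipl{i})-1)\magic(\Multipl{i-1})}$, closing the induction.

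The argument is essentially bookkeeping, and the only substantive input is Proposition \ref{prop:DiCE_multiply}, used to peel $\multipl{k+1}$ out of the \magicbox{}; since that proposition holds at all orders of differentiation, nothing is lost by restricting to $n>0$. I expect the one point a careful reader will want to check is the orders-of-differentiation bookkeeping — that the inductive hypothesis is invoked (and valid) only at strictly positive orders, which is also all the statement requires — together with verifying that the algebraic split cleanly produces one summand to which the hypothesis applies and one that is precisely the new $i=k+1$ term.
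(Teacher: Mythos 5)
Your proof is correct, and it follows the same skeleton as the paper's: induction on $k$, the identical base case, and Proposition \ref{prop:DiCE_multiply} as the one substantive external input. The difference is in the inductive step's middle portion. The paper splits off the $i=k+1$ term of the sum, applies the inductive hypothesis to the remainder, and then reassembles $(\magic(\multipl{k+1})-1)\magic(\Multipl{k})+\magic(\Multipl{k})$ into $\forward{\nabla_\node^{(n)}\magic(\multipl{k+1})\magic(\Multipl{k})}$ via the general Leibniz rule — expanding the product derivative, discarding the $m=0$ term because $\forward{\magic(\multipl{k+1})-1}=0$, and re-inserting the missing term. You instead observe that $(\magic(\multipl{k+1})-1)\magic(\Multipl{k})+\magic(\Multipl{k})=\magic(\multipl{k+1})\magic(\Multipl{k})$ is an exact symbolic identity, so the two expressions have literally the same $n$-th derivative and no Leibniz bookkeeping is needed; the only place where evaluation-dependent reasoning enters is the single appeal to Proposition \ref{prop:DiCE_multiply}. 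This is a genuine (if modest) simplification that makes the orders-of-differentiation bookkeeping you flag at the end essentially automatic: the restriction to $n>0$ is used only in the base case and in invoking the inductive hypothesis, exactly as you say. Your observation that the two sides disagree at $n=0$ (evaluating to $1$ and $0$ respectively) is also correct and explains why the proposition must exclude that order.
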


\begin{proof}
    We will prove this using induction on $k$, starting with the base case $k=1$. Since $n>0$,
    \begin{equation}
        \forward{\nabla_\node^{(n)}  (\magic(\multipl{1}) - 1) \magic(0)} = \forward{\magic(0)\nabla_\node^{(n)}  \magic(\multipl{1})} = \forward{\nabla_\node^{(n)}  \magic(\multipl{1})}
    \end{equation}
    Next, assume the proposition holds for $k$ and consider $k+1$. Then by splitting up the sum,
    \begin{align}
            \forward{\nabla_\node^{(n)} \sum_{i=1}^{k+1} (\magic(\multipl{i}) - 1) \magic(\Multipl{i-1}) } 
            =&\forward{\nabla_\node^{(n)} (\magic(\multipl{k+1}) - 1) \magic(\Multipl{k}) + \nabla_\node^{(n)} \sum_{i=1}^{k} (\magic(\multipl{i}) - 1) \magic(\Multipl{i-1}) }\\
            =&\forward{\nabla_\node^{(n)} (\magic(\multipl{k+1}) - 1) \magic(\Multipl{k}) + \nabla_\node^{(n)}  \magic(\Multipl{k}) }
    \end{align}
    where in the second step we use the inductive hypothesis.

    We will next consider the first term using the general Leibniz rule:
    \begin{align}
        \forward{\nabla_\node^{(n)} (\magic(\multipl{k+1}) - 1) \magic(\Multipl{k}) }
        = \forward{\sum_{m=0}^n \binom{n}{m} \nabla_\node^{(m)} (\magic(\multipl{k+1}) - 1) \nabla_\node^{(n-m)} \magic(\Multipl{k}) }
    \end{align}
    We note that the term corresponding to $m=0$ can be ignored, as $\forward{\magic(\multipl{k+1}) - 1}=(1-1)=0$. Furthermore, for $m>0$,  $\forward{\nabla_\node^{(m)}(\magic(\multipl{k+1}) - 1)}=\forward{\nabla_\node^{(m)}\magic(\multipl{k+1})}$. Therefore,
    \begin{align}
        \forward{\nabla_\node^{(n)} (\magic(\multipl{k+1}) - 1) \magic(\Multipl{k}) }= \forward{\sum_{m=1}^n \binom{n}{m} \nabla_\node^{(m)} \magic(\multipl{k+1}) \nabla_\node^{(n-m)} \magic(\Multipl{k}) }
    \end{align}
    Finally, we add the other term $\nabla_\node^{(n)}  \magic(\Multipl{k})$ again. Then using the general Leibniz rule in the other direction and Proposition \ref{prop:DiCE_multiply},
    \begin{align}
        =& \forward{\sum_{m=1}^n \binom{n}{m} \nabla_\node^{(m)} \magic(\multipl{k+1}) \nabla_\node^{(n-m)} \magic(\Multipl{k}) + \nabla_\node^{(n)} \magic(\Multipl{k}) } \\
        =& \forward{\sum_{m=1}^n \binom{n}{m} \nabla_\node^{(m)} \magic(\multipl{k+1}) \nabla_\node^{(n-m)} \magic(\Multipl{k}) + \magic(\multipl{k+1})\nabla_\node^{(n)} \magic(\Multipl{k}) } \\
        =& \forward{ \sum_{m=0}^n \binom{n}{m} \nabla_\node^{(m)} \magic(\multipl{k+1}) \nabla_\node^{(n-m)} \magic(\Multipl{k}) }
        = \forward{\nabla_\node^{(n)} \magic(\multipl{k+1})\magic(\Multipl{k})} = \forward{\nabla_\node^{(n)} \magic(\Multipl{k+1})} 
    \end{align}
\end{proof}
Next, we note that we can rewrite the expectation of the \emph{Storchastic} surrogate loss in Equation \eqref{eq:storchastic-expectation} to 
\begin{equation}
   \mathbb{E}_{\proposal{1}} \bigg[ \sum_{\itersample{1}}  + 
        \dots \mathbb{E}_{\proposalcond{k}} \bigg[ \sum_{\itersample{k}}\forward{ \nabla_\node^{(n)} \Weight{k}\Big(\Additive{k} +    
             \magic( \Multipl{k} )  \detnode\Big)  } \bigg] \dots \bigg]
\end{equation}
where $\Additive{k}=\sum_{i=1}^{k} \magic\Big(\sum_{j=1}^{i-1} \multipl{j}\Big) \additive{i}$. This can be seen by using Condition 1 and 4 of Theorem 1 to iteratively move the $\magic\Big(\sum_{j=1}^{i-1} \multipl{j}\Big)\additive{i}$ terms into the expectations, which is allowed since they don't depend on $\stochastic_{>i}$.
\begin{theorem}
    Under the conditions of Theorem 1, 
    \begin{equation}
        \Additive{k} + \magic(\Multipl{k})\detnode \equivforward \sum_{i=1}^k \magic(\Multipl{i-1})(\additive{i} + (\magic(\multipl{i}) - 1)\detnode) + \detnode, 
    \end{equation}
    where $\Additive{k}=\sum_{i=1}^{k} \magic\Big(\sum_{j=1}^{i-1} \multipl{j}\Big) \additive{i}$. 
\end{theorem}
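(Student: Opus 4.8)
The plan is to reduce the claimed equivalence to the "product-free" identity of Proposition~\ref{prop:baseline-generator} by first cancelling the common \tadditive{} terms and then multiplying a known equivalence through by $\detnode$. Note that $\Additive{k}=\sum_{i=1}^{k}\magic\big(\sum_{j=1}^{i-1}\multipl{j}\big)\additive{i}=\sum_{i=1}^k\magic(\Multipl{i-1})\additive{i}$, so expanding the product on the right-hand side of the theorem gives
\[
\sum_{i=1}^k \magic(\Multipl{i-1})\big(\additive{i} + (\magic(\multipl{i})-1)\detnode\big) + \detnode \;=\; \Additive{k} + \detnode\sum_{i=1}^k(\magic(\multipl{i})-1)\magic(\Multipl{i-1}) + \detnode .
\]
Because $\forward{\nabla_\node^{(n)}(\cdot)}$ is additive, $\equivforward$ is preserved under adding a common summand (here $\Additive{k}$), so it suffices to establish
\[
\magic(\Multipl{k})\,\detnode \;\equivforward\; \detnode + \detnode\sum_{i=1}^k(\magic(\multipl{i})-1)\magic(\Multipl{i-1}) .
\]

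Next I would prove the version without the $\detnode$ factor, namely $\magic(\Multipl{k}) \equivforward 1 + \sum_{i=1}^k(\magic(\multipl{i})-1)\magic(\Multipl{i-1})$. For $n=0$ this is immediate since every $\magic$ forward-evaluates to $1$, so both sides evaluate to $1$; for $n>0$ it is exactly Proposition~\ref{prop:baseline-generator}, because $\forward{\nabla_\node^{(n)}1}=0$. Then I would promote this to the statement with $\detnode$ multiplied in, using the elementary fact that $g\equivforward h$ implies $g\,\detnode\equivforward h\,\detnode$: apply the general Leibniz rule to both $\nabla_\node^{(n)}(g\detnode)$ and $\nabla_\node^{(n)}(h\detnode)$, push $\forward{}$ through the resulting finite sums of products (legitimate by the defining rules $\forward{a+b}=\forward a+\forward b$ and $\forward{a\cdot b}=\forward a\cdot\forward b$), and match term by term via $\forward{\nabla_\node^{(m)}g}=\forward{\nabla_\node^{(m)}h}$ for every $m$. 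Taking $g=\magic(\Multipl{k})$ and $h=1+\sum_i(\magic(\multipl{i})-1)\magic(\Multipl{i-1})$ and multiplying by $\detnode$ yields the displayed equivalence, hence the theorem.

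The computation is essentially bookkeeping, so there is no deep obstacle; the two places that need care are: (i) Proposition~\ref{prop:baseline-generator} is stated only for $n>0$, so the $n=0$ case must be checked separately — it is harmless because all $\magic$ factors forward-evaluate to $1$ and both sides collapse to $\forward{\detnode}$; and (ii) the "multiply through by $\detnode$" step, which must be justified purely from the forward-mode rules and the Leibniz rule rather than any cancellation property, so that it holds for the $\detnode$ of the SCG, which is already identical under evaluation by the hypotheses of Theorem~\ref{thrm:storchastic}. Once these are in place, the term-by-term matching of the two Leibniz expansions completes the argument.
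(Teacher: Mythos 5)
Your proof is correct and follows essentially the same route as the paper's: both hinge on Proposition~\ref{prop:baseline-generator} together with the general Leibniz rule to distribute $\nabla_\node^{(n)}$ over the product with $\detnode$, with the $m=0$ Leibniz term (where $\forward{\magic(\Multipl{k})}=1$, resp.\ $\forward{\magic(\multipl{i})-1}=0$) producing the lone $\detnode$. The only difference is organizational --- you first cancel the common summand $\Additive{k}$ and package the Leibniz manipulation as a general ``multiply an $\equivforward$-equivalence through by $\detnode$'' congruence, whereas the paper performs the same substitution inline on the full expression.
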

\begin{proof}%
    \begin{align}
        \forward{\nabla_\node^{(n)}(\Additive{k} + \magic(\Multipl{k})\detnode)}
        =& \forward{\nabla_\node^{(n)}\Additive{k} + \sum_{m=0}^n\binom{n}{m}\nabla_\node^{(m)} \magic(\Multipl{k}) \nabla_\node^{(n-m)} \detnode}\label{eq:thrm2-2}\\
        =& \forward{\nabla_\node^{(n)}\Additive{k} + \sum_{m=1}^n\binom{n}{m}\nabla_\node^{(m)} \magic(\Multipl{k}) \nabla_\node^{(n-m)} \detnode + \nabla_\node^{(n)} \detnode}\label{eq:thrm2-3}\\
        =& \forward{\nabla_\node^{(n)}\Additive{k} + \sum_{m=1}^n\binom{n}{m}\nabla_\node^{(m)} \sum_{i=1}^k (\magic(\multipl{i}) - 1)\magic(\Multipl{i-1}) \nabla_\node^{(n-m)} \detnode + \nabla_\node^{(n)} \detnode}\label{eq:thrm2-4} \\
        =& \forward{\nabla_\node^{(n)}\big(\sum_{i=1}^k \magic(\Multipl{i-1}) \additive{i} + \sum_{i=1}^k(\magic(\multipl{i}) - 1) \magic(\Multipl{i-1})\detnode +  \detnode\big)}\label{eq:thrm2-5} \\
        =& \forward{\nabla_\node^{(n)}\big(\sum_{i=1}^k \magic(\Multipl{i-1}) (\additive{i} + (\magic(\multipl{i}) - 1) \detnode) +  \detnode\big)} 
    \end{align}
    From \eqref{eq:thrm2-2} to \eqref{eq:thrm2-3}, we use that $m=0$ evaluates to $\nabla_\node^{(n)} \detnode$. From \eqref{eq:thrm2-3} to \eqref{eq:thrm2-4}, we use Proposition \ref{prop:baseline-generator}. From \eqref{eq:thrm2-4} to \eqref{eq:thrm2-5}, we do a reversed general Leibniz rule on the second term. To be able do that, we use that setting $m=0$ in the second term would evaluate to 0 as $\forward{\magic(\multipl{i}) - 1} = 0$.
\end{proof}

Next, consider the inner computation of the \emph{Storchastic} framework in which all $\additive{i}$ use a baseline of the form in Equation \ref{eq:higher-order-baseline}. Note that $\additive{i}=0$ is also in this form by setting $\baseline{i} = 0$. Assume $n>0$ and without loss of generality\footnote{This is assumed simply to make the notation clearer. If the weights are differentiable, the same thing can be shown using an application of the general Leibniz rule.} assume $\nabla_\node^{(m)}\weight{i}=0$ for all $m$ and $i$. Then using Proposition \ref{prop:baseline-generator},
\begin{align}
    &\prod_{i=1}^k \weight{i} \forward{ \nabla_\node^{(n)} \Big( \sum_{i=1}^k(1-\magic(\multipl{i}))\magic(\Multipl{i-1}) b_i +\magic(\Multipl{k}) \detnode \Big) }\\
    =&\prod_{i=1}^k \weight{i} \forward{ \nabla_\node^{(n)} \Big( -\sum_{i=1}^k(\magic(\multipl{i})-1)\magic(\Multipl{i-1}) b_i +\sum_{i=1}^k(\magic(\multipl{i})-1)\magic(\Multipl{i-1})  \detnode \Big) }\\
    =& \prod_{i=1}^k \weight{i} \forward{ \nabla_\node^{(n)} \sum_{i=1}^k(\magic(\multipl{i})-1)\magic(\Multipl{i-1}) (\detnode - \baseline{i}) }
\end{align}

The intuition behind the variance reduction of this any-order gradient estimate is that all terms of the gradient involving $\multipl{i}$, possibly multiplied with other $\multipl{j}$ such that $j<i$, use the $i$-th baseline $\baseline{i}$. This allows modelling baselines for each sampling step to effectively make use of background knowledge or known statistics of the corresponding set of random variables.

We note that our baseline is slightly different from \cite{maoBaselineAnyOrder2019}, which instead of $\magic(\Multipl{i-1})=\magic(\sum_{j=1}^{i-1}\multipl{j})$ used $\magic(\sum_{\stochastic_{j}\before \stochastic_i} \multipl{j})$.
Although this might initially seem more intuitive, we will show with a small counterexample why we should consider any stochastic nodes ordered topologically before $i$ instead of just those that directly influence $i$. 

Consider the stochastic computation graph with stochastic nodes $p(\stochnode_1|\node)$ and $p(\stochnode_2|\node)$ and cost function $f(x_1, x_2)$. For simplicity, assume we use single-sample score function estimators for each stochastic node. Consider the second-order gradient of the cost function using the recursion in Proposition \ref{prop:DICE}:
\begin{align}
    \nabla_\node^2 \mathbb{E}_{\stochnode_1, \stochnode_2}[f(x_1, x_2)] =& \mathbb{E}_{\stochnode_1, \stochnode_2}[\forward{\nabla_\node^2 \magic(\sum_{i=1}^2\log p(x_i|\node)) f(x_1, x_2)}] \\
    =& \mathbb{E}_{\stochnode_1, \stochnode_2}[f(x_1, x_2)\big(\sum_{i=1}^2 \nabla_\node^2\log p(x_i|\node) + (\nabla_\node \log p(x_i|\node))^2  \\
    &+ 2\nabla_\node\log p(x_1|\node)\nabla_\node\log p(x_2|\node) )) \big)] 
\end{align}
Despite the fact that $x_1$ does not directly influence $x_2$, higher-order derivatives will have terms that involve both the log-probabilities of $x_1$ and $x_2$, in this case $2\nabla_\node\log p(x_1|\node)\nabla_\node\log p(x_2|\node)$. Note that since $a$ does not directly influence $b$, the baseline generated for the second-order derivative using the method in \cite{maoBaselineAnyOrder2019} would be  
\begin{align}
    \forward{\nabla_\node^2 \sum_{i=1}^2\additive{i}} = \forward{\sum_{i=1}^2\nabla_\node^2 (1-\magic(\log p(x_i|\node)))\magic(0) \baseline{i} }
    = -\sum_{i=1}^2\nabla_\node^2 \log p(x_i|\node))) \baseline{i} 
\end{align}
This baseline does not have a term for $2\nabla_\node\log p(x_1|\node)\nabla_\node\log p(x_2|\node)$, meaning the variance of that term will not be reduced through a baseline. The baseline introduced in Equation \ref{eq:higher-order-baseline} will include it, since
\begin{align}
    \forward{\nabla_\node^2\sum_{i=1}^2\additive{i}} =& \forward{\nabla_\node^2 (1-\magic(\log p(x_1|\node))) \baseline{i} + (1-\magic(\log p(x_2|\node)))\magic(\log p(x_1|\node)) \baseline{i} }\\
    =& -\sum_{i=1}^2\nabla_\node^2 \log p(x_i|\node))) \baseline{i} - 2\nabla_\node\log p(x_1|\node)\nabla_\node\log p(x_2|\node)
\end{align}

Designing a good baseline function $\fbaseline{i, j}$ that will reduce variance significantly is highly application dependent. Simple options are a moving average and the leave-one-out baseline, which is given by $\fbaseline{i} = \frac{1}{m-1}\sum_{j'=1, j'\neq j}^m \bot( \circ )f(\vals{< i}, x_{i, j'}))$ \cite{koolBuyREINFORCESamples2019,mnihVariationalInferenceMonte2016}. More advanced baselines can take into account the previous stochastic nodes $\stochastic_1, ..., \stochastic_{i-1}$ \cite{weberCreditAssignmentTechniques2019}. Here, one should only consider the stochastic nodes that directly influence $\stochnode_i$, that is, $\stochastic_{\before i}$. Another popular choice is self-critical baselines \cite{rennieSelfCriticalSequenceTraining2017,koolAttentionLearnSolve2019} that use deterministic test-time decoding algorithms to find $\hat{\vals{i}}$ and then evaluate it, giving $\fbaseline{i}=f(\hat{\vals{i}})$.

\section{Examples of Gradient Estimators}
\label{sec:gradient-estimators}
In this section, we prove the validity of several gradient estimators within the \emph{Storchastic} framework, focusing primarily on discrete gradient estimation methods. 


\subsection{Expectation}
\label{sec:expectation}
Assume $p(x_i)$ is a discrete (ie, categorical) distribution with a finite amount of classes $1, ..., C_i$. While this is not an estimate but the true gradient, it fits in the \emph{Storchastic} framework as follows:

\begin{enumerate}
    \item $\weight{i}(x_i)=p(x_i|\vals{<i})$
    \item $\fproposalcond{i} = \delta_{\{1, ..., C_i\}}(\sampleset_i)$ (that is, a dirac delta distribution with full mass on sampling exactly the sequence $\{1, ..., C_i\}$)
    \item $\multipl{i}(x_i)=0$
    \item $\additive{i}(x_i)=0$
\end{enumerate}

Next, we prove the individual conditions to show that this method can be used within \emph{Storchastic}, starting with condition 1:
\begin{align}
    \mathbb{E}_{\proposalcond{i}}[\sum_{x_i\in \sampleset{i}} \forward{\nabla_\node^{(n)} \weight{i}\magic(\multipl{i})f(x_i)}] 
    =& \sum_{j=1}^{C_i} \forward{\nabla_\node^{(n)} p(x_i=j|\vals{<i})\magic(0)f(j)} \\
    =& \forward{\sum_{j=1}^{C_i} \sum_{m=0}^n  \nabla_\node^{(n-m)} p(x_i=j|\vals{<i})\nabla_\node^{(m)}\magic(0)f(j)}
\end{align}
Using the recursion in Proposition \ref{prop:DICE}, we see that $\forward{\nabla_\node^{(m)}\magic(0)f(j)}=\nabla_\node^{(m)} f(j)$, since $\nabla_\node \multipl{i}=\nabla_\node 0= 0$. So,
\begin{align}
    \forward{\sum_{j=1}^{C_i} \sum_{m=0}^n  \nabla_\node^{(n-m)} p(x_i=j|\vals{<i})\nabla_\node^{(m)} f(j)} = \forward{\sum_{j=1}^{C_i} \nabla_\node^{(n)} p(x_i=j|\vals{<i}) f(j)} = \nabla_\node^{(n)}\mathbb{E}_{x_i}[f(x_i)].
\end{align}


Condition 2 follows simply from $\additive{i}(x_i)=0$, and condition 3 follows from the fact that $\sum_{j=1}^{C_i}p(x_i=j|\vals{<i}) = 1$, that is, constant. Condition 4 follows from the SCG being identical under evaluation, ie $\forward{p(x_i=j|\vals{<i})}=p(x_i=j|\vals{<i})$. 

It should be noted that this proof is not completely trivial, as it shows how to implement the expectation so that it can be combined with other gradient estimators while making sure the pathwise derivative through $f$ also gets the correct gradient.  


\subsection{Score Function}
The score function is the best known general gradient estimator and is easy to fit in \emph{Storchastic}. 

\subsubsection{Score Function with Replacement}
\label{sec:sfwr}
We consider the case where we take $m$ samples with replacement from the distribution $p(x_i|\vals{<i})$, and we use a baseline $\fbaseline{i}$ for the first-order gradient estimate. 
\begin{enumerate}
    \item $\weight{i}(x_i)= \frac{1}{m}$
    \item $\proposalcond{i}=\prod_{j=1}^m p(x_{i, j}|\vals{<i})$. That is, $x_{i, 1}, ..., x_{i, m}\sim p(x_i|\vals{<i})$.
    \item $\multipl{i}(x_i)=\log p(x_i|\vals{<i})$
    \item $\fadditive{i} = (1-\magic(\multipl{i})) \fbaseline{i}$, where $\fbaseline{i}$ is not differentiable, that is, $\forward{\nabla_\node^{(n)}\fbaseline{i}}=0$ for $n>0$. 
\end{enumerate}
We start by showing that condition 1 holds. We assume $p(x_i|\vals{<i})$ is a continuous distribution and note that the proof for discrete distributions is analogous. 

We will show how to prove that sampling a set of $m$ samples with replacement can be reduced in expectation to sampling a single sample. Here, we use that $x_{i,1}, ..., x_{i,m}$ are all independently (line 1 to 2) and identically (line 2 to 3) distributed.
\begin{align}
    \mathbb{E}_{\proposalcond{i}}[\sum_{j=1}^m \forward{\nabla_\node^{(n)} \frac{1}{m}\magic(\multipl{i, j}) f(\vals{< i}, x_{i, j}) }] 
    =&\frac{1}{m} \sum_{j=1}^m \mathbb{E}_{x_{i,j}\sim p(x_i)}[ \forward{\nabla_\node^{(n)}  \magic(\multipl{i, j}) f(\vals{\leq i}, x_{i, j}) } ]\\
    =&\frac{1}{m} \sum_{j=1}^m \mathbb{E}_{x_{i}\sim p(x_i)}[ \forward{\nabla_\node^{(n)}  \magic(\multipl{i}) f(\vals{\leq i}) } ] =\mathbb{E}_{x_i}\Big[\forward{\nabla_\node^{(n)}  \magic(\multipl{i}) f(\vals{\leq i}) } \Big]
\end{align}
A proof that $\mathbb{E}_{x_i}\Big[\forward{\nabla_\node^{(n)}  \magic(\multipl{i}) f(\vals{\leq i}) } \Big] =  \nabla_\node^{(n)} \mathbb{E}_{x_i}[f(\vals{\leq i})]$ was first given in \cite{foersterDiCEInfinitelyDifferentiable2018}. For completeness, we give a similar proof here, using induction.

First, assume $n=0$. Then, $\mathbb{E}_{x_i}[\forward{\magic(\multipl{i})}f(\vals{\leq i})] = \mathbb{E}_{x_i}[\forward{f(\vals{\leq i})}] = \mathbb{E}_{x_i}[f(\vals{\leq i}]$.

Next, assume it holds for $n$, and consider $n+1$. Using Proposition \ref{prop:DICE}, we find that $g^{(n+1)}(\vals{\leq i}) = \nabla_\node g^{(n)}(\vals{\leq i}) + g^{(n)}(\vals{\leq i})\nabla_\node \log p(x_i|\vals{<i})$. Writing the expectation out, we find 
\begin{align}
    &\mathbb{E}_{x_i}[\forward{\nabla_\node g^{(n)}(\vals{\leq i}) + g^{(n)}(\vals{\leq i})\nabla_\node \log p(x_i|\vals{<i})}] \\
    =&\int \forward{p(x_i|\vals{<i})(\nabla_\node g^{(n)}(\vals{\leq i}) + g^{(n)}(\vals{\leq i}) \frac{\nabla_\node p(x_i|\vals{<i})}{p(x_i|\vals{<i})})} dx_i\\ 
    =& \int \forward{\nabla_\node p(x_i|\vals{<i})g^{(n)}(\vals{\leq i})} dx_i 
    =\forward{\nabla_\node\mathbb{E}_{x_i}[   g^{(n)}(\vals{\leq i})]} 
\end{align}
By Proposition~\ref{prop:equiv-eval}, $g^{(n)}(x_i)$ is identical under evaluation, since by the assumption of Theorem~\ref{thrm:storchastic} both $p(x_i|\vals{<i})$ and $f(\vals{\leq i})$ are identical under evaluation. As a result, $\forward{\nabla_\node \mathbb{E}_{x_i}[ g^{(n)}(\vals{\leq i}) ]} = \nabla_\node \mathbb{E}_{x_i}[  \forward {g^{(n)}(\vals{\leq i})}]$. Therefore, by the induction hypothesis,
\begin{align}
    \mathbb{E}_{\proposalcond{i}}[\forward{g^{(n+1)}(\vals{\leq i})}]=\nabla_\node \mathbb{E}_{x_i}[  \forward {g^{(n)}(\vals{\leq i})}] = \nabla_\node^{(n+1)} \mathbb{E}_{x_i}[ f(\vals{\leq i})]
\end{align}

Since the weights ($\frac{1}{m}$) are constant, condition 3 is satisfied.

\subsubsection{Importance Sampling}
\label{sec:importance-sampling}
A common use case for weighting samples is importance sampling \cite{rubinsteinSimulationMonteCarlo2016}. In the context of gradient estimation, it is often used in off-policy reinforcement-learning \cite{mahmoodWeightedImportanceSampling2014} to allow unbiased gradient estimates using samples from another policy. For simplicity, we consider importance samples within the context of score function estimators, single-sample estimates, and use no baselines. The last two can be introduced using the techniques in Section \ref{sec:sfwr} and \ref{sec:baselines}.
\begin{enumerate}
    \item $\weight{i} = \bot(\frac{p(x_i|\vals{<i})}{q(x_i|\vals{<i})})$,
    \item $q(x_i|\vals{<i})$ is the sampling distribution,
    \item $\fmultipl{i} = \log p(x_i|\vals{<i})$,
    \item $\fadditive{i} = 0$.
\end{enumerate}
Condition 3 follows from the fact that $\nabla_\node^{(n)}\weight{i}=0$ for $n>0$, since the importance weights are detached from the computation graph. Condition 1:
\begin{align}
    \mathbb{E}_{\proposalcond{i}}[\forward{\nabla_\node^{(n)} \bot\Big(\frac{p(x_i|\vals{<i})}{q(x_i|\vals{<i})}\Big) \magic(\multipl{i}) f(\vals{\leq i}) }] 
    =&\int_{\sspace_i}q(x_i|\vals{<i}) \frac{p(x_i|\vals{<i})}{q(x_i|\vals{<i})} \forward{\nabla_\node^{(n)}  \magic(\multipl{i}) f(\vals{\leq i}) } dx_i\\
    =&\mathbb{E}_{\stochastic_i}[\forward{\nabla_\node^{(n)} \magic(\multipl{i}) f(\vals{\leq i}) }] = \forward{\mathbb{E}_{\stochastic_i}[f(\vals{\leq i})]}
\end{align}
where in the last step we use the proven condition 1 of \ref{sec:sfwr}. Note that this holds both for $n=0$ and $n>0$.

\subsubsection{Discrete Sequence Estimators}
\label{sec:discrete-seqs}
Recent literature introduced several estimators for sequences of discrete random variables. These are quite similar in how they are implemented in \emph{Storchastic}, which is why we group them together.

The sum-and-sample estimator chooses a set of sequences $\hat{\sampleset_i}\subset \sspace_{i}$ and chooses $k - |\hat{\sampleset_i}|>0$ samples from $\sspace_i \setminus \hat{\sampleset_{i}}$. This set can be the most probable sequences \cite{liuRaoBlackwellizedStochasticGradients2019} or can be chosen randomly \cite{koolEstimatingGradientsDiscrete2020}. This is guaranteed not to increase variance through Rao-Blackwellization \cite{casellaRaoblackwellisationSamplingSchemes1996,liuRaoBlackwellizedStochasticGradients2019}. It is often used together with deterministic cost functions $f$, which allows memorizing the cost-function evaluations of the sequences in $\hat{\sampleset_i}$. In this context, the estimator is known as Memory-Augmented Policy Optimization \cite{liangMemoryAugmentedPolicy2019}. 
\begin{enumerate}
    \item $\fweight{i}=I[\vals{i}\in \hat{\sampleset_i}] p(\vals{i}|\vals{<i}) + I[\vals{i}\not\in \hat{\sampleset_i}] \frac{p(\vals{i}\not\in \hat{\sampleset_i})}{k - |\hat{\sampleset_i}|}$
    \item $q(\sampleset_{i}) = \delta_{\hat{\sampleset_i}}(\vals{i, 1}, ..., \vals{i, |\hat{\sampleset_i}|})\cdot \prod_{j=|\hat{\sampleset_i}| + 1}^{k} p(\vals{i, j}|\vals{i, j}\not\in \hat{\sampleset_{i}}, \vals{<i})$
\end{enumerate}
were $p(\vals{i}\not\in \hat{\sampleset_i})=1-\sum_{\vals{i}'\in \hat{\sampleset_i}  }p(\vals{i}'|\vals{<i})$. This essentially always `samples' the set $\hat{\sampleset_i}$ using the Dirac delta distribution, and then samples $k$ more samples out of the remaining sequences, with replacement. The  estimator resulting from this implementation is
\begin{align}
    \mathbb{E}_{\proposalcond{i}}[\sum_{j=1}^{|\hat{\sampleset_i}|} p(\vals{i, j}|\vals{<i}) f(\vals{<i}, \vals{i, j}) + \sum_{j=|\hat{\sampleset_i}|+1}^k \frac{p(\vals{i}\not\in \hat{\sampleset_i})}{k - |\hat{\sampleset_i}|} \magic(\multipl{i})f(\vals{<i}, \vals{i, j})]
\end{align}
Using the result from Section \ref{sec:expectation}, we see that 
\begin{align}
\forward{\nabla_\node^{(n)}  \mathbb{E}_{\proposalcond{i}}[\sum_{j=1}^{|\hat{\sampleset_i}|} p(\vals{i, j}|\vals{<i}) f(\vals{<i}, \vals{i, j})]} 
=&\forward{\nabla_\node^{(n)} p(\vals{i}\in \hat{\sampleset_i}) \mathbb{E}_{\proposalcond{i}}[\sum_{j=1}^{|\hat{\sampleset_i}|} p(\vals{i, j}|\vals{i, j}\in \hat{\sampleset_i}, \vals{<i}) f(\vals{<i}, \vals{i, j})]} \\
=& \nabla_\node^{(n)} p(\vals{i}\in \hat{\sampleset_i})\mathbb{E}_{p(\vals{i}|\vals{i}\in \hat{\sampleset_{i}}, \vals{<i})}[f(\vals{\leq i} )].
\end{align}
Similarly, from the result for sampling with replacement of score functions in Section \ref{sec:sfwr}, 
\begin{align}
    &\forward{\nabla_\node^{n}\mathbb{E}_{\proposalcond{i}}[\sum_{j=|\hat{\sampleset_i}|+1}^k \frac{p(\vals{i}\not\in \hat{\sampleset_i})}{k - |\hat{\sampleset_i}|} \magic(\multipl{i})f(\vals{<i}, \vals{i, j})]} 
    =\nabla_\node^{(n)} p(\vals{i}\not\in \hat{\sampleset_i})\mathbb{E}_{p(\vals{i}|\vals{i}\not\in \hat{\sampleset_{i}}, \vals{<i})}[f(\vals{\leq i} )]
\end{align}
Added together, these form $\nabla_\node^{(n)}\mathbb{E}_{\stochastic_i}[f(\vals{\leq i})]$, which shows that the sum-and-sample estimator with the score function is unbiased for any-order gradient estimation. The variance of this estimator can be further reduced using a baseline from Section \ref{sec:baselines}, such as the leave-one-out baseline.

The \emph{unordered set estimator} is a low-variance gradient estimation method for a sequence of discrete random variables $\stochastic_i$ \cite{koolEstimatingGradientsDiscrete2020}. It makes use of samples without replacement to ensure that each sequence in the sampled batch will be different. We show here how to implement this estimator within \emph{Storchastic}, leaving the proof for validity of the estimator for \cite{koolEstimatingGradientsDiscrete2020}. 

\begin{enumerate}
    \item $\fproposalcond{i}$ is an ordered sample without replacement from $p(\stochastic_i|\vals{<i})$. For sequences, samples can efficiently be taken in parallel using ancestral gumbel-top-k sampling \cite{koolAncestralGumbeltopkSampling2020,koolStochasticBeamsWhere2019}. An ordered sample without replacement means that we take a sequence of samples, where the $i$th sample cannot equal the $i-1$ samples before it.
    \item $\fweight{i} = \bot\Big( \frac{p(\vals{i}|\vals{<i}) p(U=\sampleset_i|o_1=\vals{i}, \vals{<i})}{p(U=\sampleset_i|\vals{<i})} \Big)$, where $p(U=\sampleset_i|\vals{<i})$ is the probability of the \emph{unorderd} sample without replacement, and $p(U=\sampleset_i|o_1=\vals{i}, \vals{<i})$ is the probability of the unordered sample without replacement, given that, if we were to order the sample, the first of those ordered samples is $\vals{i}$.
    \item $\fmultipl{i} = \log p(\vals{i}|\vals{<i})$
    \item $\fadditive{i} =  (1-\magic(\multipl{i})) \baseline{i}(\vals{<i}, \sampleset_i)$, where~$\baseline{i}(\vals{<i}, \sampleset_i)= \\ \sum_{\vals{i}'\in\sampleset_i}\bot\Big( \frac{p(\vals{i}'|\vals{<i})p(U=\sampleset_i|o_1=\vals{i}, o_2=\vals{i}', \vals{<i})}{p(U=\sampleset_i|o_1=\vals{i}, \vals{<i})} f(\vals{i}') \Big)$
\end{enumerate}

This estimator essentially reweights each sample without replacement to ensure it remains unbiased under this sampling strategy. This estimator can be used for any-order differentiation, since $\mathbb{E}_{\proposalcond{i}}[\sum_{\itersample{i}}\forward{\weight{i} f(\vals{i})}] = \mathbb{E}_{\stochastic_i}[\forward{f(\vals{i})}]$ (see \cite{koolEstimatingGradientsDiscrete2020} for the proof) and $\forward{\nabla_\node^{(n)} \weight{i}}=0$ for $n>0$. The baseline is 0 in expectation for the zeroth and first order evaluation \cite{koolEstimatingGradientsDiscrete2020}. We leave for future work whether it is also a mean-zero baseline for $n>1$. 


\subsubsection{LAX, RELAX and REBAR}
\label{sec:relax}
REBAR \cite{tuckerREBARLowvarianceUnbiased2017} and LAX and RELAX \cite{grathwohlBackpropagationVoidOptimizing2018} are single-sample score-function based methods that learn a control variate to minimize variance. 
The control variate is implemented using reparameterization. 
We start with LAX as it is simplest, and then extend the argument to RELAX, since REBAR is a special case of RELAX.
We use $\baseline{i, \phi}$ to denote the learnable control variate. 
We have to assume there is no pathwise dependency of $\node$ with respect to $\baseline{i, \phi}$. Furthermore, we assume $\vals{i}$ is a reparameterized sample of $p(\vals{i}|\vals{\leq i})$.
The \tadditive{} component then is: 
\begin{equation}
    \fadditive{i} = \baseline{i, \phi}(\vals{\leq i}) - \magic(\multipl{i})\bot(\baseline{i, \phi}(\vals{\leq i}))
\end{equation}
Since LAX uses normal single-sample score-function, we only have to show condition 2, namely that this \tadditive{} component has 0 expectation for all orders of differentiation. 
\begin{align}
    \mathbb{E}_{\stochastic_i}\bigg[\forward{\nabla_\node^{(n)} \big(\baseline{i, \phi} - \magic(\multipl{i})\bot(\baseline{i, \phi}) \big) }\bigg] = 0
\end{align}
$\forward{\mathbb{E}_{\stochastic_i}[\nabla_\node^{(m)} \baseline{i, \phi}]}$ is the reparameterization estimate of $\forward{\nabla_\node^{(m)} \mathbb{E}_{\stochastic_i}[\baseline{i, \phi}]}$ and $\forward{\mathbb{E}_{\stochastic_i}[\nabla_\node^{(m)} \magic(\log p(\vals{i}|\vals{\leq i}))\bot(\baseline{i, \phi})]}$ is the score-function estimate under the assumption that $\baseline{i, \phi}$ has no pathwise dependency. 
As both are unbiased expectations of the $m$-th order derivative, their difference has to be 0 in expectation, proving condition 2. 
Furthermore, the 0th order evaluation is exactly 0.  
The parameters $\phi$ are trained to minimize the gradient estimate variance.

The \tadditive{} for RELAX \cite{grathwohlBackpropagationVoidOptimizing2018}, an extension of LAX to discrete random variables, is similar. 
It first samples a continuously relaxed input $q(z_i|\vals{<i})$, which is then transformed to a discrete sample $\vals{i}\sim p(\vals{i}|\vals{<i})$. 
See \cite{grathwohlBackpropagationVoidOptimizing2018,tuckerREBARLowvarianceUnbiased2017} for details on how this relaxed sampling works.
It also samples a relaxed input \emph{condition on the discrete sample}, ie $q(\tilde{z_i}|\vals{\leq i})$.
The corresponding \tadditive{} is
\begin{equation}
    \fadditive{i} = \baseline{i, \phi}(z_i) - \bot(\baseline{i, \phi}(z_i) ) - \baseline{i, \phi}(\tilde{z_i}) + (2 - \magic(\multipl{i}))\bot(\baseline{i, \phi}(\tilde{z_i}))
\end{equation}
Here, we subtract $\bot(\baseline{i, \phi}(z_i))$ to ensure the first two terms together sum to 0 during 0th order evaluation, and add $2 \bot(\baseline{i, \phi}(\tilde{z_i}))$ to ensure the last two terms sum to 0.  
Note that for $n>0$, $\forward{\nabla_\node^{(n)} \fadditive{i}}=\forward{\nabla_\node^{(n)} \big(\baseline{i, \phi}(z_i) - \baseline{i, \phi}(\tilde{z_i}) - \magic(\multipl{i}) \bot(\baseline{i, \phi}(\tilde{z_i}))}\big)$. 
We refer the reader to \cite{grathwohlBackpropagationVoidOptimizing2018, tuckerREBARLowvarianceUnbiased2017} for details on why this \tadditive{} is zero in expectation for 1st order differentiation.
We note that the results extend to higher-order differentiation since the $n$-th order derivative of $\magic(\multipl{i})$ gives $n$th-order score functions which are unbiased expectations of the $n$-th order derivative.

\subsubsection{ARM}
ARM is a score-function based estimator for multivariate Bernouilli random variables. 
For our implementation, we use the baseline formulation mentioned in \cite{yinARMAugmentREINFORCEMergeGradient2019}, and we follow the derivation in terms of the Logistic random variables from \cite{dongDisARMAntitheticGradient2020}. 
ARM assumes a real-valued parameter vector $\alpha$, which can be the output of a neural network. 
The probabilities of the Bernoulli random variable are then assumed to be $\sigma(\alpha)$ where $\sigma$ is the sigmoid function. 

\begin{enumerate}
    \item $\fproposalcond{i}$ is a reparameterized sample from the multivariate Bernouilli distribution. 
    First, it samples $\beps\sim \operatorname{Logistic}(\bzero, \boldsymbol{1})$. 
    Define $\bz_i=\alpha + \beps$ and $\tilde{\bz_i}=\alpha - \beps$.
    We find $\vals{i} = I[\bz_i > \bzero]$. Then, with this procedure, $\vals{i}\sim \operatorname{Bernouilli}(\sigma(\alpha))$. 
    \item $\fweight{i} = 1$
    \item $\fmultipl{i} = \log q_\alpha(\bz_i)$, where $q_\alpha$ is the density function of $\operatorname{Logistic}(\alpha, 1)$. 
    \item $\fadditive{i} = \magic(1-\fmultipl{i}) \frac{1}{2} (f(\vals{<i}, \bz_i > \bzero) + f(\vals{<i}, \tilde{\bz_i} > \bzero))$
\end{enumerate}

Since $\mathbb{E}_{\vals{i}\sim\operatorname{Bernoulli}(\sigma(\alpha_\btheta))}[f(\vals{i})] = \mathbb{E}_{\beps\sim \operatorname{Logistic}(0, 1)}[f(\alpha_\btheta + \beps > \bzero)] = \mathbb{E}_{\bz_i\sim \operatorname{Logistic}(\alpha_\btheta, 1)}[f(\bz_i > \bzero)]$, any unbiased estimate of the logistic reparameterization must also be an unbiased estimate of the original Bernouilli formulation. 
This equality follows because the CDF of the logistic distribution is the logistic function (that is, the sigmoid function). 
$\fmultipl{i}$ is the (unbiased) score function of the logistic reparameterization, which we proved to be an unbiased estimate.

The \tadditive{} has expectation 0 for zeroth and first order differentiation. 
This is because it relies on the score function being an odd function \cite{buesingStochasticGradientEstimation2016}, that is, $\nabla_\node \log q_{\alpha}(\bz_i) = - \nabla_\node \log q_{\alpha}(\tilde{\bz_i})$.
Therefore, $\mathbb{E}_{\beps}[(f(\vals{<i}, \bz_i > 0) + f(\vals{<i}, \tilde{\bz_i} > 0)) \nabla_\node \log q_\alpha(\bz_i)] = \mathbb{E}_{\beps}[f(\vals{<i}, \bz_i > 0) \nabla_\node \log q_\alpha(\bz_i) - f(\vals{<i}, \tilde{\bz_i} > 0) \nabla_\node \log q_\alpha(\tilde{\bz_i})]$. 
Note that, by symmetry of the logistic distribution, $\mathbb{E}_{\beps}[f(\vals{<i}, \bz_i > 0) \nabla_\node \log q_\alpha(\bz_i)] = - \mathbb{E}_{\beps}[f(\vals{<i}, \tilde{\bz_i} > 0) \nabla_\node \log q_\alpha(\tilde{\bz_i})]$, meaning the baseline is zero in expectation.
However, this derivation only holds for odd functions! 
Unfortunately, the second-order score function $\frac{\nabla_\node^{(2)} q_\alpha(\bz_i)}{q_\alpha(\bz_i)}$ is an even function since the derivative of an odd function is always an even function. 
Therefore, the ARM estimator will only be unbiased for first-order gradient estimation.



\subsubsection{GO Gradient}
\label{sec:gogradient}
The GO gradient estimator \cite{congGOGradientExpectationbased2019} is a method that uses the CDF of the distribution to derive the gradient. 
For continuous distributions, it reduces to implicit reparameterization gradients which can be implemented through transforming the computation graph, like other reparameterization methods.
For $m$ independent discrete distributions of $d$ categories, the first-order gradient is given as:
\begin{equation}
    \mathbb{E}_{p(\vals{i}|\vals{\leq i})}\Big[\sum_{j=1}^m (f(\vals{\leq i}) - f(\vals{\leq i\setminus \vals{i, j}}, \vals{i, j} + 1)) \frac{\nabla_\node\sum_{k=1}^{\vals{i, j}}  p_j(k|\vals{<i})}{p_j(\vals{i, j}|\vals{<i})} \Big]
\end{equation}
Note that if $\vals{i, j}=d$, then the estimator evaluates to zero since $\nabla_\node \sum_{k=1}^{d} p_j(k|x_{<i})=0$.

We derive the \emph{Storchastic} implementation by treating the GO estimator as a \tadditive{} of the single-sample score function.
To find this \tadditive{}, we subtract the score function from this estimator,
that is, we subtract $f(\vals{\leq i})\nabla_\node\log p(\vals{i}|\vals{< i}) = f(\vals{\leq i}) \sum_{j=1}^m \nabla_\node \log p(\vals{i, j}|\vals{<i})= f(\vals{\leq i}) \sum_{j=1}^m \frac{\nabla_\node p(\vals{i, j}|\vals{<i})}{p(\vals{i, j}|\vals{<i})}$ where we use that each discrete distribution is independent.
By unbiasedness of the GO gradient, the rest of the estimator is 0 in expectation, as we will show. 

Define $f_{j, k}=f(\vals{\leq i \setminus \vals{i, j}}, \vals{i, j}=k)$, $p_{j, k} = p_j(k|\vals{<i})$ and $P_{j, k} = \sum_{k'=1}^k p_j(k'|\vals{<i})$.
Then the GO \tadditive{} is:
\begin{align}
    \additive{i}(\vals{\leq i}) = \sum_{j=1}^m I[\vals{i, j}<d]\Big( \bot\big(\frac{f_{j, \vals{i, j}}  - f_{j, \vals{i, j} + 1}}{p_{j, \vals{i, j}}}\big) (\magic(P_{j, \vals{i, j}}) - 1) \Big) 
    - \bot(f_{j, d})(\magic(\log p_{j, d}) - 1)
\end{align}
The first line will evaluate to the GO gradient estimator when differentiated, and the second to the single-sample score function gradient estimator. 

Note that this gives a general formula for implementing any unbiased estimator into \emph{Storchastic}: Use it as a control variate with the  score function subtracted to ensure interoperability with other estimators in the stochastic computation graph. 

\subsection{SPSA}
Simultaneous perturbation stochastic approximation (SPSA) \cite{spallMultivariateStochasticApproximation1992} is a gradient estimation method based on finite difference estimation. It stochastically perturbs parameters and uses two functional evaluations to estimate the (possibly stochastic) gradient. 
Let $\btheta$ be the $d$-dimensional parameters of the distribution $p_\btheta(\vals{i}|\vals{<i})$. 
SPSA samples $d$ times from the Rademacher distribution (a Bernoulli distribution with 0.5 probability for 1 and 0.5 probability for -1) to get a noise vector $\beps$. 
We then get two new distributions: $\vals{i, 1} \sim p_{\btheta + c\beps}$ and $\vals{i, 2} \sim p_{\btheta - c\beps}$ where $c>0$ is the perturbation size. 
The difference $\frac{f(\vals{i, 1}) - f(\vals{i, 2})}{2c \beps}$ is then an estimate of the first-order gradient. 
Higher-order derivative estimation is also possible, but left for future work.

An easy way to implement SPSA in \emph{Storchastic} is by using importance sampling (Appendix \ref{sec:importance-sampling}). 
Assuming $p_{\btheta + c\beps}$ and $p_{\btheta - c\beps}$ have the same support as $p$,  we can set the weighting function to $\bot\left(\frac{p(\vals{i, 1}|\vals{<i})}{p_{\btheta + c\beps}(\vals{i, 1}|\vals{<i})}\right)$ for the first sample, and $\bot\left(\frac{p(\vals{i, 2}|\vals{<i})}{p_{\btheta - c\beps}(\vals{i, 2}|\vals{<i})}\right)$ for the second sample.

To ensure the gradients distribute over the parameters, we define the \tmultipl{} as $\btheta\bot\left(\frac{p_{\btheta + c\beps}(\vals{i, 1}|\vals{<i})}{2c\beps p(\vals{i, 1}|\vals{<i})}\right)$ for the first sample and $-\btheta\bot\left(\frac{p_{\btheta + c\beps}(\vals{i, 2}|\vals{<i})}{2c\beps p(\vals{i, 2}|\vals{<i})}\right)$ for the second sample. 
This cancels out the weighting function, resulting in the SPSA estimator.

\subsection{Measure Valued Derivatives}
\emph{Storchastic} allows for implementing Measure Valued Derivatives (MVD) \cite{heidergottMeasurevaluedDifferentiationMarkov2008}, however, it is only unbiased for first-order differentiation and cannot easily be extended to higher-order differentiation. 
The implementation is similar to SPSA, but with some nuances.
We will give a simple overview for how to implement this method in \emph{Storchastic}, and leave multivariate distributions and higher-order differentiation to future work.

First, define the weak derivative for parameter $\theta$ of $p$ as the triple $(c_\theta, p^+, p^-)$ by decomposing $p(\vals{i}|\vals{<i})$ into the positive and negative parts $p^+(\vals{i}^+)$ and $p^- (\vals{i}^-)$, and let $c_\theta$ be a constant. 
For examples on how to perform this decomposition, see for example \cite{mohamedMonteCarloGradient2020}. 
To implement MVDs in \emph{Storchastic}, we use the samples from $p^+$ and $p^-$, and, similar to SPSA, treat them as importance samples (Appendix \ref{sec:importance-sampling}) for the zeroth order evaluation. 

That is, the \tproposal{} is defined over tuples $\sampleset_{i}=(\vals{i}^+, \vals{i}^-)$ such that $\fproposalcond{i} = p^+(\vals{i}^+)p^-(\vals{i}^-)$. The weighting function can be derived depending on the support of the positive and negative parts of the weak derivative. For weak derivatives for which the positive and negative part both cover an equal proportion of the distribution $p(\vals{i}|\vals{<i})$, the weighting function can be found using importance sampling by $\bot\left(\frac{p(\vals{i}^+|\vals{<i})}{2p^+(\vals{i}^+)}\right)$ for samples from the positive part, and $\bot\left(\frac{p(\vals{i}^-|\vals{<i})}{2p^-(\vals{i}^-)}\right)$ for samples from the negative part. This gives unbiased zeroth order estimation by using importance sampling. 

We then set $\fadditive{i}=0$ and use the following \tmultipl{}: $\fmultipl{i} = \theta \cdot \bot(c_\theta \frac{2p^+(\vals{i}^+)}{p(\vals{i}^+|\vals{<i})})$ for positive samples and $\fmultipl{i} = -\theta \cdot \bot(c_\theta \frac{2p^-(\vals{i}^-)}{p(\vals{i}^-|\vals{<i})})$ for negative samples. This will compensate for the weighting function by ensuring the importance weights are not applied over the gradient estimates. For the first-order gradient, this results in the MVD $\nabla_\node \theta \bot(c_\theta) (f(\vals{i}^+) - f(\vals{i}^-))$.

For other distributions for which $p^+$ and $p^-$ do not cover an equal proportion of $p$, more specific implementations have to be derived. 
For example, for the Poisson distribution one can implement its MVD by noting that $p^+$ has the same support as $p$. 
Then, we can use one sample from $p^+$ using the importance sampling estimator using score function (Appendix \ref{sec:importance-sampling}), and use a trick similar to the GO gradient by defining a \tadditive{} that subtracts the score function and adds the MVD, which is allowed since the MVD and score function are both unbiased estimators.

\section{Discrete VAE Case Study Experiments}
\label{sec:experiments}
\begin{table}[]
    \begin{tabular}{l|ll|ll}
                    & \multicolumn{2}{l|}{$2^{20}$ Bernoulli VAE} & \multicolumn{2}{l}{$10^{20}$ Discrete VAE} \\
                    & Train ELBO         & Validation ELBO        & Train ELBO        & Validation ELBO        \\ \hline
    Score@1         & 191.3              & 191.9                  & 206.3             & 206.7                  \\
    ScoreLOO@5 \cite{koolBuyREINFORCESamples2019}     & 110.8              & 110.4                  & 111.2             & 110.4                  \\
    REBAR@1 \cite{tuckerREBARLowvarianceUnbiased2017}        & 220.0              & 1000                   & 155.6             & 154.9                  \\
    RELAX@1 \cite{grathwohlBackpropagationVoidOptimizing2018}        & 210.6              & 205.9                  & 202.5             & 201.7                  \\
    Unordered set@5 \cite{koolEstimatingGradientsDiscrete2020} & 117.1              & 138.4                  & 115.4             & 117.2                  \\
    Gumbel@1 \cite{jangCategoricalReparameterizationGumbelsoftmax2017,maddisonConcreteDistributionContinuous2017}       & 107.0              & 106.6                  & 92.9              & 92.6                   \\
    GumbelST@1 \cite{jangCategoricalReparameterizationGumbelsoftmax2017}     & 113.0              & 112.9                  & 98.3              & 98.0                   \\
    ARM@1 \cite{yinARMAugmentREINFORCEMergeGradient2019}         & 131.3              & 130.8                  &                   &                        \\
    DisARM@1\cite{dongDisARMAntitheticGradient2020}         & 125.1              & 124.3                  &                   &                           
    \end{tabular}
    \caption{Test runs on MNIST VAE generative modeling. We report the lowest train and validation ELBO over 100 epochs. The number after the `@' symbol denotes the amount of samples used to compute the estimator. We note that the ARM and DiSARM methods are specific for binary random variables, and do not evaluate it in the $10^{20}$ discrete VAE.}
    \label{tbl:vae}
\end{table}
We report test runs on MNIST \cite{lecunMNISTHandwrittenDigit2010} generative modeling using discrete VAEs in Table \ref{tbl:vae}. We use Storchastic to run 100 epochs on both a latent space of 20 Bernoulli random variables and 20 Categorical random variables of 10 dimensions, and report training and test ELBOs.
We run these on the gradient estimation methods currently implemented in the PyTorch library.

Although results reported are worse than similar previous experiments, we note that we only run 100 epochs (900 epochs in \cite{koolEstimatingGradientsDiscrete2020}) and we do not tune the methods. 
However, the results reflect the order expected from \cite{koolEstimatingGradientsDiscrete2020}, where score function with leave-one-out baseline also performed best, closely followed by the Unordered set estimator.
Furthermore, the Gumbel softmax \cite{jangCategoricalReparameterizationGumbelsoftmax2017,maddisonConcreteDistributionContinuous2017} still outperforms the other score-function based estimators, although the results in \cite{koolEstimatingGradientsDiscrete2020} suggest that with more epochs and better tuning, better ELBO than reported here can be achieved.

These results are purely presented as a demonstration of the flexbility of the Storchastic library: Only a single line of code is changed to be able to compare the different estimators!
A more thorough and fair comparison, also in different settings, is left for future work. 

\end{document}